\documentclass[11pt,letterpaper]{amsart}
\usepackage{preamble}
\counterwithin{equation}{section}
\begin{document}

\title[ Limitations of t-SNE]{Some Theoretical Limitations of t-SNE}

\author[Rupert Li]{Rupert Li}
\address[]{Stanford University, Stanford, CA 94305, USA}
\email{rupertli@stanford.edu}

\author[Elchanan Mossel]{Elchanan Mossel}
\address[]{Massachusetts Institute of Technology, Cambridge, MA 02139, USA}
\email{elmos@mit.edu}

\begin{abstract}
t-SNE has gained popularity as a dimension reduction technique, especially for visualizing data.
It is well-known that all dimension reduction techniques may lose important features of the data.
We provide a mathematical framework for understanding this loss for t-SNE by establishing a number of results in different scenarios showing how important features of data are lost by using t-SNE. 
\end{abstract}

\maketitle

\section{Introduction}\label{section:introduction}
Dimension reduction is often a central step in the visualization and analysis of high-dimensional data.
In particular, our inability to visualize in more than $2$ or $3$ dimensions often leads researchers to present their data in $2$ or $3$ dimensions in order to identify qualitative structure in the data, such as clusters.
Dimension reduction is also often used as part of computational pipelines in data analysis. 
Many dimension reduction techniques exist, but t-distributed stochastic neighborhood embedding (t-SNE), introduced by van der Maaten and Hinton \cite{maaten2008visualizing} in 2008, has become perhaps the most popular for data visualization.
t-SNE is a nonlinear algorithm, and this nonlinearity enables it to succeed in many settings where more classical, linear techniques spectacularly fail.
For example, see \cref{fig:simplex} for a comparison of principal component analysis (PCA) and t-SNE on data generated from a high-dimensional simplex.

\begin{figure}[htbp]
   \centering
   \begin{subfigure}[t]{0.48\textwidth}
       \centering
       \includegraphics[width=\textwidth]{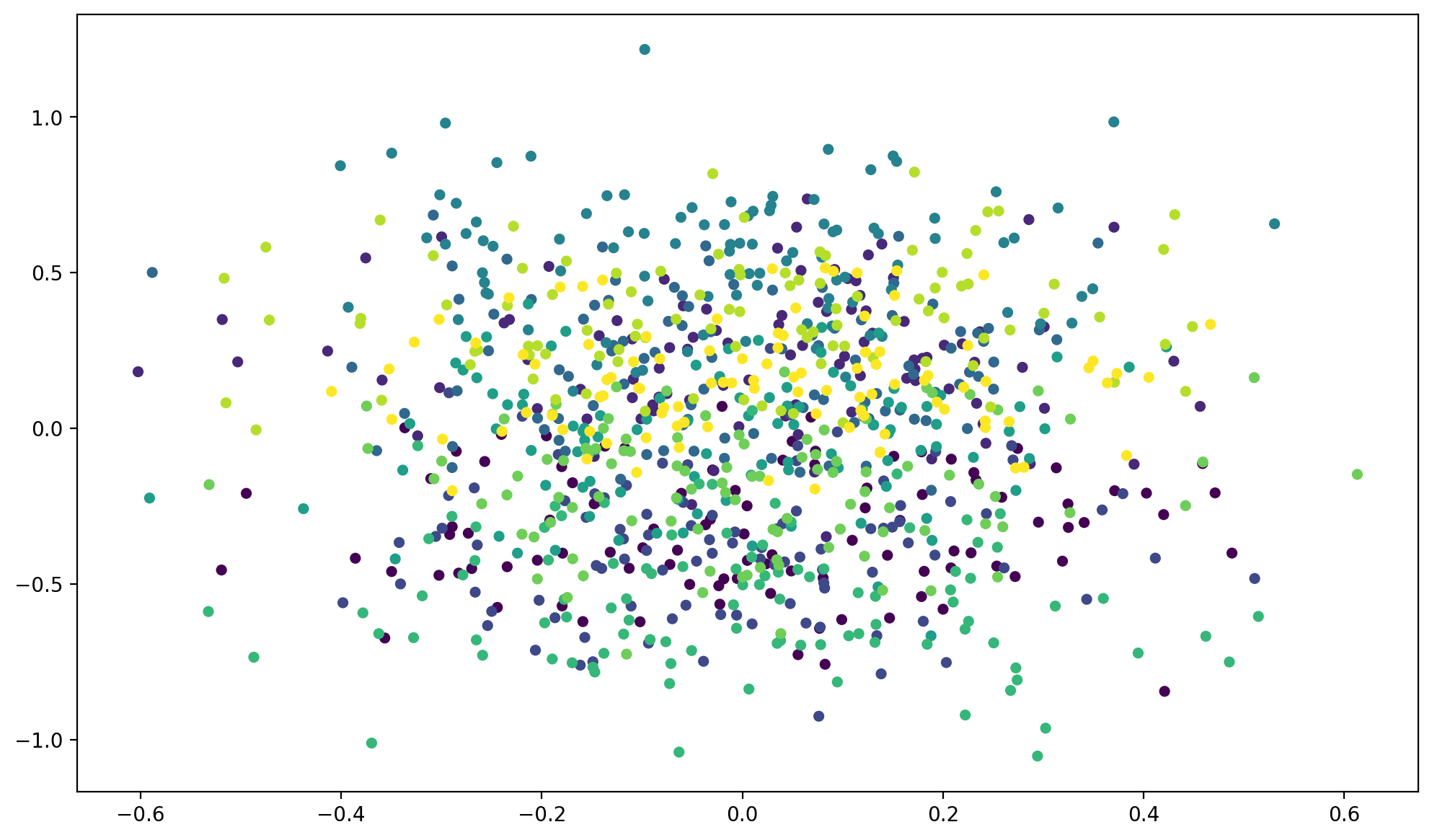}
       \caption{First two principal components}
   \end{subfigure}
   \hfill
   \begin{subfigure}[t]{0.48\textwidth}
       \centering
       \includegraphics[width=\textwidth]{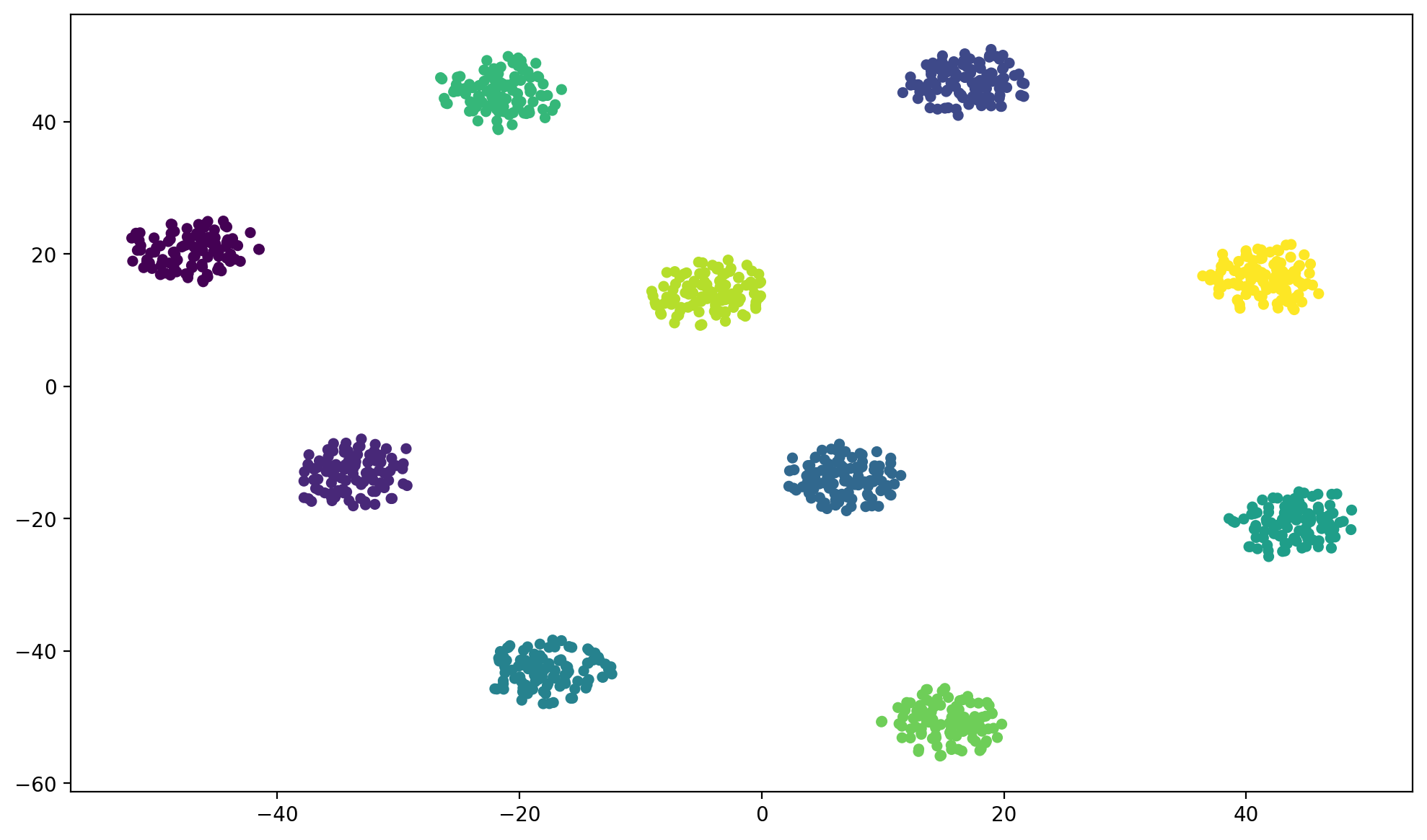}
       \caption{t-SNE output using standard parameters}
   \end{subfigure}
   \caption{t-SNE and PCA on points clustered around the 10 vertices of a regular 9-simplex. Here, the vertices are the ten elementary basis vectors in $\R^{10}$, and each cluster has 100 points sampled i.i.d.\ from a spherical Gaussian with standard deviation $0.2$ in each direction.}
   \label{fig:simplex}
\end{figure}

We refer readers to \cref{section:tsne} for a detailed description of the t-SNE algorithm. 
In essence, t-SNE computes a certain similarity measure between all pairs of points in the high-dimensional dataset, and then considers a candidate mapping of this dataset to points in a lower-dimensional space, typically 2- or 3-dimensional.
It then computes a different similarity measure between these low-dimensional points, and attempts to arrange these points so that these two similarity measures are as close to each other as possible.
It does so by minimizing the KL-divergence between these two matrices of similarity measures.

We first discuss our results and then how they relate to some prior theoretical analysis of t-SNE~\cite{tSNE_provably,cai2022theoretical,tSNE_visualization,tSNE_iid}. 

\subsection{Our Results}\label{subsec:our_results}
One of the intuitive reasons for the challenge in dimension reduction is that higher dimensions have more ``space" to pack data.
This may result in data points that were far away in the original data while their low dimensional projections are close together.
More formally, we have the following standard result for any dimension reduction procedure:

\begin{proposition}\label{proposition:volume}
    Let $X$ consist of $n$ points sampled uniformly from $\SS^{d-1}$, with $n \leq 2^d$.
    Let $g=g(n)$ be any sequence tending to infinity.
    With high probability as $n,d\to\infty$, for any map $f:X\to\R^2$ with $\norm{f(x)}\leq B$ for all $x\in X$, for $1-O(g^{-2})$ fraction of the points $x\in X$, there exists another point $y\in X$ such that $\norm{x-y}\geq0.2$ yet $\norm{f(x)-f(y)}\leq 3B\frac{g}{\sqrt{n}}$.
\end{proposition}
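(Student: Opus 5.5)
Two ingredients drive the argument: a probabilistic fact about the high-dimensional sample, and a deterministic pigeonhole statement in the plane. The probabilistic fact is that, with high probability, every $x\in X$ has at most a small number of points of $X$ within distance $0.2$. For two independent uniform points on $\SS^{d-1}$, $\norm{x-y}<0.2$ means $\langle x,y\rangle > 1-0.02=0.98$. By the standard spherical cap bound, this event has probability at most $e^{-cd}$, with $c$ roughly $\tfrac12\log(1/(1-0.98^2))\approx 1.6$, so comfortably $c>\log 2$. A union bound over the at most $n^2\le 4^d$ pairs is not quite enough, so I will bound counts instead. For fixed $x$, the number of $y$ with $\norm{x-y}<0.2$ is $\mathrm{Bin}(n-1,p)$ with $p\le e^{-cd}\le n^{-c/\log 2}$. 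Since $c/\log 2>2$, we get $\Pr[\text{any close pair}]\le n^2p\to0$ as $n\to\infty$. A union bound over pairs therefore gives that w.h.p.\ \emph{no} two points are within $0.2$. This is the step I would double-check numerically (the cap constant versus $\log 4$). If it failed, I would fall back to showing each $x$ has at most $K=O(1)$ close neighbors, which suffices below.

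The deterministic step does the rest. All $f(x)$ lie in the disk of radius $B$. Cover the square $[-B,B]^2$ by a grid of axis-parallel boxes of side $s=2B\,g/\sqrt n$; there are at most $(2B/s)^2=n/g^2$ boxes (rounding up adds lower-order terms). Two points in the same box are at distance at most $\sqrt2 s\le 3B g/\sqrt n$. Call a point \emph{lonely} if it is the only point of $f(X)$, counted with multiplicity over $X$, in its box. The number of lonely points is at most the number of boxes, $O(n/g^2)$, which is an $O(g^{-2})$ fraction of $X$. Every non-lonely $x$ shares a box with some other $y\in X$, so $\norm{f(x)-f(y)}\le 3Bg/\sqrt n$. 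By the first step, w.h.p.\ $\norm{x-y}\ge0.2$ automatically.

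In the fallback version (each $x$ has at most $K$ points within $0.2$), I would call $x$ lonely if its box contains at most $K+1$ points. There are at most $(K+1)n/g^2$ lonely points, still an $O(g^{-2})$ fraction. Any non-lonely $x$ has at least $K+1$ other points in its box, so at least one of them is $0.2$-far from $x$. The only genuine obstacle is the concentration/cap estimate in the first step. The remainder is pigeonhole, uniform over all maps $f$, because the high-probability event depends only on $X$.
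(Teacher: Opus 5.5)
Your proposal is correct and matches the paper's proof: the paper also uses a spherical-cap estimate plus a union bound over the at most $\binom{2^d}{2}$ pairs to show that with high probability all pairwise distances are at least $0.2$, with cap measure at most $\frac{1}{\sqrt{2\pi d}}\bigl(r/\sqrt{1-r^2/4}\bigr)^{d-1}$ and $0.2/\sqrt{0.99}\le 1/4$, and then applies the same grid pigeonhole argument. The plain union bound does suffice, since your rate $c\approx 1.6$ exceeds $\log 4$; your own estimate $n^2p\to 0$ is exactly that union bound, so the claim that it is ``not quite enough,'' the binomial-count detour and the bounded-multiplicity fallback are all unnecessary.
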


We now turn our attention specifically to t-SNE.
The goal of the t-SNE procedure is to map the data points to low dimension so that their normalized pairwise distances in the high dimensions $P$ is close in KL-divergence to the normalized pairwise distances $Q$ of the images with a different kernel.  
We next show that even for simple collections of points, it is impossible to find a $Q$ such that $D(P \Vert Q)$ is small. 
Throughout this paper unless otherwise stated, we assume the t-SNE output is two-dimensional, though all our theoretical results straightforwardly extend to any fixed output dimension.
\begin{proposition}\label{proposition:doubled_frame}
    Let $X = \set{e_1, \dots, e_n, 2e_{n+1}, \dots, 2e_{2n}} \subset \R^{2n}$ be an orthogonal frame with half its points doubled in magnitude.
    Assume fixed bandwidths $\sigma_i = \sigma$.
    Then there exists a constant $c > 0$ such that for all sufficiently large $n$ and any embedding $Y$, the t-SNE objective satisfies $D(P\Vert Q) \geq c$.
\end{proposition}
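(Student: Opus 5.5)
The plan is to exploit the block structure of $P$ and reduce the claim to a planar packing obstruction. Write $A=\set{e_1,\dots,e_n}$ and $B=\set{2e_{n+1},\dots,2e_{2n}}$, and put $N=2n$ and $t=1/(2\sigma^2)$. The squared distances are $2$ within $A$, $8$ within $B$, and $5$ across. So with fixed bandwidth, $p_{j|i}$ takes only three values and the symmetrized $P_{ij}$ is constant on each of the three blocks $AA$, $BB$, $AB$. Call these per-pair values $\pi_{AA},\pi_{BB},\pi_{AB}$.

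A direct computation gives $S_A=(n-1)e^{-2t}+ne^{-5t}$ and $S_B=ne^{-5t}+(n-1)e^{-8t}$. From these,
\[
\frac{\pi_{AA}}{\pi_{BB}}=\frac{e^{-2t}/S_A}{e^{-8t}/S_B}\longrightarrow e^{3t}>1 .
\]
Also, each block carries total $P$-mass $\alpha,\beta,\gamma$ converging to positive constants depending only on $\sigma$.

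Next I decompose the KL divergence along the partition into blocks. By the chain rule,
\[
D(P\Vert Q)=D\big((\alpha,\beta,\gamma)\,\Vert\,(Q(AA),Q(BB),Q(AB))\big)+\sum_{\text{blocks }K}P(K)\,D\big(U_K\Vert Q(\cdot\mid K)\big),
\]
where $U_K$ is uniform on the pairs of block $K$. Suppose $D(P\Vert Q)\le\epsilon$ with $\epsilon$ small. Then all terms are small. By Pinsker, the block masses of $Q$ are within $O(\sqrt\epsilon)$ of $(\alpha,\beta,\gamma)$, and each conditional $Q(\cdot\mid K)$ is $O(\sqrt\epsilon)$-close in total variation to uniform. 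Since $Q_{ij}=w_{ij}/Z$ with $w_{ij}=(1+\norm{y_i-y_j}^2)^{-1}$, this yields the following after a Markov-type counting step. There are levels $\omega_{AA},\omega_{BB}$ such that all but an $\eta(\epsilon)$-fraction of pairs in $AA$ have $w_{ij}\in[(1-\eta)\omega_{AA},(1+\eta)\omega_{AA}]$, and similarly for $BB$. Moreover $\omega_{AA}/\omega_{BB}\approx\pi_{AA}/\pi_{BB}\approx e^{3t}$, with $\eta(\epsilon)\to0$ as $\epsilon\to0$.

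The geometric step uses $w\le 1$. It gives $\omega_{BB}\lesssim e^{-3t}<1$, so typical $B$-pairs have distance near a fixed $d=\sqrt{\omega_{BB}^{-1}-1}\ge d_0(\sigma)>0$. Furthermore, all but an $\eta$-fraction of $B$-pairs have distance in $[(1-\eta')d,(1+\eta')d]$. I then derive a contradiction from planar packing. Choose $x\in B$ incident to at least a $(1-2\eta)$-fraction of good pairs. Its good neighbors, at least $(1-2\eta)n$ points, lie in the disk of radius $(1+\eta')d$ around $y_x$. Most pairs among these neighbors are also good, so by a Turán/greedy argument one finds $8$ of them (a constant number suffices) that are pairwise at distance at least $(1-\eta')d$. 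But a disk of radius $(1+\eta')d$ contains at most $7$ points with pairwise distances at least $(1-\eta')d$ once $\eta'$ is small. This is a contradiction, so $\epsilon$ cannot be arbitrarily small, which gives the constant $c$. In dimension $s$ the same argument works with the kissing-type packing constant.

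The main obstacle is the quantitative passage from ``small KL'' to ``almost all pairs in a block have nearly the same kernel value.'' Total-variation closeness of $Q(\cdot\mid K)$ to uniform only controls the deviations $|w_{ij}/\bar w_K-1|$ in $\ell^1$ average. So I will define $\omega_K$ as the block average $\bar w_K$ and use Markov's inequality to bound the fraction of pairs with $|w_{ij}/\bar w_K-1|>\eta$ by $O(\sqrt\epsilon/\eta)$. The constants must be chosen so that this fraction is small enough for the Turán step to extract a constant-size near-equidistant set. The bound $\omega_{BB}<1$ is what rules out collapsing everything to a point, and it is exactly where the strict inequality $\pi_{AA}>\pi_{BB}$ (the doubling) is used.
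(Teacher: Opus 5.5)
Your proposal is correct and is essentially the paper's argument: the same chain-rule decomposition over the blocks $AA$, $BB$, $AB$ combined with Pinsker, and the same contradiction between the $e^{3t}$ imbalance of $P_{AA}$ versus $P_{BB}$ and the planar impossibility of a nearly uniform, non-collapsed block. The only difference is the direction of the argument. The paper invokes \cref{corollary:small_KL} to force each block into a small ball and then notes $Q_{AA}\approx Q_{BB}$. You instead use the mass ratio and $\bar w_{AA}\le 1$ to keep $\bar w_{BB}$ bounded away from $1$, and then replace the annulus argument of \cref{lemma:small_KL} with a qualitative Tur\'an/disk-packing argument.
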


It is well-known that the number of equidistant points in dimension $d$ is at most $d+1$.
It is natural to ask how t-SNE reduces the dimension of such collections of points.
The following proposition provides an answer to this question in terms of the global optimizer of the t-SNE objective: 

\begin{proposition}\label{proposition:single_point}
    Consider t-SNE on a dataset $X$ of $n$ points with $s$-dimensional output such that $n>s+1$.
    If $X$ is an equidistant set, i.e., all pairwise distances are equal, or the $\sigma_i$ are set using a perplexity value $p$ and we have $p\geq n-1$, then any global minimizer $Y$ of the t-SNE objective $D(P\Vert Q)$ has all points $y_1,y_2,\dots,y_n$ coinciding at the same point.
\end{proposition}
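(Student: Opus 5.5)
The plan is to show that under either hypothesis the high-dimensional affinity matrix $P$ is exactly uniform. Then $D(P\Vert Q)$ is minimized, with value $0$, precisely when $Q$ is uniform, and a uniform $Q$ in $s$ dimensions with $n>s+1$ forces all points to coincide.

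\emph{Step 1: $P$ is uniform.} Recall from \cref{section:tsne} that $p_{j\mid i}\propto \exp(-\norm{x_i-x_j}^2/2\sigma_i^2)$ over $j\neq i$, and that $P_{ij}=(p_{j\mid i}+p_{i\mid j})/(2n)$.
\begin{itemize}
\item If $X$ is equidistant, then for each $i$ the numerators $\exp(-\norm{x_i-x_j}^2/2\sigma_i^2)$ are equal for all $j\ne i$, whatever $\sigma_i$ is. Hence $p_{j\mid i}=1/(n-1)$.
\item In the perplexity case, $\sigma_i$ is chosen so that the entropy of $p_{\cdot\mid i}$ equals $\log p$. A distribution on $n-1$ outcomes has entropy at most $\log(n-1)$, with equality only for the uniform distribution. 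So $p\ge n-1$ forces $p_{\cdot\mid i}$ to be uniform. Here $p>n-1$ is read via the convention $\sigma_i=\infty$, the limiting case, and this is the point where I would check the paper's exact convention.
\end{itemize}
In both cases $P_{ij}=1/(n(n-1))$ for all $i\neq j$.

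\emph{Step 2: $Q$ must be uniform at a minimizer.} With $P$ uniform on the $N=n(n-1)$ off-diagonal entries,
\[
D(P\Vert Q)=\sum_{i\ne j}\tfrac1N\log\frac{1/N}{Q_{ij}}\ge 0 .
\]
By Gibbs' inequality, equality holds iff $Q_{ij}=1/N$ for all $i\neq j$. This value $0$ is attained by taking all $y_i$ equal: then every kernel value $(1+\norm{y_i-y_j}^2)^{-1}$ equals $1$, so $Q$ is uniform. Therefore the global minimum is $0$, and every global minimizer has uniform $Q$.

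\emph{Step 3: uniform $Q$ forces coincidence.} Uniform $Q$ means the quantities $(1+\norm{y_i-y_j}^2)^{-1}$ are all equal, since they share the common normalizer. So all pairwise distances $\norm{y_i-y_j}$ equal a common value $r$. If $r>0$, then $\{y_i\}$ would be $n$ equidistant points in $\R^s$, which is impossible for $n>s+1$ by the standard bound quoted before the statement. Hence $r=0$ and $y_1=\dots=y_n$.

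I expect the only delicate point to be Step 1 in the perplexity case, namely the convention for $p>n-1$; everything else is Gibbs' inequality.
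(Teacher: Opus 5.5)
Your proof is correct and follows essentially the same route as the paper (\cref{remark:simplex,remark:perplexity_too_high}): show $P$ is uniform in both cases, observe that the value $0$ is attained by collapsing all $y_i$ to one point so that every global minimizer must have uniform $Q$, and conclude using the fact that $\R^s$ contains no $n>s+1$ equidistant points at positive distance. Your ordered-pair normalization $P_{ij}=(p_{j\mid i}+p_{i\mid j})/(2n)$ differs only cosmetically from the paper's unordered-pair convention and yields the same objective. For the point you flagged, the paper does adopt the convention $\sigma_i=\infty$ with $P_i$ uniform whenever $p\ge n-1$.
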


Arguably the results above show a failure mode of t-SNE.
However, it could be argued that it is a tailored example that does not represent the generic situation.
The result below exhibits the same phenomenon in a generic setting.

\begin{theorem}\label{theorem:sphere}
    Consider an asymptotic regime as $n,d\to\infty$, where the bandwidths $\sigma_1,\sigma_2,\dots,\sigma_n$ all equal some fixed constant $\sigma$.
    Fix constant $\delta\in(0,1/2)$ and $r=\omega\paren{d^{-1/6+\delta/3}}$.
    For some $n=e^{\Theta\paren{d^{2\delta}}}$, letting $x_1,x_2,\dots,x_n$ be uniformly random samples on the unit sphere $\SS^{d-1}$, with probability $1-e^{-\Omega\paren{d^{2\delta}}}$, any global minimizer $Y$ of the t-SNE objective $D(P\Vert Q)$ has $1-r$ fraction of $Y$ within some (open) ball of radius $r$ and has all of $Y$ contained within some ball of radius $O(\sqrt{r})$.
\end{theorem}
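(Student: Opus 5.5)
Write $p_{ij}$ for the symmetrized high-dimensional affinities and $q_{ij}\propto(1+\norm{y_i-y_j}^2)^{-1}$ for the low-dimensional ones. The plan is a perturbation of \cref{proposition:single_point}: random points on $\SS^{d-1}$ are nearly equidistant, so $P$ is nearly uniform, and any global minimizer should therefore be nearly collapsed.

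\textbf{Step 1 (near-uniform $P$).} For uniform samples, $\norm{x_i-x_j}^2 = 2-2\langle x_i,x_j\rangle$. For each pair, $\abs{\langle x_i,x_j\rangle}\le t$ with probability at least $1-2e^{-dt^2/2}$. Take $t = d^{-1/2+\delta}$ and union bound over the $n^2$ pairs with $n=e^{\Theta(d^{2\delta})}$, choosing the implicit constant small enough. Then with probability $1-e^{-\Omega(d^{2\delta})}$, all squared distances lie in $[2-2t,2+2t]$. Since $\sigma$ is a fixed constant, each conditional $p_{j|i}$, and hence each $p_{ij}$, equals $\frac{1}{n(n-1)}(1+O(t))$ uniformly. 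Let $U$ be the uniform distribution on pairs. Then $D(P\Vert U)=O(t^2)$, and more usefully, for every $Q$,
\[ D(P\Vert Q) = -H(P) - \sum_{i\ne j} p_{ij}\log q_{ij}, \qquad \Bigl|\sum_{i\neq j}(p_{ij}-u_{ij})\log q_{ij}\Bigr| \le O(t)\sum_{i\neq j} u_{ij}\abs{\log q_{ij}}. \]

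\textbf{Step 2 (compare with the collapsed embedding).} The collapsed embedding gives $Q=U$, so any global minimizer satisfies $D(P\Vert Q)\le D(P\Vert U)=O(t^2)$. Pinsker then gives $\norm{P-Q}_1=O(t)$. This is not enough on its own to force collapse, because $Q$ is close to uniform whenever all pairwise distances are small, at any scale. The geometry has to come from the t-kernel instead. Write $q_{ij}=w_{ij}/Z$ with $w_{ij}=(1+\norm{y_i-y_j}^2)^{-1}$. Since $P\approx U$, the objective is, up to $O(t)$ multiplicative error in the weights, $\log \overline{w} - \overline{\log w}$, where the bars denote averages over pairs. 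This is Jensen's gap for $\log$ at the values $w_{ij}\in(0,1]$, and it is the same gap that drives \cref{proposition:single_point}.

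Quantitatively, a second-order expansion of this Jensen gap gives the following. If a fraction $r$ of the points lie outside every ball of radius $r$ around the bulk, then a constant-order fraction (order $r$) of pairs have $\norm{y_i-y_j}\gtrsim r$. The variance of $w$ is then $\Omega(r\cdot r^4)$ in the small-distance regime, or larger once distances are of order one, so the gap is at least roughly $\Omega(r^{5})$ minus cross terms. The perturbation term from Step 1 is $O(t)\cdot\overline{\abs{\log w}}$, which one must bound by something like $O(t)\cdot\mathrm{gap}^{1/2}$ or $O(t\cdot r^2)$. Comparing the two should produce a threshold of the form $r^3\gg t$, that is, $r\gg d^{-1/6+\delta/3}$, which matches the stated exponent. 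So the bookkeeping goal is: (gap) $\gtrsim r^3\cdot(\text{spread})^2$ versus (perturbation) $\lesssim t\cdot(\text{spread})^2$.

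\textbf{Step 3 (outliers).} After Step 2, all but an $r$ fraction of the points lie in a ball of radius $r$. For a single outlier at distance $R$ from that ball, its row of pairs contributes about $\frac1n\log(1+R^2)$ to the Jensen gap. The perturbation it can gain is $O(t/n)$ times that same log, plus the effect of the $O(r)$ fraction of other outliers. Balancing these gives $R^2 = O(r)$, hence the $O(\sqrt r)$ radius for all of $Y$.

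\textbf{Main obstacle.} The hardest part is Step 2: a stability version of the Jensen-gap argument that is uniform over all embeddings. The perturbation from $P\ne U$ is linear in $\log w$, while the gap is only quadratic in the fluctuations of $\log w$. The lower bound must therefore be uniform over the scale of spread, handling both the small-distance regime (Taylor expansion) and the large-distance regime (where $\log$ grows). I would first try to prove the convexity inequality $\log\overline{w}-\overline{\log w}\ge c\,\overline{(\log w-\overline{\log w})^2}/(1+\max\abs{\log w})$, and then apply Cauchy--Schwarz to the perturbation term.
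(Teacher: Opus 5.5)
\textbf{There is a genuine gap in Step 2, which is where the substance of the theorem lies.}

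Your lower bound on the Jensen gap assumes a configuration with a concentrated bulk plus an $r$ fraction of outliers. But the negation of the conclusion only says that no ball of radius $r$ contains a $1-r$ fraction of $Y$. A priori the points could be spread out with most pairwise distances close to a common value $a\ge r$. In that case large distances produce no variance in $w_{ij}$ at all. Ruling this out is the whole point.

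Nothing in Step 2 uses that the output is two-dimensional, and the argument cannot succeed without it. Take $P=U$, which satisfies all your Step 1 conclusions, and output dimension $n-1$. A regular simplex of any side length gives $Q=U$, so it is a global minimizer, and no collapse is forced. The t-kernel cannot supply this geometry, since it only sees distances.

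The missing ingredient is a quantitative statement that the plane contains no large near-equidistant sets. This is the content of \cref{lemma:small_KL}. Let $a$ be the $(1-r)$-quantile of the pairwise distances.
\begin{itemize}
\item If $a<r$, an averaging argument gives the ball.
\item If $a\ge r$, a two-annulus counting argument shows that, for a small constant $\eps$, at most a fraction $\frac{23+\sqrt{17}}{32}+o(1)<1$ of pairs have distance in $[(1-\eps)a,(1+\eps)a]$. So a constant fraction of pairs lie below $(1-\eps)a$, while an $r$ fraction lie at or above $a$. Since $a\ge r$, the corresponding kernel values differ by a factor at least $1+\Omega(r^2)$, which forces $\dTV(U,Q)=\Omega(r^3)$.
\end{itemize}

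\textbf{Consequently, the Pinsker bound you set aside is exactly what is needed.} $D(P\Vert Q)\le D(P\Vert U)=O(t^2)$ gives $\dTV(U,Q)\le\dTV(U,P)+\dTV(P,Q)=O(t)$. The condition $t=d^{-1/2+\delta}=o(r^3)$ is precisely the hypothesis $r=\omega(d^{-1/6+\delta/3})$. Your $O(tr^2)$ perturbation estimate was chosen to reproduce this exponent rather than derived.

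\textbf{Step 3 also needs correcting.} An outlier's row does not contribute about $\frac1n\log(1+R^2)$ to the Jensen gap, because moving the outlier also changes $\bar w$. The net contribution is about $\frac2n\paren{\log(1+R^2)-\frac{R^2}{1+R^2}}\approx\frac{R^4}{n}$ for small $R$. It is this $R^4$ term, competing with the $O(rR^2/n)$ error from the normalization $Z=\binom n2(1-O(r))$, that yields $R^2=O(r)$. The paper makes this rigorous at the farthest point $y_i$: it uses the first-order condition when $\norm{y_i}\le1.5$, and moves $y_i$ into the ball when $\norm{y_i}\ge1.5$.
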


Informally, the theorem states that the t-SNE objective will be optimal only when it maps all the points to a tiny neighborhood. 
Here, note that larger $\delta$ lets us have more points on the sphere, and gives us a higher probability of success; however, it gives a weaker result to the extent that $r$ is bigger, so fewer of the points are within the radius-$r$ ball and this ball as well as the radius-$O(\sqrt{r})$ ball are both bigger.

This could be considered a failure mode of t-SNE: the dataset $X$ is too high-dimensional, so that to capture the structure of many points in $X$ being approximately equidistant to each other in different directions, the best solution for t-SNE is to put all points at essentially the same point, yielding an uninformative dimension reduction. Additionally, if the visualization is at such small scale, there may be additional concerns regarding numerical stability of the algorithm.

Moreover, we can apply \cref{proposition:volume} to the setting of \cref{theorem:sphere} with zoom-in.
For this, let $n=e^{\Theta\paren{d^{2\delta}}}$ from \cref{theorem:sphere}, which is eventually less than $2^d$, and let $B$ be the minimum possible radius of a closed ball containing all of $Y$, which by \cref{theorem:sphere} is $O\paren{\sqrt{r}}=\omega\paren{d^{-1/12+\delta/6}}$ with high probability, so $B$ can be taken to be $O\paren{d^{-\varepsilon}}$ for any constant $\varepsilon>0$ satisfying $\varepsilon<\frac{1}{12}-\frac{\delta}{6}$.
Let $g$ slowly tend to infinity, say $g=d$.
Note then that $\frac{g}{\sqrt{n}}=e^{-\Theta\paren{d^{2\delta}}}$.
Then \cref{proposition:volume} implies that with high probability, even when we zoom in to the ball of radius $B$, so at the proper relative scale for $Y$, we have $1-o(1)$ fraction of the points have another point within relative distance $\frac{3g}{\sqrt{n}}=e^{-\Theta\paren{d^{2\delta}}}=o(1)$, despite all pairs of points in $X$ having constant order distance, namely in $[0.2,2]$.
So t-SNE will put all points in a very small ball, and even within this ball, most points will be very close to some other point that it was not originally close to in $X$.

\subsection{Related Work}
The theoretical analysis of t-SNE is limited and most of the analysis is for parameter settings that are different from the standard conventions used in practice.
Linderman and Steinerberger \cite{tSNE_provably} prove that when t-SNE is in the early exaggeration stage with $\alpha=\Theta(n)$ and $h=\Theta(1)$, where $n$ is the number of datapoints, the diameters of embedded clusters decay exponentially under the gradient descent updates until they reach some size depending on how well the dataset $X$ was originally clustered, with better-separated data yielding tighter clusters.
They note that in practice, standard choices of $\alpha$ and $h$ are constants 12 and 200, respectively; this distinction between constant $\alpha$ being used in practice versus linear $\alpha=\Theta(n)$ for theoretical results on t-SNE is shared among the other theoretical papers on t-SNE. 
They made the observation that with such large early exaggeration coefficient $\alpha$, the t-SNE algorithm behaves like a spectral clustering algorithm; Cai and Ma \cite{cai2022theoretical} formally proved this observation.
Linderman and Steinerberger show the clusters shrink exponentially, and provide a heuristic argument though no formal proof explaining why these clusters should be disjoint, i.e., separated.
In contrast, Arora, Hu, and Kothari \cite{tSNE_visualization} do not consider the dynamical behavior but do show that for spherical and well-clustered data, with similar parameter choices, namely including the case $\alpha=\Theta(n)$ and $h=1$, t-SNE in the early exaggeration stage will output a visualization of $X$ in which the clusters are separated.
Note that the Linderman--Steinerberger definition of well-clustered data and the Arora--Hu--Kothari definition of well-clustered data are different, though are of a similar spirit, so in practice it seems reasonable to assume that for well-clustered data, both conclusions will hold: the clusters will shrink exponentially quickly until they reach some desired size, and at this size the clusters are disjoint.
Arora, Hu, and Kothari also study the interesting example of a mixture of two concentric spherical Gaussians with different radii, i.e., variances, proving that t-SNE again in early exaggeration with $\alpha=\Theta(n)$ can partially recover the inner Gaussian, meaning there's a cluster that mostly consists of points from the inner Gaussian, though were unable to provide any guarantees about the outer Gaussian.
This example is structurally similar to our doubled frame counterexample in \cref{proposition:doubled_frame}.
Auffinger and Fletcher \cite{tSNE_iid} prove a result of a significantly different flavor, showing that if the datapoints are drawn i.i.d.\ from a compactly supported distribution with continuously differentiable density function, then t-SNE will converge to some null distribution as $n\to\infty$, which also has compact support.

\subsection{Relation between our Results and \cite{tSNE_visualization}}\label{subsec:relation_AHK}
We want to compare the negative results of \cref{theorem:sphere} to the positive results of \cite{tSNE_visualization}.
As discussed in greater detail in \cref{section:split_sphere}, let $\delta=0.49$ and consider a dataset $X$ obtained by drawing $n=e^{\Theta\paren{d^{2\delta}}}$ uniformly random samples from the unit sphere $\SS^{d-1}$, as in the setting of \cref{theorem:sphere}, except we discard points lying in an equatorial region, e.g., we discard points whose first coordinate has magnitude less than $d^{-0.1}$.
This region is thin enough that we still have $e^{\Theta\paren{d^{0.98}}}$ points with high probability, so that the asymptotic growth of $n$ is essentially unchanged.
Then the conclusions of \cref{theorem:sphere} and \cref{proposition:volume} continue to hold for this ``split sphere'' dataset $X$.
However, under the well-known approximation that the coordinates of these random points from the sphere $\SS^{d-1}$ are asymptotically Gaussian as $d\to\infty$, of course aside from the first coordinate which concentrates around $\pm d^{-0.1}$, this $d^{-0.1}$ separation between the two spherical caps that comprise our split sphere $X$ is large enough that one should expect from the positive result of Arora, Hu, and Kothari \cite[Corollary 3.2]{tSNE_visualization} on isotropic Gaussian mixtures that t-SNE successfully distinguishes the two spherical caps, i.e., outputs a visualization where two clusters, corresponding to the two spherical caps, can be drawn, such that they are separated from each other on a relative scale.
The difference between our negative result and the positive results in~\cite{tSNE_visualization} 
can be interpreted in a number of ways including the following: 
\begin{enumerate}
    \item First, it's possible that both behaviors are exhibited: t-SNE puts all points of $X$ very close together, but zooming in to this relative scale, t-SNE does at least separate the two spherical caps into different clusters, yet within each cluster still fails to capture local structure as most points are close to some other point that it is not originally close to in $X$.
    \item Second, because our results consider the global minimizer of the t-SNE objective $D(P\Vert Q)$, while the results of Arora, Hu, and Kothari \cite{tSNE_visualization} consider the t-SNE output after the early exaggeration phase, without running additional iterations after early exaggeration to converge to a local minimum of $D(P\Vert Q)$, it's possible that the intermediate t-SNE output looks good immediately after the early exaggeration phase, but is ruined by the subsequent iterations without early exaggeration.
    \item Third, if one believes the intermediate t-SNE output immediately after the early exaggeration phase is not significantly different from the final t-SNE output, then instead of the previous two interpretations, one could interpret the discrepancy between these sets of results as arising from a difference between the local minimum that the t-SNE algorithm obtains and the global minimum of the t-SNE objective $D(P\Vert Q)$ that we consider. 
    In other words, it's possible t-SNE finds a local minimum of $D(P\Vert Q)$ that displays a good visualization separating the two clusters, but the global minimum displays the undesirable behavior that we described.
    
    This could suggest that the t-SNE objective $D(P\Vert Q)$ may not be a good objective function to consider, yet somehow the t-SNE output miraculously works by finding a good local minimum.
    This would raise many interesting questions about the dynamics of the t-SNE algorithm, as well as prompt further study regarding how to converge to a good local minimum; one could imagine using tools from the machine learning literature, such as stochastic gradient descent, to arrive at different local minima.
    Note that $D(P\Vert Q)$ is convex in $Q$, so would have a unique minimizer for $Q$, but this does not mean $D(P\Vert Q)$ is convex in $Y$, and the minimizers of $Y$ are trivially not unique as applying any isometry on the output space $\R^s$ yields another minimizer $Y'$.
    
    Also, we note that one would need to believe the premise of this third interpretation, that the intermediate and final t-SNE outputs are similar, in order for the results of \cite{tSNE_provably,tSNE_visualization} to apply to practical implementations of t-SNE, where early exaggeration is only used for the initial, i.e., ``early'', iterations of the algorithm.
\end{enumerate}
Numerical examples from \cref{section:numerical_examples} primarily support the first interpretation, but we note the interpretations are not mutually exclusive and the dimensionality of our examples is limited so may not capture the full asymptotic picture.  
For example, one observation from our experiments in \cref{section:numerical_examples} is that the output immediately after early exaggeration is significantly different from the output after the entire algorithm concludes, i.e., after many iterations without early exaggeration have occurred.
Moreover, early exaggeration exhibits tighter clusters, at the cost of a more degenerate visualization within each cluster, lending credence to the second interpretation, i.e., a scenario where early exaggeration separates the clusters, but when the clusters expand and rearrange after early exaggeration, the separation is lost.

\subsection{Paper Outline}

In \cref{section:tsne}, we provide a brief but comprehensive description of the t-SNE algorithm.
\cref{section:sphere} is our primary theoretical section, where we prove all the results stated in \cref{subsec:our_results}.
Then in \cref{section:split_sphere} we elaborate on \cref{subsec:relation_AHK} and explain how our results continue to hold for the split sphere example, and explain the heuristic by which we expect the positive result of Arora, Hu, and Kothari \cite{tSNE_visualization} to also hold for the split sphere.
In \cref{section:numerical_examples}, we present some numerical examples of the sphere and split sphere datasets.

\section{t-SNE}\label{section:tsne}
Suppose the input dataset $X$ consists of $n$ points with $d$ dimensions, and enumerate the datapoints $X=\set{x_1,\dots,x_n}\subset\R^d$.
We consider a probability distribution $P$ over unordered pairs of datapoints.
Specifically, for distinct $i,j\in[n]=\set{1,\dots,n}$, we define
\[ p_{j|i} = \frac{\exp\paren{-\frac{\norm{x_i-x_j}^2}{2\sigma_i^2}}}{\sum_{k\neq i}\exp\paren{-\frac{\norm{x_i-x_k}^2}{2\sigma_i^2}}} \]
to be a probability distribution $P_i$ over $[n]\setminus\set{i}$, corresponding to the points other than $x_i$, given by a spherical Gaussian kernel of variance $\sigma_i^2$, i.e., having covariance matrix $\sigma_i^2 I_d$, centered at $x_i$ and applied to the other points $x_j$.
Then $P$ is given by, for unordered pair $\set{i,j}$, which we denote as $ij\in\binom{[n]}{2}$,
\[ p_{ij} = \frac{p_{i|j}+p_{j|i}}{n}. \]
The bandwidth $\sigma_i$ is often chosen via binary search such that the \emph{perplexity} of $P_i$ equals (up to some tolerance) a user-defined value $p$.
Recall that perplexity is the exponential of the (discrete) Shannon entropy, so each $\sigma_i$ would be chosen such that
\begin{equation}\label{equation:perplexity}
    \log(p) = H(P_i) = -\sum_{j\neq i}p_{j|i}\log(p_{j|i}),
\end{equation}
where $H(P_i)$ denotes the Shannon entropy of $P_i$.
Note that if $\sigma_i\to\infty$, we have $p_{j|i}\to\frac{1}{n-1}$ for all $j\neq i$, so $H(P_i)\to\log(n-1)$.
Thus, if $p\geq n-1$, the binary search will attempt to set $\sigma_i\to\infty$ so that $P_i$ is uniform to maximize $H(P_i)$ at $\log(n-1)\leq\log(p)$.
For our purposes, where we do not think about the specific implementation and parameters of the binary search numerical algorithm and instead assume $\sigma_i$ is chosen so that \eqref{equation:perplexity} exactly holds, if $p\geq n-1$ we will use the convention that $\sigma_i=\infty$ and $P_i$ is uniform.
See \cref{remark:perplexity_too_high} for further discussion of the output of t-SNE in this case.

Meanwhile, we consider an $s$-dimensional embedding $Y$ of $X$, which we denote by $Y=\set{y_1,\dots,y_n}\subset\R^s$.
Typically t-SNE is used with $s=2$ or occasionally $s=3$, for data visualization purposes.
For any candidate $Y$, we consider an analogous probability distribution $Q$ over pairs $ij\in\binom{[n]}{2}$ given by a Cauchy kernel, i.e., a Student's $t$-distribution with one degree of freedom: $q_{ij}\propto\paren{1+\norm{y_i-y_j}^2}^{-1}$, or precisely
\[ q_{ij} = \frac{\paren{1+\norm{y_i-y_j}^2}^{-1}}{\displaystyle\sum_{k\ell\in\binom{[n]}{2}}\paren{1+\norm{y_k-y_\ell}^2}^{-1}}. \]
This is what gives t-SNE its name, as a successor to Stochastic Neighbor Embedding \cite{SNE}, which instead uses a Gaussian kernel for $Q$.

Then t-SNE finds the $Y$ that minimizes the Kullback--Leibler divergence between $P$ and $Q$,
\[ D(P\Vert Q) = \sum_{ij\in\binom{[n]}{2}}p_{ij}\log\frac{p_{ij}}{q_{ij}}, \]
using gradient descent.
It is straightforward to compute the gradient to be
\[ \frac{\partial}{\partial y_i}D(P\Vert Q) = 4\sum_{j\neq i}(p_{ij}-q_{ij})Z(y_i-y_j), \]
where $Z$ is the normalization constant
\[ Z = \sum_{ij\in\binom{[n]}{2}}\paren{1+\norm{y_i-y_j}^2}^{-1}. \]
In practice, a variety of modifications are occasionally made to the gradient update steps; we describe some of the most popular ones below.
The gradient can be decomposed into two sums,
\begin{equation}\label{equation:gradient}
    \frac{1}{4}\frac{\partial}{\partial y_i}D(P\Vert Q) = \sum_{j\neq i}p_{ij}q_{ij}Z(y_i-y_j) - \sum_{j\neq i}q_{ij}^2 Z(y_i-y_j),
\end{equation}
where the first term is referred to as the \emph{attractive} forces and the second term is referred to as the \emph{repulsive} forces.
Note that t-SNE minimizes $D(P\Vert Q)$ so moves $y_i$ opposite to its gradient, so the $j$th summand of the attractive force term, which is in the direction $y_i-y_j$, contributes to $y_i$ being updated in the opposite direction $y_j-y_i$, i.e., towards $y_j$, hence the terminology of attractive, and similarly repulsive.
One can scale the gradient update steps by a step-size $h>0$, typically a small constant, and more interestingly, t-SNE is often used with \emph{early exaggeration}, which multiplies the attractive forces by a constant $\alpha>1$ for some initial number of iterations, before returning to the original gradient, i.e., with $\alpha=1$, for the remainder of the iterative algorithm.
t-SNE is sometimes also implemented with momentum, so that each update step is derived from the gradient but also adds the previous update step, discounted via multiplying by some constant $\gamma\in(0,1)$.
Note that, assuming t-SNE converges to a local minimum of $D(P\Vert Q)$ so that it terminates close to one of these local minima, these practical modifications of step-size, early exaggeration, and momentum do not affect the objective function $D(P\Vert Q)$, and only impact which local minimum is achieved and how quickly it converges to that local minimum.
In particular, during the early exaggeration phase with $\alpha>1$, the update steps no longer necessarily correspond to the gradient of some function, but because this is typically only used for the initial steps, e.g., the first 100 out of 1000 steps, afterwards the algorithm is still minimizing $D(P\Vert Q)$ via gradient descent.

\section{Global minimizer for points on a sphere}\label{section:sphere}
Recall the total variation distance $\dTV(P,Q)$ between two discrete distributions $P$ and $Q$ on some set $A$ is given by
\[ \dTV(P,Q) = \max_{B\subseteq A}\abs{P(B)-Q(B)} = \frac{1}{2}\sum_{x\in A}\abs{P(x)-Q(x)}. \]
It satisfies Pinsker's well-known inequality (see, e.g., \cite{CoverThomas})
\[ 2\dTV(P,Q)^2 \leq D(P\Vert Q). \]
The following lemma shows that for t-SNE output $Y$ where $Q$ is close to the uniform distribution $U$ in total variation distance, which is implied by $Y$ having small t-SNE objective value via Pinsker's inequality, we have almost all points of $Y$ are contained in a small ball. 
\begin{lemma}\label{lemma:small_KL}
    Consider an asymptotic regime as $n,d\to\infty$, where $Y=\set{y_1,\dots,y_n}\subset\R^2$ yields distribution $Q$ on $\binom{[n]}{2}$ and $U$ the uniform distribution on $\binom{[n]}{2}$.
    For function $r(n)=o(1)$, if $\dTV(U,Q)=o(r^3)$, then for all sufficiently large $n$, there exists an open ball of radius $r$ such that at least $1-r$ fraction of $Y$ lies in this ball.
\end{lemma}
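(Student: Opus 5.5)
The plan is a dichotomy on pair distances, combined with the observation that the Cauchy weights
\[ w_{ij}=\paren{1+\norm{y_i-y_j}^2}^{-1}\]
on two groups of pairs at well-separated distances must differ by a factor $1+\Omega(r^2)$. Put $N=\binom{n}{2}$, so that $q_{ij}=w_{ij}/Z$. The key elementary step is the following. Let $S,T\subseteq\binom{[n]}{2}$ be such that every pair in $S$ has distance at most $a$ and every pair in $T$ has distance at least $b>a$. Then every $q$-value on $S$ is at least $\lambda=\frac{1+b^2}{1+a^2}$ times every $q$-value on $T$. Comparing with $U$, which is flat, this gives
\[ \dTV(U,Q)\;\geq\; c\,(\lambda-1)\min\paren{U(S),U(T)} \]
for an absolute constant $c$ when $\lambda-1$ is small. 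To see this, at least one of $Q(S)/U(S)$ and $Q(T)/U(T)$ must deviate from $1$ by a factor of order $\lambda-1$, and that deviation is witnessed in the variational formula $\max_B\abs{U(B)-Q(B)}$. With $b\asymp r$ and $a\leq b/2$ we get $\lambda-1\gtrsim r^2$. So if both $U(S)$ and $U(T)$ are $\gtrsim r$, then $\dTV(U,Q)\gtrsim r^3$, contradicting the hypothesis $\dTV(U,Q)=o(r^3)$.

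The argument then splits into two cases.

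\textbf{Case 1: the far pairs are rare.} Suppose the fraction $\beta$ of pairs at distance $\geq r$ is at most about $r/2$. Averaging over $i$, some point $y_i$ has at most $2\beta(n-1)\leq r(n-1)$ points at distance $\geq r$. Hence the open ball of radius $r$ about $y_i$ contains at least a $1-r$ fraction of $Y$, which is the conclusion.

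\textbf{Case 2: the far pairs are not rare.} Suppose $\beta\gtrsim r$. I want to exhibit thresholds $a<b$, both of order $r$ or at least with $b^2-a^2\gtrsim r^2$, such that both $\set{\text{dist}\leq a}$ and $\set{\text{dist}\geq b}$ have $U$-mass $\gtrsim r$. The step above then gives a contradiction.

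The hard part is the geometric input needed for Case 2. A planar point set cannot have nearly all of its pair distances concentrated in a thin band $[\rho,\rho(1+o(r))]$ while also having an $\Omega(r)$ fraction of pairs at distance $\geq r$. I would try to prove this with a dyadic-scale and packing argument in $\R^2$.

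Fix the scale $b=r$, and look at the pairs at distance $\geq r$. For those pairs, consider the distance scales $r, 2r, 4r,\dots$. Suppose that for some scale $\rho\geq r$ both $\set{\text{dist}\leq\rho}$ and $\set{\text{dist}\geq 2\rho}$ carry mass $\gtrsim r$. Then the ratio $\lambda$ is at least $\frac{1+4\rho^2}{1+\rho^2}\geq 1+\Omega(r^2)$, and we are done.

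Otherwise, the mass of the pairs in $\set{\text{dist}\geq r}$ must sit almost entirely in one dyadic annulus $[\rho,2\rho)$, or else the complementary mass is tiny. I would then partition the plane into squares of side $\rho/10$. Points of $Y$ in the same square are at distance $<\rho/5$. If some square holds a constant fraction of the relevant points, we get a big close group. If no square does, then the points in one $2\rho$-ball occupy many squares, and a planar packing/pigeonhole count yields $\Omega(1)$ of the within-annulus pairs at distance $\leq \rho/2$. Either way we obtain two groups of mass $\gtrsim r$ separated by a constant distance ratio. In the worst case the losses in this count are powers of $r$, which is why the hypothesis is $o(r^3)$ and not $o(r^2)$.

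Making this packing count rigorous, with explicit constants, is where I expect the main difficulty. If it fails, the fallback is to weaken the ball radius to a constant multiple of $r$ and rescale $r$ at the end.
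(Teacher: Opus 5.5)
\textbf{Verdict: genuine gap, but closable with your own grid idea.} The one step that actually uses planarity is not proved. That step says: an $\Omega(r)$ mass of pairs at distance $\ge r$ forces an $\Omega(r)$ mass of pairs at a distance smaller by a constant factor. As written, the proposal has its gap exactly at this crux.

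The reduction around it is sound. The ratio bound $d_{\mathrm{TV}}(U,Q)\gtrsim(\lambda-1)\min(U(S),U(T))$ is correct, as is the Case~1 averaging, and the paper finishes the same way. Your Case~2 sketch, however, misplaces the difficulty:
\begin{itemize}
\item The branch ``no square holds a constant fraction of the points'' cannot occur. Once most points lie in one disc of radius $2\rho$, that disc meets only $O(1)$ squares of side $\rho/10$, so pigeonhole immediately gives a square holding a constant fraction of the points.
\item The $r^3$ in the hypothesis is not caused by losses in the packing count, which loses only constants. It is the product of the $\Theta(r)$ mass of far pairs and $\lambda-1=\Theta(r^2)$.
\item The dyadic case analysis is unnecessary.
\end{itemize}

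\textbf{How to close it.}
\begin{enumerate}
\item Suppose more than $(r/2)N$ pairs have distance $\ge r$. Let $b$ be the largest pairwise distance such that at least $(r/4)N$ pairs have distance $\ge b$. Then $b\ge r$, and fewer than $(r/4)N$ pairs have distance $>b$, in particular $\ge 2b$.
\item Averaging gives a point $y_i$ with fewer than $(r/4)(n-1)$ points at distance $\ge 2b$. So the disc of radius $2b$ about $y_i$ contains at least $(1-r/4)n$ points.
\item Cut that disc's bounding square into $1600$ squares of side $b/10$, each of diameter $<b/5$. Some square contains at least $n/3200$ points. Hence at least $c_0N$ pairs have distance $<b/5$, for an absolute constant $c_0>0$.
\item Apply your ratio step with $S$ the pairs at distance $\le b/5$ and $T$ the pairs at distance $\ge b$. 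Here $\lambda=\frac{1+b^2}{1+b^2/25}\in[1,25]$. Since $\lambda-1$ is increasing in $b$ and $b\ge r$, we get $\lambda-1\ge\frac{(24/25)r^2}{1+r^2/25}$.
\item Eventually $\min(U(S),U(T))\ge r/4$, because $c_0\ge r/4$ once $r$ is small. This gives $d_{\mathrm{TV}}(U,Q)=\Omega(r^3)$, a contradiction.
\end{enumerate}

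\textbf{Comparison with the paper.} Completed this way, your route differs from the paper's. The paper takes $a$ to be the $(1-r)$-quantile of the pairwise distances. It proves, by intersecting two annuli, a sharper anti-concentration fact: at most $\frac{23+\sqrt{17}}{32}+o(1)$ of the pairs can have distance in the thin band $[(1-\varepsilon)a,(1+\varepsilon)a]$. It then compares the pairs at distance $<(1-\varepsilon)a$ with those at distance $\ge a$. Your version needs only a coarse factor-$5$ gap in distances. For that, the grid pigeonhole suffices — the same device the paper uses to prove \cref{proposition:volume} — and it gives explicit constants with less geometric work.
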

\begin{proof}
    Let $a$ be the $(1-r)$-th quantile of the $\set{y_i-y_j}$ values over $ij\in\binom{[n]}{2}$.
    If $a<r$ we are done; by an averaging argument, there must exist a point $y_i$ such that $1-r$ fraction of the other points $y_j$ are within $a$ distance of $y_i$, and take our ball to be the radius-$r$ ball centered at $y_i$.
    
    Otherwise, we claim there exists a constant $\eps>0$ such that at most $\frac{23+\sqrt{17}}{32}+o(1)$ fraction of the pairs $ij\in\binom{[n]}{2}$ satisfy $(1-\eps)a\leq\norm{y_i-y_j}\leq(1+\eps)a$.
    By diagonalization, we instead prove that for all constants $c>0$, at most $\frac{23+\sqrt{17}}{32}+c+o(1)$ fraction of the pairs satisfy this inequality.
    Suppose this weren't the case; then there must exist $y_i$ such that at least $\frac{23+\sqrt{17}}{32}+c+o(1)$ fraction of the other points are in the annulus $A$ centered at $y_i$ with outer radius $(1+\eps)a$ and inner radius $(1-\eps)a$.
    Note that $y_i$ and the other $\frac{9-\sqrt{17}}{32}-c-o(1)$ fraction of the other points are part of at most $\paren{\frac{18-2\sqrt{17}}{32}-2c+o(1)}\binom{n}{2}$ of the at least $\paren{\frac{23+\sqrt{17}}{32}+c+o(1)}\binom{n}{2}$ pairs $jk\in\binom{[n]}{2}$ satisfying $(1-\eps)a\leq\norm{y_j-y_k}\leq(1+\eps)a$.
    Thus, there are $\paren{\frac{5+3\sqrt{17}}{32}+3c-o(1)}\binom{n}{2}$ satisfying pairs among the points in $A$, meaning some $y_j\in A$ has at least $\frac{5+3\sqrt{17}}{32}+3c-o(1)$ fraction of the other $n-1$ points $y_k$ satisfying $(1-\eps)a\leq\norm{y_j-y_k}\leq(1+\eps)a$.
    Let $B$ be the corresponding annulus centered at $y_j$.
    Then at least $\frac{\sqrt{17}-1}{8}+4c-o(1)$ fraction of the $n$ points lie in $A\cap B$.
    For sufficiently small $\eps$, say $\eps\leq0.17$, it is clear that any two points in $A\cap B$ have distance not in $[(1-\eps)a,(1+\eps)a]$.
    Thus, at least $\paren{\frac{\sqrt{17}-1}{8}+4c}^2-o(1)$ fraction of the pairs $ij\in\binom{[n]}{2}$ do not satisfy $(1-\eps)a\leq\norm{y_i-y_j}\leq(1+\eps)a$, contradicting the assumption that at least $\frac{23+\sqrt{17}}{32}+c+o(1)$ fraction of the pairs satisfy this inequality, as
    \[ \paren{\frac{\sqrt{17}-1}{8}+4c}^2 > \frac{9-\sqrt{17}}{32} = 1 - \frac{23+\sqrt{17}}{32}. \]
    
    Thus, we either have at least $\frac{9-\sqrt{17}}{64}-o(1)$ fraction of the pairwise distances are less than $(1-\eps)a$, or at least $\frac{9-\sqrt{17}}{64}-o(1)$ fraction of the pairwise distances are more than $(1+\eps)a$.
    Let $n$ be sufficiently large so that this $\frac{9-\sqrt{17}}{64}-o(1)$ expression is greater than $r=o(1)$.
    In the latter case, at least $1-r$ fraction of the pairwise distances are at most $a$, while more than $r$ fraction of the pairwise distances are at least $(1+\eps)a$; this immediately yields a contradiction.
    In the former case, we know at least $r$ fraction of the pairwise distances are at least $a$ and at least $r$ fraction of the pairwise distances are less than $(1-\eps)a$.
    As
    \[ \frac{1+a^2}{1+(1-\eps)^2a^2}\geq\frac{1+r^2}{1+(1-\eps)^2r^2}=1+\Theta(r^2) \]
    and $\frac{1+x}{1-x}=1+\Theta(x)$ as $x\to0$, either at least $r$ fraction of the $q_{ij}$ are at most $\frac{1-\Theta(r^2)}{\binom{n}{2}}$ or at least $r$ fraction of the $q_{ij}$ are at least $\frac{1+\Theta(r^2)}{\binom{n}{2}}$.
    Either way, this ensures $\dTV(U,Q)=\Omega(r^3)$, yielding a contradiction for all sufficiently large $n$ as we assumed $\dTV(U,Q)=o(r^3)$.
\end{proof}
Pinsker's inequality immediately implies the following corollary.
\begin{corollary}\label{corollary:small_KL}
    Using the notation of \cref{lemma:small_KL}, if $D(U\Vert Q)=o(r^6)$ or $D(Q\Vert U)=o(r^6)$, then for all sufficiently large $n$, there exists an open ball of radius $r$ such that at least $1-r$ fraction of $Y$ lies in this ball.
    Thus, if $D(U\Vert Q)=o(1)$ or $D(Q\Vert U)=o(1)$, then there exists a ball of radius $o(1)$ such that $1-o(1)$ fraction of $Y$ lies in this ball.
\end{corollary}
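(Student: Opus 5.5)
The plan is to reduce \cref{corollary:small_KL} directly to \cref{lemma:small_KL} using Pinsker's inequality. Total variation distance is symmetric, $\dTV(U,Q)=\dTV(Q,U)$. So Pinsker gives both
\[ 2\dTV(U,Q)^2\le D(U\Vert Q) \qquad\text{and}\qquad 2\dTV(U,Q)^2 \le D(Q\Vert U). \]
Hence under either hypothesis $D(U\Vert Q)=o(r^6)$ or $D(Q\Vert U)=o(r^6)$, we get
\[ \dTV(U,Q)\le\sqrt{\tfrac12 \cdot o(r^6)}=o(r^3). \]
We may then invoke \cref{lemma:small_KL}, with the same function $r(n)=o(1)$, to obtain the ball of radius $r$ containing at least a $1-r$ fraction of $Y$ for all sufficiently large $n$. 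The only thing to check is that the little-$o$ passes through the square root, which holds because $\sqrt{\cdot}$ is monotone and $\sqrt{o(r^6)}=o(r^3)$.

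For the second sentence, suppose $\eta(n):=D(U\Vert Q)$ (or $D(Q\Vert U)$) tends to $0$. I will choose $r(n)$ so that $r=o(1)$ but $\eta=o(r^6)$. A concrete choice is $r=\max(\eta^{1/12},1/n)$. This clearly tends to $0$, and if $\eta>0$ then $\eta/r^6\le \eta^{1/2}\to0$. The degenerate case $\eta=0$ is harmless: there $\eta=o(r^6)$ holds trivially, and the $1/n$ term keeps $r$ positive so that the lemma's ball has positive radius. Applying the first part with this $r$ then gives a ball of radius $o(1)$ containing a $1-o(1)$ fraction of $Y$.

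I expect no real obstacle. The one point needing care is this choice of an intermediate rate $r$ that decays slower than $\eta^{1/6}$, so that the quantitative hypothesis of the first part holds while $r$ still tends to $0$.
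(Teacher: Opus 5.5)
Your proposal is correct and is exactly the paper's argument. Pinsker's inequality, applied in whichever direction the hypothesis is given (total variation being symmetric), turns an $o(r^6)$ KL-divergence into $\dTV(U,Q)=o(r^3)$, \cref{lemma:small_KL} then gives the ball, and your explicit choice $r=\max(\eta^{1/12},1/n)$ correctly supplies the step for the second sentence that the paper leaves implicit.
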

This shows that if $P$ is uniform and $Y$ yields good objective function, namely $D(U\Vert Q)=o(1)$ so that $Q$ is close to uniform, then most of $Y$ must be contained in a small ball, i.e., t-SNE sends most points to essentially the same point.

\begin{remark}\label{remark:simplex}
    If $X$ is an equidistant set, i.e., all pairwise distances are equal, such as when $X$ is a regular simplex or an orthonormal frame, i.e., a collection of orthonormal vectors, then $P$ will be uniform regardless of the choice of $\sigma_i$, as each $P_i$ will be uniform.
    If all of the $y_i$ are at exactly the same point in $\R^s$, then $Q$ is also uniform and $D(P\Vert Q)=0$ is minimized.
    Thus any minimizer must have $D(P\Vert Q)=0$, meaning $Q$ is uniform and thus $Y$ is equidistant.
    No nontrivial equidistant set exists in $\R^s$ for $n>s+1$, which is a very weak condition for t-SNE; if $s=2$, then we simply require more than 3 datapoints.
    Thus, the only global minimizer of the t-SNE objective puts all datapoints at the same point if $X$ is an equidistant set with $n>s+1$.
\end{remark}
\begin{remark}\label{remark:perplexity_too_high}
    Similarly, if the $\sigma_i$ are set using a perplexity value $p$ and we have $p\geq n-1$, then as the maximum entropy of a probability distribution over a set of cardinality $m$ is $\log(m)$, t-SNE will attempt to set $\sigma_i\to\infty$ to have $P_i$ be uniform, and thus $P$ will also be uniform.
\end{remark}
\cref{remark:simplex,remark:perplexity_too_high} prove \cref{proposition:single_point}.
In both situations, $P$ will be uniform, so by \cref{corollary:small_KL}, any t-SNE visualization $Q$ with a good objective value, i.e., $D(P\Vert Q)=o(1)$, where $D(P\Vert Q)=0$ is achievable by putting all $y_i$ at a single point, will have most of $Y$ within a small ball, i.e., $1-o(1)$ fraction of $Y$ within a ball of radius $o(1)$.

The following lemma shows that if the bandwidths $\sigma_1,\dots,\sigma_n$ are all some fixed constant $\sigma$ and the datapoints in $X$ are drawn independently and uniformly at random from the unit sphere $\SS^{d-1}\subset\R^d$, then with high probability $P$ is close to uniform in the $\mc L^\infty$-norm.
\begin{lemma}\label{lemma:sphere_concentration}
    Let the $\sigma_i$ be some fixed constant $\sigma$.
    For constant $\delta\in(0,1/2)$, for some $n=e^{\Theta\paren{d^{2\delta}}}$, letting $x_1,\dots,x_n$ be uniformly random samples on $\SS^{d-1}$, with probability $1-e^{-\Omega\paren{d^{2\delta}}}$ we have $\abs{\binom{n}{2}p_{ij}-1} = O\paren{d^{-1/2+\delta}}$ for all $ij\in\binom{[n]}{2}$.
\end{lemma}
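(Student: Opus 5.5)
The plan is to show that all pairwise inner products are uniformly tiny, and then propagate this through the Gaussian kernel and the normalizations. For $x_i,x_j$ independent uniform on $\SS^{d-1}$ we have $\norm{x_i-x_j}^2 = 2-2\langle x_i,x_j\rangle$. By rotation invariance, $\langle x_i,x_j\rangle$ has the law of the first coordinate of a uniform point on the sphere, so the standard spherical concentration bound applies:
\[ \Pr\paren{\abs{\langle x_i,x_j\rangle}\geq t}\leq 2e^{-dt^2/2}. \]
Take $t=d^{-1/2+\delta}$. Then each pair fails with probability at most $2e^{-d^{2\delta}/2}$. We choose $n=e^{c\,d^{2\delta}}$ with a small constant $c$, say $c=1/8$. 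A union bound over the $\binom{n}{2}\leq e^{d^{2\delta}/4}$ pairs then gives failure probability at most $2e^{-d^{2\delta}/4}=e^{-\Omega(d^{2\delta})}$. So with the claimed probability,
\[ \abs{\langle x_i,x_j\rangle}\leq d^{-1/2+\delta} \quad\text{for all } i\neq j. \]

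Next we convert this into a bound on the kernel. Write $K_{ij}=\exp\paren{-\norm{x_i-x_j}^2/(2\sigma^2)} = e^{-1/\sigma^2}\exp\paren{\langle x_i,x_j\rangle/\sigma^2}$. The exponent is $O(d^{-1/2+\delta})=o(1)$ because $\sigma$ is a fixed constant and $\delta<1/2$. Hence every $K_{ij}$ equals $e^{-1/\sigma^2}\paren{1+O(d^{-1/2+\delta})}$, with a uniform implied constant. The common factor $e^{-1/\sigma^2}$ cancels in $p_{j|i}=K_{ij}/\sum_{k\neq i}K_{ik}$. The numerator and every summand of the denominator are within a multiplicative factor $1\pm O(d^{-1/2+\delta})$ of $1$, so the same holds for the ratio. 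This gives
\[ p_{j|i}=\frac{1+O(d^{-1/2+\delta})}{n-1} \]
uniformly in $i,j$.

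Finally, $p_{ij}=(p_{i|j}+p_{j|i})/n = \frac{2}{n(n-1)}\paren{1+O(d^{-1/2+\delta})}$, and therefore $\binom{n}{2}p_{ij}=1+O(d^{-1/2+\delta})$ for all pairs simultaneously.

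The only genuine constraint is balancing the union bound against the size of $n$. The deviation $t$ must satisfy $dt^2\gg\log n$, and this is exactly what forces $n=e^{\Theta(d^{2\delta})}$ with a constant small enough that $\binom n2 e^{-d^{2\delta}/2}$ is still $e^{-\Omega(d^{2\delta})}$. This balancing is the only delicate step; the rest is first-order Taylor expansion, valid because the deviations are $o(1)$.
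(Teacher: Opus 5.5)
Your proposal is correct and follows the paper's proof essentially step for step. You bound each $|\langle x_i,x_j\rangle|$ by $d^{-1/2+\delta}$ via spherical concentration (L\'evy's lemma in the paper), take a union bound with $n=e^{c d^{2\delta}}$ for small $c$, and note that all kernel values agree up to a factor $1\pm O(d^{-1/2+\delta})$, so $(n-1)p_{j|i}$ and hence $\binom{n}{2}p_{ij}$ are $1+O(d^{-1/2+\delta})$. Your version is slightly more explicit than the paper's, with the concrete tail bound $2e^{-dt^2/2}$ and the choice $c=1/8$.
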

\begin{proof}
    For all distinct $i,j\in[n]$, as
    \[ \norm{x_i-x_j}^2 = \norm{x_i}^2 - 2\bra x_i,x_j\ket + \norm{x_j}^2 = 2 - 2\bra x_i,x_j\ket, \]
    where $\bra u,v\ket$ denotes the inner product in $\R^d$, we have
    \[ p_{j|i} = \frac{\exp(-\norm{x_j-x_i}^2/2\sigma^2) }{\displaystyle\sum_{k\neq i}\exp(-\abs{x_k-x_i}^2/2\sigma^2)} = \frac{\exp(\bra x_i,x_j\ket/\sigma^2)}{\sum_{k\neq i}\exp(\bra x_i,x_j\ket/\sigma^2)}. \]
    Levy's lemma on the concentration of measure on the sphere (see, e.g., \cite{Milman1986}) tells us that for $\eps=d^{-1/2+\delta}$, the $\eps$-extension of the equator covers $1-e^{-\Omega(d^{2\delta})}$ of the measure of the sphere.
    Thus, $\abs{\bra x_i,x_j\ket}> d^{-1/2+\delta}$ with probability $e^{-\Omega(d^{2\delta})}$.
    Taking $n=e^{cd^{2\delta}}$ for sufficiently small constant $c>0$, using a union bound, we have $\abs{\bra x_i,x_j\ket}\leq d^{-1/2+\delta}$ for all $ij\in\binom{[n]}{2}$ with probability $1-\binom{n}{2}e^{-\Omega\paren{d^{2\delta}}}=e^{-\Omega\paren{d^{2\delta}}}$.
    Under this event, $\abs{(n-1)p_{j|i}-1}= O\paren{d^{-1/2+\delta}}$ for all $i\neq j$.
    Thus, $\abs{\binom{n}{2}p_{ij}-1}= O\paren{d^{-1/2+\delta}}$ for all $ij$.
\end{proof}
\cref{theorem:sphere} immediately follows from \cref{lemma:sphere_concentration} and the following technical generalization of \cref{theorem:sphere}.
\begin{theorem}\label{theorem:small_ball_technical}
    Fix constant $\delta\in(0,1/2)$ and $r=\omega(d^{-1/6+\delta/3})$ with $r=\Omega\paren{\frac{1}{n}}$.
    Suppose $P$ satisfies
    \[ \abs{\binom{n}{2}p_{ij}-1}=O\paren{d^{-1/2+\delta}} \]
    for all $ij\in\binom{[n]}{2}$.
    Then for sufficiently large $n$, the global minimizer $Y$ of the t-SNE objective $D(P\Vert Q)$ has $1-r$ fraction of $Y$ within some ball of radius $r$ and has all of $Y$ contained within some ball of radius $O\paren{\sqrt{r}}$.
\end{theorem}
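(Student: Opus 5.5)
The plan is to compare the global minimizer $Y$ against the trivial embedding in which all points coincide. That embedding gives $Q=U$, so optimality gives $D(P\Vert Q)\le D(P\Vert U)$.

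\textbf{Step 1: $D(P\Vert U)$ is small.} Write $\eta=d^{-1/2+\delta}$. The hypothesis says $\binom n2 p_{ij}=1+O(\eta)$ uniformly. Hence
\[ D(P\Vert U)=\sum_{ij} p_{ij}\log\Bigl(\tbinom n2 p_{ij}\Bigr)\le \chi^2(P\Vert U)=O(\eta^2). \]

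\textbf{Step 2: $Q$ is close to uniform in total variation.} Pinsker applied to $D(P\Vert Q)\le D(P\Vert U)=O(\eta^2)$ gives $\dTV(P,Q)=O(\eta)$. The $\mc L^\infty$ hypothesis directly gives $\dTV(U,P)=O(\eta)$. By the triangle inequality, $\dTV(U,Q)=O(\eta)=O(d^{-1/2+\delta})$.

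\textbf{Step 3: most points lie in a small ball.} Since $r=\omega(d^{-1/6+\delta/3})$, we have $r^3=\omega(\eta)$, so $\dTV(U,Q)=o(r^3)$. If $r$ is not $o(1)$, I first shrink it to some $r'\le r$ that is $o(1)$ but still $\omega(d^{-1/6+\delta/3})$; the conclusion for $r'$ implies it for $r$. \cref{lemma:small_KL} then yields a point $c$ and an open ball $B(c,r)$ containing at least $(1-r)n$ of the $y_i$. This is the first conclusion.

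\textbf{Step 4: no point is far away (the outlier bound).} Now let $y_k$ be a point with $R=\norm{y_k-c}$ maximal, and suppose $R\ge C\sqrt r$ for a large constant $C$. I will show that moving $y_k$ to $c$ strictly decreases $D(P\Vert Q)$, contradicting global optimality. The point of choosing the farthest point is that the move cannot push $y_k$ far from anyone: after the move, every distance from $y_k$ is at most $R$.

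\textbf{Step 5: the two competing terms.} Put $w_{ij}=(1+\norm{y_i-y_j}^2)^{-1}$. Up to an additive constant independent of $Y$,
\[ D(P\Vert Q)=-\sum_{ij}p_{ij}\log w_{ij}+\log Z. \]
Only pairs containing $k$ change, and each such pair has $p_{kj}=(1+O(\eta))/\binom n2$. Also $Z=\binom n2(1+O(r))$, since most pairs lie inside the ball. Each affected pair $kj$ contributes roughly
\[ \frac{1}{\binom n2}\Bigl[-\log\frac{w'_{kj}}{w_{kj}}+(w'_{kj}-w_{kj})\Bigr]. \]
\begin{itemize}
\item For the $(1-r)n$ points $j$ in the ball we have $w'_{kj}\ge1-4r^2$ and $w_{kj}\le(1+(R-r)^2)^{-1}$. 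The bracket is then $\approx -[\log(1+x)-x/(1+x)]$ with $x\approx R^2$.
\item Since $\log(1+x)-\frac{x}{1+x}\ge c\min(x^2,1)$, the total gain is of order $\frac1n\min(R^4,1)$.
\end{itemize}

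\textbf{Step 6: the error terms.} The losses to control are the following.
\begin{itemize}
\item The $O(r^2)$ terms from points inside the ball contribute $O(r^2/n)$.
\item The $O(\eta)$ perturbation of $p$ contributes $O(\eta/n)\cdot\log(1+R^2)$.
\item The $rn$ outliers $j$ contribute a term of the form $\frac1n\cdot r\cdot(\text{bracket})$.
\end{itemize}
For the outliers I use the farthest-point choice: $w'_{kj}\ge(1+R^2)^{-1}$ after the move. I then hope to show that the bracket $-\log t+(t-1)w_{kj}$, with $t=w'_{kj}/w_{kj}$, is at most $O(R^2)$ per outlier. That would make the outlier contribution $O(rR^2/n)$, which is dominated by $R^4/n$ once $R\gg\sqrt r$. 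Balancing $R^4$ against $rR^2$, $r^2$ and $\eta\log(1+R^2)$ then forces $R=O(\sqrt r)$. Here I use $\eta=o(r^3)\subseteq o(r)$. For large $R$, the gain $\frac1n\min(R^4,1)$ is of order $1/n$ and beats all losses since $r,\eta=o(1)$. This gives the second conclusion, with ball center $c$.

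\textbf{Main obstacle.} The delicate part is Step 6, specifically the bound on the outlier contribution. A crude per-outlier bound of $O(1)$ gives only $R^4\lesssim r$, i.e. $R=O(r^{1/4})$. Reaching $\sqrt r$ requires the sharper estimate that outliers contribute $O(rR^2)$. If that estimate fails, my fallback is to iterate: first apply the argument to get a crude bound on all outliers, then rerun it with that bound on the outlier distances.
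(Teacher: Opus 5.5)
Your proposal is correct. For the $O(\sqrt r)$ bound it takes a different route from the paper.

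Your Step~2 is also more careful than the paper. \cref{corollary:small_KL} is stated for $D(U\Vert Q)$, so your triangle inequality with $\dTV(U,P)=O(\eta)$ is what justifies applying it when only $D(P\Vert Q)$ is known to be small.

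The paper splits at $x=1.5$.
\begin{itemize}
\item For $x\le 1.5$ it compares no objective values. It uses the stationarity condition at the farthest point $y_i=(-x,0)$. Every $\Re(y_j-y_i)$ is nonnegative. The attractive side carries weights $(1+o(r^3))w_{ij}$ and the repulsive side carries $(1+O(r))w_{ij}^2$. Outliers change each side only by a factor $1+O(r)$. Hence the two sides cannot balance once $w_{ij}\le 1-C_1r$ on the ball, i.e.\ once $x\gtrsim\sqrt r$.
\item Only for $x\ge 1.5$ does it move $y_i$ to the center, as you do.
\end{itemize}
Your single competitor argument covers all scales and uses only the optimality of the objective value. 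The cost is that the first-order parts of $\log(w'/w)$ and $w'-w$ cancel, so you must work at second order with $g(x)=\log(1+x)-\frac{x}{1+x}\approx x^2/2$. The paper's first-order comparison of $w$ with $w^2$ builds that cancellation in. Both arguments reach $\sqrt r$ for the same reason: a relative $O(r)$ error set against a gain of relative size $R^2$.

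Three bookkeeping remarks.
\begin{itemize}
\item Your ``main obstacle'' is immediate. From $w_{kj}\le 1$ and $w'_{kj}\ge(1+R^2)^{-1}$ you get $-\log(w'_{kj}/w_{kj})\le\log(1+R^2)\le R^2$. Both weights lie in $[(1+4R^2)^{-1},1]$, so $\abs{w'_{kj}-w_{kj}}\le 4R^2$. In fact your bracket equals $g(\norm{c-y_j}^2)-g(\norm{y_k-y_j}^2)\le g(R^2)$, so no iteration is needed.
\item You omitted one loss. The factor $1/Z=(1+O(r))\binom{n}{2}^{-1}$ multiplies $w'_{kj}-w_{kj}\approx R^2$ for every in-ball $j$. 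Because the first-order terms cancel, this costs $O(rR^2/n)$ in total. That is the same order as your outlier term, so your balancing $R^4\gg rR^2$ is unaffected.
\item Your large-$R$ sentence does not close as written. A gain of $\frac{1}{n}\min(R^4,1)$ loses to outlier terms of size $\frac{r}{n}\log(1+R^2)$ once $\log(1+R^2)\gtrsim 1/r$. The fix is to keep the gain in the form $\frac{2(1-r)}{n}g((R-r)^2)\ge\frac{2(1-r)}{n}\paren{\log(1+(R-r)^2)-1}$. This grows like $\log(1+R^2)$ and therefore beats $\frac{2r}{n}\paren{\log(1+R^2)+1}$. This is exactly the paper's $x\ge 1.5$ computation.
\end{itemize}
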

\begin{proof}
    We may assume $r=o(1)$, as replacing any $r=\Omega(1)$ with $r=o(1)$ while preserving $r=\omega\paren{d^{-1/6+\delta/3}}$ yields a stronger result.
    Let all the points $y_i$ be the same, so that $Q$ is the uniform distribution $U$.
    Then using the well-known relationship between KL-divergence and the chi-squared divergence, namely
    \[ D(P\Vert Q) = \sum_{ij\in\binom{[n]}{2}} P_{ij}\log\frac{P_{ij}}{Q_{ij}} \leq \sum_{ij\in\binom{[n]}{2}} P_{ij}\paren{\frac{P_{ij}}{Q_{ij}}-1} = \sum_{ij\in\binom{[n]}{2}} (P_{ij}-Q_{ij})\cdot\frac{P_{ij}-Q_{ij}}{Q_{ij}} = \chi^2(P\Vert U),\]
    we have
    \[ D(P\Vert U) \leq \chi^2(P\Vert U) = \sum_{ij}\frac{O\paren{d^{-1+2\delta}}}{\binom{n}{2}} = O\paren{d^{-1+2\delta}}. \]
    Thus, let $Y$ be the global minimizer with corresponding $Q$, so that
    \[ D(P\Vert Q)\leq D(P\Vert U)=O\paren{d^{-1+2\delta}}=o(r^6). \]
    Then by \cref{corollary:small_KL}, for all sufficiently large $n$ there exists an open ball $B$ of radius $r$ such that at least $1-r$ fraction of $Y$ lies in this ball.

    It now suffices to show that if such a ball $B$ exists, then $Y$ is contained within some ball of radius $O\paren{\sqrt{r}}$.
    As translating $Y$ does not change $Q$ and thus does not change the objective value, without loss of generality suppose $B$ is centered at the origin.
    Let $y_i$ be a point of maximum norm, i.e., distance from 0, in $Y$, and without loss of generality rotate $Y$ so that $y_i$ is at $(-x,0)$ for some $x\geq0$.
    Let $S\subset[n]\setminus\set{i}$ be the set of $j$ such that $y_j\in B$, and let $S^c=([n]\setminus\set{i})\setminus S$.

    Recall that less than $r$ fraction of $Y$ is not in $B$, i.e., has norm at least $r$, and all points in $Y$ have norm at most $x$.
    Two points in $B$ are at distance at most $2r$, one point in $B$ and another point in $Y$ are at distance at most $r+x$, and two points in $Y$ are at distance at most $2x$, so we bound the denominator of the $q_{k\ell}$'s by
    \begin{align*}
        &\sum_{k\ell\in\binom{[n]}{2}}\paren{1+\norm{y_k-y_\ell}^2}^{-1}
        \\ &\geq \binom{n}{2}\paren{1-O\paren{\frac{1}{n}}}\paren{(1-r)^2\paren{1+4r^2}^{-1}+2r(1-r)\paren{1+(r+x)^2}^{-1}+r^2\paren{1+4x^2}^{-1}}
        \\ &\geq \binom{n}{2}(1-O(r)),
    \end{align*}
    where the $1-O\paren{\frac{1}{n}}$ term accounts for the adjustments to the $(1-r)^2$, $2r(1-r)$, and $r^2$ terms needed as we are considering pairs of distinct elements, i.e., choosing without replacement.
    Note that the denominator is clearly at most $\binom{n}{2}$, so the denominator is $\binom{n}{2}(1-O(r))$.
    
    We now split into two cases of $x\leq1.5$ and $x\geq1.5$.
    If $x\leq1.5$, we consider the gradient from \eqref{equation:gradient}.
    If $x<r$, we are immediately done, so we assume $x\geq r$.
    In fact, for sake of contradiction, assume $x\geq C\sqrt{r}$ for some constant $C\geq10$ to be chosen later.
    As $Y$ yields a global minimum for $D(P\Vert Q)$, we know $\frac{\partial}{\partial y_i}D(P\Vert Q)=0$, or equivalently
    \begin{equation}\label{equation:small_ball_FOC_1}
        \sum_{j\neq i} p_{ij}q_{ij}(y_j-y_i) = \sum_{j\neq i}q_{ij}^2(y_j-y_i).
    \end{equation}
    By assumption, $p_{ij}=\paren{1+O\paren{d^{-1/2+\delta}}}\binom{n}{2}^{-1}=(1+o(r^3))\binom{n}{2}^{-1}$, so we can multiply \eqref{equation:small_ball_FOC_1} by $\binom{n}{2}$ times the denominator of the $q_{ij}$'s to obtain
    \begin{equation}\label{equation:small_ball_FOC_2}
        \sum_{j\neq i}\paren{1+\norm{y_j-y_i}^2}^{-1}(1+o(r^3))(y_j-y_i)
        = \sum_{j\neq i}(1+O(r))\paren{1+\norm{y_j-y_i}^2}^{-2}(y_j-y_i).
    \end{equation}
    Viewing $Y$ as a subset of $\C$ instead of $\R^2$ for notational simplicity when considering the $x$-coordinate of \eqref{equation:small_ball_FOC_1}, note that for $j\in S$,
    \[ \Re\paren{\paren{1+\norm{y_j-y_i}^2}^{-1}(y_j-y_i)} \geq (1+(x+r)^2)^{-1}(x-r) \geq \frac{x-r}{10} \geq \frac{x}{10}\paren{1-\frac{\sqrt{1.5}}{10}} \geq \frac{x}{15}. \]
    Meanwhile, for $j\in S^c$,
    \[ \Re\paren{\paren{1+\norm{y_j-y_i}^2}^{-1}(y_j-y_i)} \leq 2x. \]
    As less than $r$ fraction of $Y$ is not in $B$, this means the contribution of the $j\in S^c$ to the real part of the left hand side of \eqref{equation:small_ball_FOC_2} is at most $\frac{30r}{1-r}=O(r)$ times the contribution of the $j\in S$, so we can write the real part of the left hand side of \eqref{equation:small_ball_FOC_2} as
    \begin{equation}\label{equation:small_ball_FOC_LHS}
        (1+O(r))\sum_{j\in S}\paren{1+\norm{y_j-y_i}^2}^{-1}\Re(y_j-y_i).
    \end{equation}
    A similar argument implies we can write the real part of the right hand side of \eqref{equation:small_ball_FOC_2} as
    \begin{equation*}
        (1+O(r))\sum_{j\in S}\paren{1+\norm{y_j-y_i}^2}^{-2}\Re(y_j-y_i).
    \end{equation*}
    This implies there exists some constant $C_1>0$ such that for all sufficiently large $n$, if
    \begin{equation}\label{equation:small_ball_FOC_sufficient}
        \paren{1+\norm{y_j-y_i}^2}^{-1} \leq 1-C_1r
    \end{equation}
    for all $j\in S$, then the real part of the right hand side of \eqref{equation:small_ball_FOC_2} must be smaller than the real part of the left hand side of \eqref{equation:small_ball_FOC_2}, contradicting the equality.
    As $\norm{y_j-y_i}\geq x-r \geq x\paren{1-\frac{\sqrt{1.5}}{10}}$ for all $j\in S$, taking $x\geq C\sqrt{r}$ for sufficiently large $C\geq10$ will ensure \eqref{equation:small_ball_FOC_sufficient}.
    Thus, for sufficiently large $n$, if $x\leq 1.5$ then $x\leq C\sqrt{r}$ for some constant $C$, so $Y$ is contained within some ball of radius $O\paren{\sqrt{r}}$.

    Now we obtain a contradiction if $x\geq1.5$ and $n$ is sufficiently large, by arguing moving $y_i$ to 0 decreases $D(P\Vert Q)$.
    Note that after this move, the denominator of the $q_{k\ell}$'s is still $\binom{n}{2}(1-O(r))$, for a potentially different $O(r)$ term than before but still satisfying the same bound.
    Note
    \begin{equation*}
        D(P\Vert Q) = \sum_{k\ell\in\binom{[n]}{2}}p_{k\ell}^{-1}\log\frac{p_{k\ell}}{q_{k\ell}} = H(P) - \sum_{k\ell\in\binom{[n]}{2}}\paren{1+o\paren{r^3}}\binom{n}{2}^{-1}\log q_{k\ell}.
    \end{equation*}
    Moving $y_i$ to 0 will cause a beneficial decrease in the $ij$ terms for $j\in S$ by increasing $q_{ij}$, while it may cause an increase in the $ij$ terms for $j\in S^c$ and may cause an increase in all terms due to the change in the denominator of the $q_{k\ell}$'s.
    Let $d_1$ denote the old denominator and $d_2$ denote the new denominator.
    We wish to show the total decrease exceeds the total increase in magnitude.
    For $j\in S$, note that $q_{ij}$ is originally at most $\frac{\paren{1+(x-r)^2}^{-1}}{d_1}$ while its new value is at least $\frac{\paren{1+r^2}^{-1}}{d_2}$, so the decrease from the $ij$ term is at least
    \[ \paren{1+o\paren{r^3}}\binom{n}{2}^{-1}\log(1+(x-r)^2) - \log(1+r^2) - \log(1+O(r)) \geq \binom{n}{2}^{-1}\paren{\log\paren{1+x^2} - O(r)}. \]
    As at least $1-r$ fraction of the $Y$ are in $B$, the total decrease from all $j\in S$ is at least $\frac{2}{n}\paren{\log\paren{1+x^2}-O(r)}$.
    Similarly, for $j\in S^c$, note that $q_{ij}$ is originally at most $\frac{1}{d_1}$ and is now at least $\frac{\paren{1+x^2}^{-1}}{d_2}$, so the increase from the $ij$ term is at most $\binom{n}{2}^{-1}(\log\paren{1+x^2}-O(r))$, and as less than $r$ fraction of the $Y$ are not in $B$, the total increase from all $j\in S^c$ is at most $\frac{2r}{n}\paren{\log\paren{1+x^2}-O(r)}$.
    Thus, the total decrease from all $ij$ for $j\neq i$ is at least $\frac{2}{n}\paren{\log\paren{1+x^2}-O(r)}$.
    Because the change in denominators affects all $k\ell\in\binom{[n]}{2}$ and not just the $ij$, we will need an improved bound on the change from $d_1$ to $d_2$.
    Note that the total increase from the $k\ell$ terms where $i\not\in\set{k,\ell}$ is
    \[ \paren{1+o\paren{r^3}}\paren{1-O\paren{\frac{1}{n}}}\paren{\log\paren{d_2}-\log\paren{d_1}}. \]
    Note that for all $j\neq i$, the numerator of $q_{ij}$ starts at a value between $\paren{1+4x^2}^{-1}$ and 1, and ends at a value between $\paren{1+x^2}^{-1}$ and 1, so the change is at most $1-\paren{1+4x^2}^{-1}$.
    Thus,
    \[ \abs{d_2-d_1}\leq(n-1)\paren{1-\paren{1+4x^2}^{-1}}, \]
    while $d_1=(1-O(r))\binom{n}{2}$, so for sufficiently large $n$,
    \begin{align*}
        \log(d_2)-\log(d_1)
        &= \log\paren{1+\frac{d_2-d_1}{d_1}}
        \\ &\leq \log\paren{1+\frac{2}{n}(1+O(r))\paren{1-\paren{1+4x^2}^{-1}}}
        \\ &\leq \frac{2.1}{n}(1+O(r))\paren{1-\paren{1+4x^2}^{-1}}
        \\ &\leq \frac{2.2}{n}\paren{1-\paren{1+4x^2}^{-1}},
    \end{align*}
    where the second inequality assumes $n$ is sufficiently large so that the derivative of $\log(1+y)$ at $y=\frac{2}{n}(1+O(r))\paren{1-\paren{1+4x^2}^{-1}}$ is at most 1.05, and the last inequality uses the assumption that $r=o(1)$ and takes $n$ sufficiently large.
    For all $x\geq1.5$, we have $2\log(1+x^2)-2.2\paren{1-\paren{1+4x^2}^{-1}}$ is bounded below by a positive constant, and thus for sufficiently large $n$, we have $D(P\Vert Q)$ strictly decreases by moving $y_i$ to 0, contradicting optimality of $Y$.
    
    Combining these two cases, we see that $Y$ is contained within some ball of radius $O\paren{\sqrt{r}}$, which completes the proof.
\end{proof}
\begin{remark}\label{remark:robustness}
    It is straightforward to check that the approximate result, i.e., the $1-r$ fraction of $Y$ being contained in a ball of radius $r$, is robust to having up to $o(r^6)$ fraction of the pairs $ij\in\binom{[n]}{2}$ failing the $O\paren{d^{-1/2+\delta}}$ concentration bound, as each such $p_{ij}$ is still a probability, hence bounded between 0 and 1, so we can still obtain $D(P\Vert U)=o(r^6)$.
    Similarly, the gradient analysis, i.e., the argument for the case $x\leq1.5$, is also robust to some of the pairs failing the concentration bound, but because we only consider the $p_{ij}$ for some fixed $i$, we need each such ``row'' of $p_{ij}$ values to only have $O(r)$ fraction of its pairs fail the concentration bound in order for the argument to work.
    In particular, having $p_{ij}$ too large is not a problem, as this only increases the left hand side of our first order condition, \eqref{equation:small_ball_FOC_1}, which we were arguing was greater than the right hand side; meanwhile, if $O(r)$ fraction of the $p_{ij}$ for our fixed $i$ that maximizes $\norm{y_i}$ are too small, the smallest they can be is 0, and we can absorb this error in the $1+O(r)$ term in \eqref{equation:small_ball_FOC_LHS}.
    However, the macroscopic argument for the case $x\geq1.5$ is not robust to pairs $ij$ with $p_{ij}$ failing the concentration bound.
    This is because, unlike with the gradient analysis, having $p_{ij}$ too large can be an issue for the argument, where $p_{ij}=\Omega(1)$ is possible and would be much greater than the expected $\binom{n}{2}^{-1}$.
    There are some weaker and reasonable assumptions that can be imposed so that some $p_{ij}$ being too large is acceptable, such as the total probability mass of each row being not too much larger than $\frac{2}{n}$, the mass expected from $p_{ij}$ being uniform, but for simplicity we will not formally record the proof of this generalization as we did not find examples that needed this weakened assumption.
\end{remark}

We use \cref{corollary:small_KL} to prove \cref{proposition:doubled_frame}.
\begin{proof}[Proof of \cref{proposition:doubled_frame}]
    Let $A=\set{1,2,\dots,n}$ and $B=\set{n+1,n+2,\dots,2n}$.
    Then for distinct $i,j\in A$,
    \[ \binom{2n}{2}p_{ij} = (2n-1)\cdot\frac{e^{-\frac{2}{2\sigma^2}}}{(n-1)e^{-\frac{2}{2\sigma^2}}+ne^{-\frac{5}{2\sigma^2}}} \to \frac{2}{1+e^{-\frac{3}{2\sigma^2}}}. \]
    We denote this constant as $p_0^*$.
    Similarly, define $p_1^*=\frac{2}{1+e^{\frac{3}{2\sigma^2}}}=2-p_0^*<1<p_0^*$ and notice that for $i\in A$ and $j\in B$ or vice versa,
    \[ \binom{2n}{2}p_{ij} = \frac{2n-1}{2}\paren{\frac{e^{-\frac{5}{2\sigma^2}}}{(n-1)e^{-\frac{2}{2\sigma^2}}+ne^{-\frac{5}{2\sigma^2}}}+\frac{e^{-\frac{5}{2\sigma^2}}}{(n-1)e^{-\frac{8}{2\sigma^2}}+ne^{-\frac{5}{2\sigma^2}}}}\to\frac{p_0^*+p_1^*}{2}=1, \]
    and for distinct $i,j\in B$,
    \[ \binom{2n}{2}p_{ij} = (2n-1)\cdot\frac{e^{-\frac{8}{2\sigma^2}}}{(n-1)e^{-\frac{8}{2\sigma^2}}+ne^{-\frac{5}{2\sigma^2}}} \to p_1^*. \]
    Let $\mc A=\set{(A,A),(A,B),(B,B)}$.
    For $(I,J)\in\mc A$, let $P_{IJ}=\sum_{i\in I}\sum_{j\in J} p_{ij}$ and $Q_{IJ}=\sum_{i\in I}\sum_{j\in J}q_{ij}$, where if $I=J$ we omit the cases $i=j$ from the sums.
    These yield probability distributions $\mc P$ and $\mc Q$ on $\mc A$.
    Also let $P|(I,J)$ and $Q|(I,J)$ be the conditional distributions of $P$ and $Q$ given $(I,J)$, i.e., for $i\in I$ and $j\in J$, we have conditional probabilities $p_{ij}/P_{IJ}$ and $q_{ij}/Q_{IJ}$, respectively.
    Then we have
    \begin{align*}
        D(P\Vert Q)
        &= \sum_{(I,J)\in\mc A}\sum_{i\in I,j\in J}p_{ij}\log\frac{p_{ij}}{q_{ij}}
        \\ &= \sum_{(I,J)\in\mc A}P_{IJ}\sum_{i\in I,j\in J}\frac{p_{ij}}{P_{IJ}}\paren{\log\frac{p_{ij}/P_{IJ}}{q_{ij}/Q_{IJ}}+\log\frac{P_{IJ}}{Q_{IJ}}}
        \\ &= D(\mc P\Vert\mc Q) + \sum_{(I,J)\in\mc A}P_{IJ} D(P|(I,J)\Vert Q|(I,J))
        \\ &\geq \sum_{(I,J)\in\mc A}P_{IJ} D(P|(I,J)\Vert Q|(I,J)).
    \end{align*}
    Note that $P|(I,J)$ is the uniform distribution, for each $(I,J)\in\mc A$.
    For sake of contradiction, suppose there exists a subsequence of $n\in\N$ with $Y=Y_n$ such that $D(P\Vert Q)=o(1)$.
    As $P_{AA}\to\frac{p_0^*}{4}>0$ and $P_{BB}\to\frac{p_1^*}{4}>0$, under this subsequence, we have $D(U\Vert Q|(A,A))=o(1)$ and $D(U\Vert Q|(B,B))=o(1)$.
    Then by \cref{corollary:small_KL}, there exists a ball of radius $o(1)$ such that $1-o(1)$ fraction of $Y_A=\set{y_1,y_2,\dots,y_n}$ lies in this ball, and another ball of radius $o(1)$ such that $1-o(1)$ fraction of $Y_B=\set{y_{n+1},y_{n+2},\dots,y_{2n}}$ lies in this ball.
    Thus, $1-o(1)$ fraction of the pairs $ij\in\binom{A}{2}$, namely those pairs where both points are inside the small ball, have
    \[ q_{ij}\geq\frac{1-o(1)}{\displaystyle\sum_{k\ell\in\binom{[2n]}{2}} \paren{1+\norm{y_k-y_\ell}^2}^{-1}}, \]
    and similarly $1-o(1)$ fraction of the pairs $ij\in\binom{B}{2}$ satisfy the above inequality.
    Recall all $ij\in\binom{[2n]}{2}$ satisfy
    \[ q_{ij} \leq \frac{1}{\displaystyle\sum_{k\ell\in\binom{[2n]}{2}} \paren{1+\norm{y_k-y_\ell}^2}^{-1}}, \]
    so both $Q_{AA}$ and $Q_{BB}$ are
    \[ \frac{(1-o(1))\binom{n}{2}}{\displaystyle\sum_{k\ell\in\binom{[2n]}{2}} \paren{1+\norm{y_k-y_\ell}^2}^{-1}}\in\bracket{\frac{1-o(1)}{4},\frac{1+o(1)}{2}}. \]
    Thus $\abs{Q_{AA}-Q_{BB}}=o(1)$.
    However, $P_{AA}-P_{BB}\to\frac{p_0^*-p_1^*}{4}$, which is a positive constant, so by Pinsker's inequality,
    \[ D(P\Vert Q) \geq D(\mc P\Vert Q) \geq 2\dTV(\mc P,\mc Q)^2 \geq 2\paren{\frac{p_0^*-p_1^*-o(1)}{8}}^2 = \Omega(1), \]
    as desired.
\end{proof}

We conclude this section by proving \cref{proposition:volume}.
\begin{proof}[Proof of \cref{proposition:volume}]
    We first claim that with high probability, all distinct $x,y\in X$ have $0.2\leq\norm{x-y}\leq2$, as the distance-$r$ spherical cap for small $r$ has area like $\frac{1}{\sqrt{d}}r^{d-1}$, so a union bound gives $\frac{1}{\sqrt{d}}r^{d-1}\binom{2^d}{2}\to0$ for $r=0.2$.
    To justify this rigorously, note the probability measure $\mu_r$ of the distance-$r$ spherical cap for small $r$ is, for angle $\theta$ satisfying $r=2\sin\frac{\theta}{2}$,
    \[ \mu_r = \frac{\vol(\SS^{d-2})}{\vol(\SS^{d-1})}\int_0^\theta \sin^{d-2}\phi\di\phi \leq \frac{\Gamma\paren{\frac{d}{2}}}{\sqrt{\pi}\Gamma\paren{\frac{d-1}{2}}}\int_0^\theta\phi^{d-2}\di\phi \leq \sqrt{\frac{d}{2\pi}}\frac{\phi^{d-1}}{d-1} \leq \frac{1}{\sqrt{2\pi d}}\paren{\frac{r}{\sqrt{1-r^2/4}}}^{d-1}, \]
    where the last inequality follows from $\arcsin t \leq \frac{t}{\sqrt{1-t^2}}$ for $t\in[0,1]$.
    Thus, provided $\frac{r}{\sqrt{1-r^2/4}}\leq\frac{1}{2^2}$, we have the desired result, where $r=0.2$ satisfies this inequality.

    We work under the aformentioned event.
    Consider the length-$2B$ box centered at the origin, which $f(x)$ for all $x\in X$ must lie in.
    Partition this box into a grid of boxes of side length $\frac{(1+o(1))2Bg}{\sqrt{n}}$, where the $1+o(1)$ factor is to ensure the box is partitioned into an integer number of boxes.
    Thus, there are $(1-o(1))n/g^2$ boxes in total, so at most $\frac{1-o(1)}{g^2}=O\paren{g^{-2}}$ fraction of the points in $X$ are in their own box.
    For any other point $x$, picking another point $y$ in its box yields $\norm{f(x)-f(y)}\leq\frac{(2\sqrt{2}+o(1))Bg}{\sqrt{n}}$, which is at most $\frac{3Bg}{\sqrt{n}}$ eventually.
\end{proof}
\section{The split sphere example}\label{section:split_sphere}
As described in \cref{section:introduction}, let $\delta=0.49$ and consider a dataset $X$ obtained by drawing $n=e^{\Theta\paren{d^{2\delta}}}$ points uniformly from the unit sphere $\SS^{d-1}$, where $n$ comes from \cref{theorem:sphere}, or specifically \cref{lemma:sphere_concentration}.
Following the proof of \cref{lemma:sphere_concentration}, with high probability, specifically $1-e^{-\Omega\paren{d^{0.98}}}$, we have $\abs{\bra x_i,x_j\ket}\leq d^{-0.01}$ for all pairs $i,j\in[n]$.
Then, removing the points $x_i$ where the first coordinate $x_i^{(1)}$ has magnitude $\abs{x_i^{(1)}}<d^{-0.1}$, we continue to have $\abs{\bra x_i,x_j\ket}\leq d^{-0.01}$ for all remaining pairs of distinct points $x_i$ and $x_j$, and so the conclusion of \cref{lemma:sphere_concentration} and thus \cref{theorem:sphere} continue to hold for the split sphere dataset $X$.
Similarly, the conclusion of \cref{proposition:volume} also continues to hold for the split sphere, as the distance bound $\norm{x_i-x_j}\geq0.2$ holds for all $\binom{n}{2}$ pairs of distinct points, regardless of whether they were removed from $X$, so this bound continues to hold for all pairs in the split sphere $X$, and the proof continues identically.

We now justify the claim that with high probability, we still have $e^{\Theta\paren{d^{0.98}}}$ points remaining.
Note that a uniformly random point $x$ on $\SS^{d-1}$ can be obtained by taking i.i.d.\ standard normals $Z_1,\dots,Z_d$, and letting $x=\frac{1}{\sqrt{Z_1^2+\cdots+Z_d^2}}(Z_1,\dots,Z_d)$.
By a Chernoff bound, we have $\sqrt{Z_1^2+\cdots+Z_d^2}\leq 2\sqrt{d}$ with probability $1-e^{-\Omega(d)}$.
Thus, for sufficiently large $n$, we have by a union bound,
\begin{align*}
    \P\paren{\frac{\abs{Z_1}}{\sqrt{Z_1^2+\cdots+Z_d^2}}< d^{-0.1}}
    &\leq e^{-\Omega(d)} + \P\paren{\abs{Z_1}<2d^{0.4}} = e^{-\Omega(d)} + 1 - e^{-\Theta\paren{d^{0.8}}} = 1 - e^{-\Theta\paren{d^{0.8}}}.
\end{align*}
Thus, by a Chernoff bound, we keep at least $\frac{1}{2}e^{-\Theta\paren{d^{0.8}}}$ fraction of our $n$ points with probability $1-e^{-\Omega(n)}=1-e^{-\Omega\paren{d^{0.98}}}$, in which case we keep $\frac{1}{2}e^{\Theta\paren{d^{0.98}}}e^{-\Theta\paren{d^{0.8}}}=e^{\Theta\paren{d^{0.98}}}$ points in our split sphere dataset $X$.

Recall $x=\frac{1}{\sqrt{Z_1^2+\cdots+Z_d^2}}(Z_1,\dots,Z_d)$ yields a uniformly random point on the unit sphere $\SS^{d-1}$.
By the strong law of large numbers, we have $\frac{1}{\sqrt{d}}\sqrt{Z_1^2+\cdots+Z_d^2}\to1$ almost surely as $d\to\infty$.
This yields the heuristic that as $d\to\infty$, each coordinate of $x$ looks like a normal distribution with mean 0 and variance $\frac{1}{d}$.
Then, conditioning $x$ to be selected by $X$, i.e., having first coordinate of magnitude at least $d^{-0.1}$, scaling $X$ by $\sqrt{d}$, the first coordinate has magnitude at least $d^{0.4}$, and the other coordinates heuristically look like a standard normal.
Arora, Hu, and Kothari \cite[Corollary 3.2]{tSNE_visualization} show that a mixture of isotropic Gaussians, i.e., a mixture of Gaussians with identity covariance matrix $I_d$, can be fully visualized by t-SNE (see \cite[Definition 1.2]{tSNE_visualization} for the rigorous definition of a full visualization) with high probability as long as the means of the Gaussians are separated by $\widetilde\Omega\paren{d^{1/4}}$.
Our two spherical caps, when scaled to the isotropic Gaussian setting, are separated in the first coordinate by $2d^{0.4}$, which is certainly large enough for the $\widetilde\Omega\paren{d^{1/4}}$ condition.
Hence, one should expect that t-SNE on $X$ should yield a full visualization with respect to the two clusters of $X$, i.e., the output places the two spherical caps into two separate clusters.
\section{Numerical examples}\label{section:numerical_examples}
\subsection{High-dimensional spheres}\label{subsec:sphere_examples}
Inspired by \cref{theorem:sphere}, we look at numerical examples for when the datapoints are i.i.d., drawn uniformly from the unit sphere $\SS^{d-1}$.
\cref{fig:sphere_2} shows the two-dimensional t-SNE embedding, along with intermediate results, for the unit circle, i.e., the case $d=2$, with $n=1000$ points.
\begin{figure}[htbp]
    \centering
    \begin{subfigure}[b]{0.3\textwidth}
        \centering
        \includegraphics[width=\textwidth]{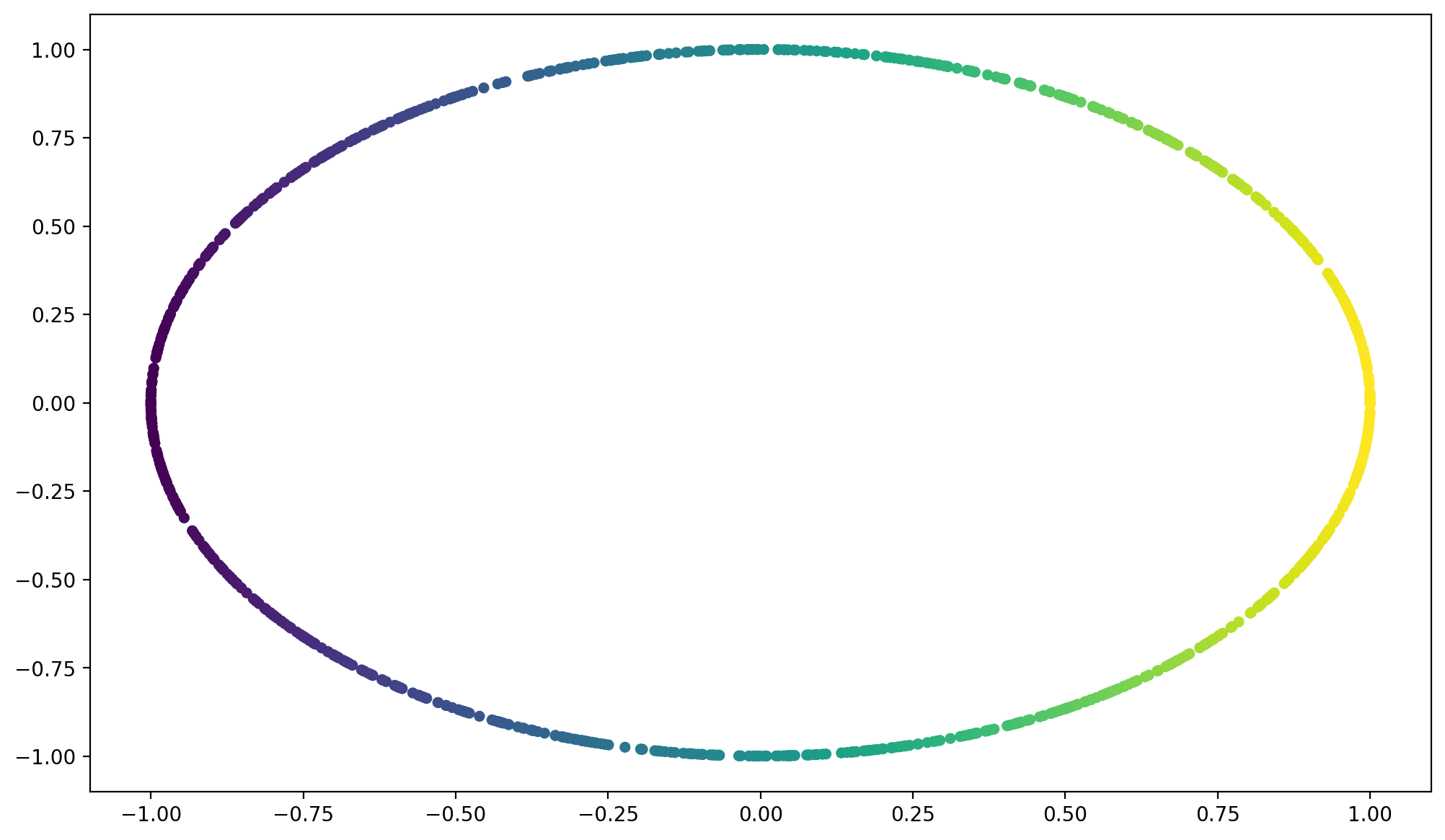}
        \caption{First two coordinates}
    \end{subfigure}
    \hfill
    \begin{subfigure}[b]{0.3\textwidth}
        \centering
        \includegraphics[width=\textwidth]{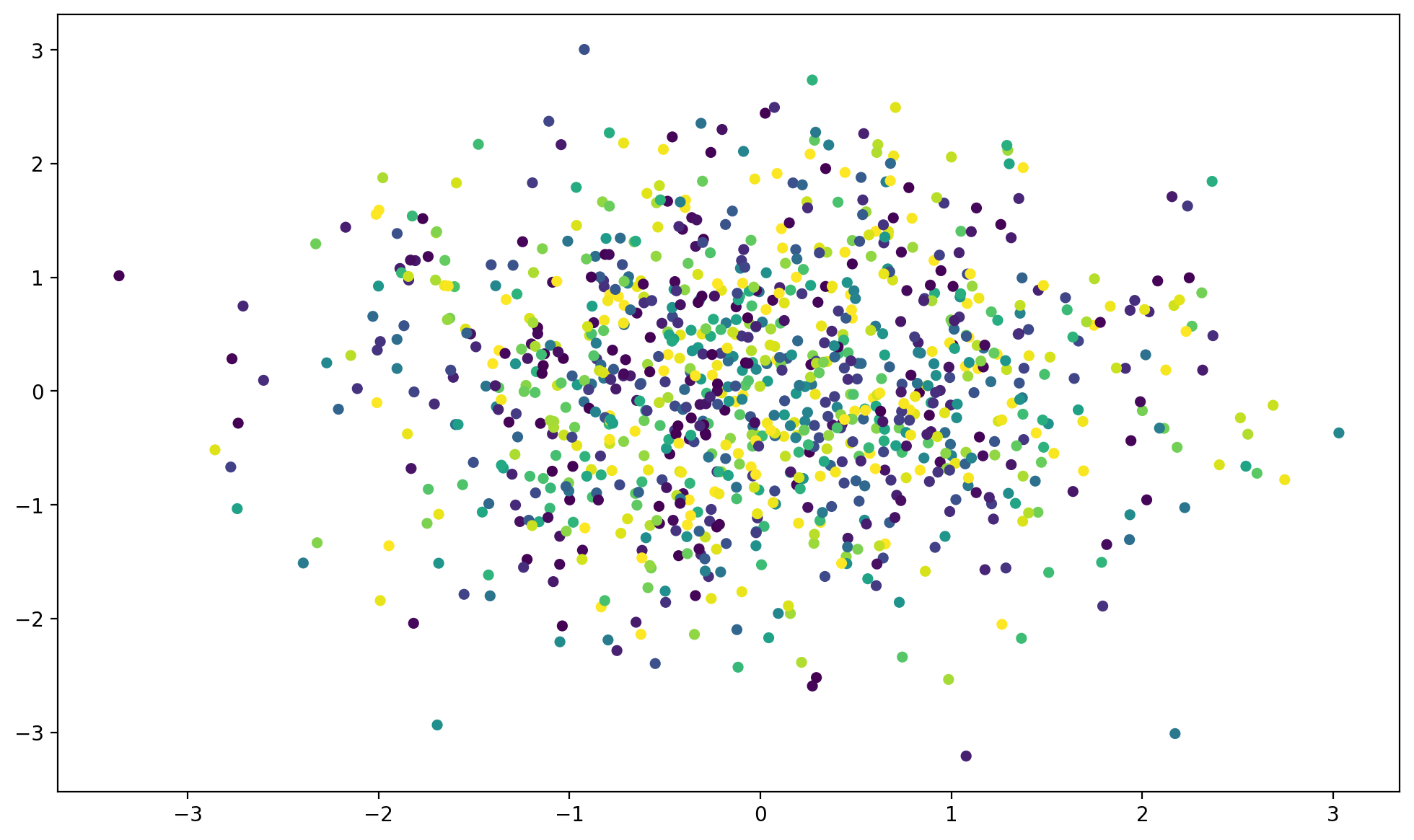}
        \caption{Initialization}
    \end{subfigure}
    \hfill
    \begin{subfigure}[b]{0.3\textwidth}
        \centering
        \includegraphics[width=\textwidth]{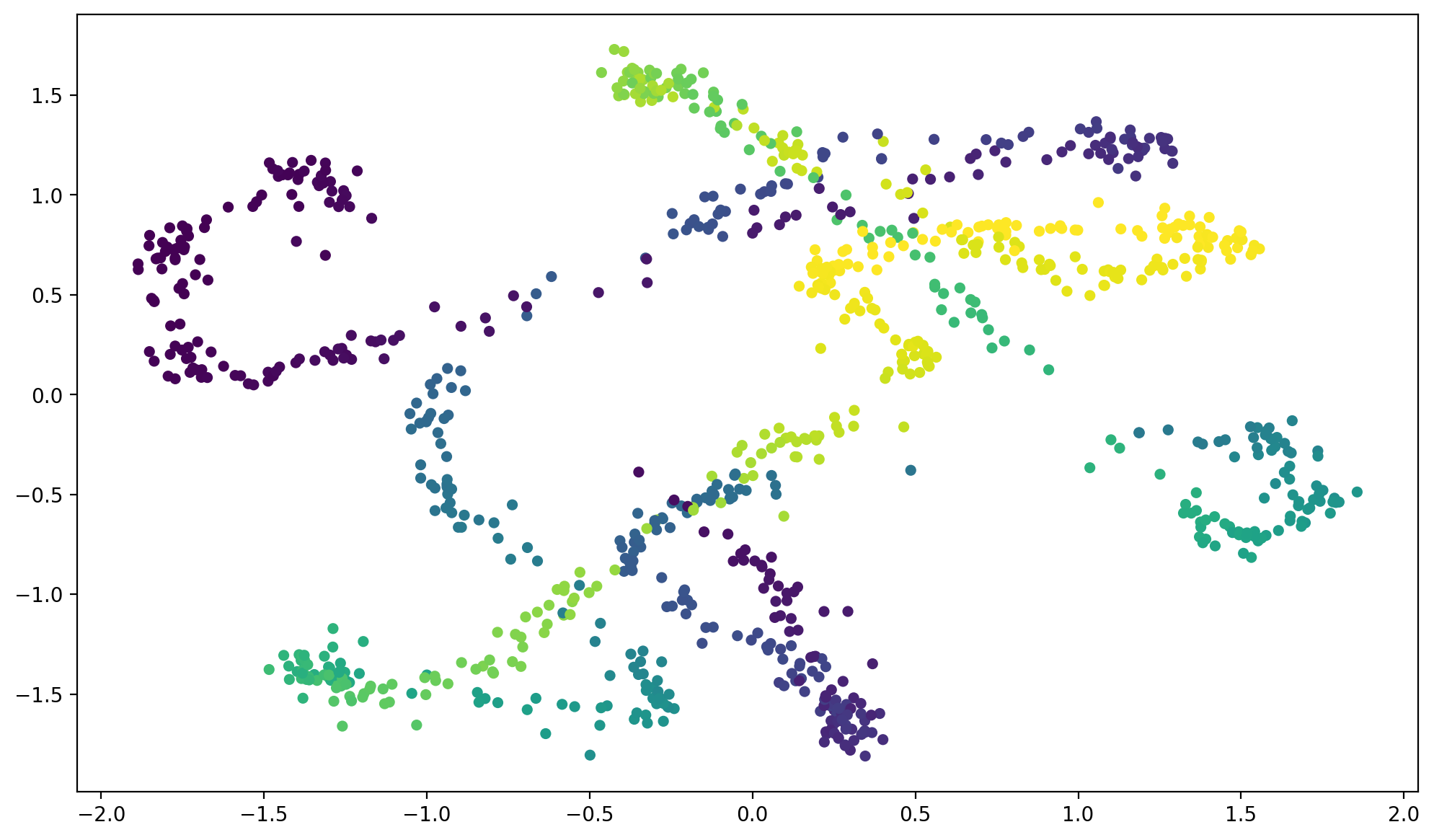}
        \caption{10 iterations}
    \end{subfigure}
    \begin{subfigure}[b]{0.3\textwidth}
        \centering
        \includegraphics[width=\textwidth]{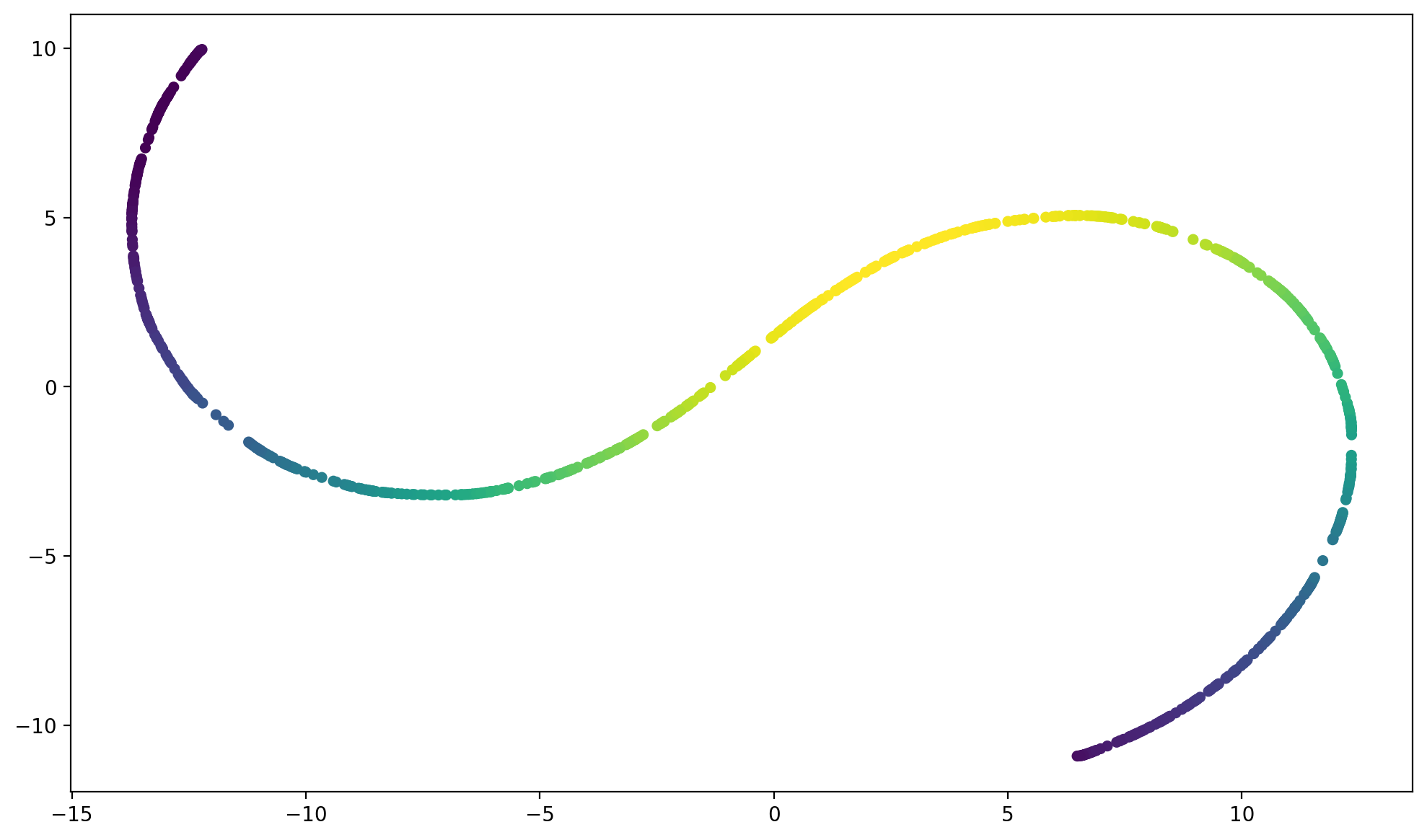}
        \caption{500 iterations}
    \end{subfigure}
    \hfill
    \begin{subfigure}[b]{0.3\textwidth}
        \centering
        \includegraphics[width=\textwidth]{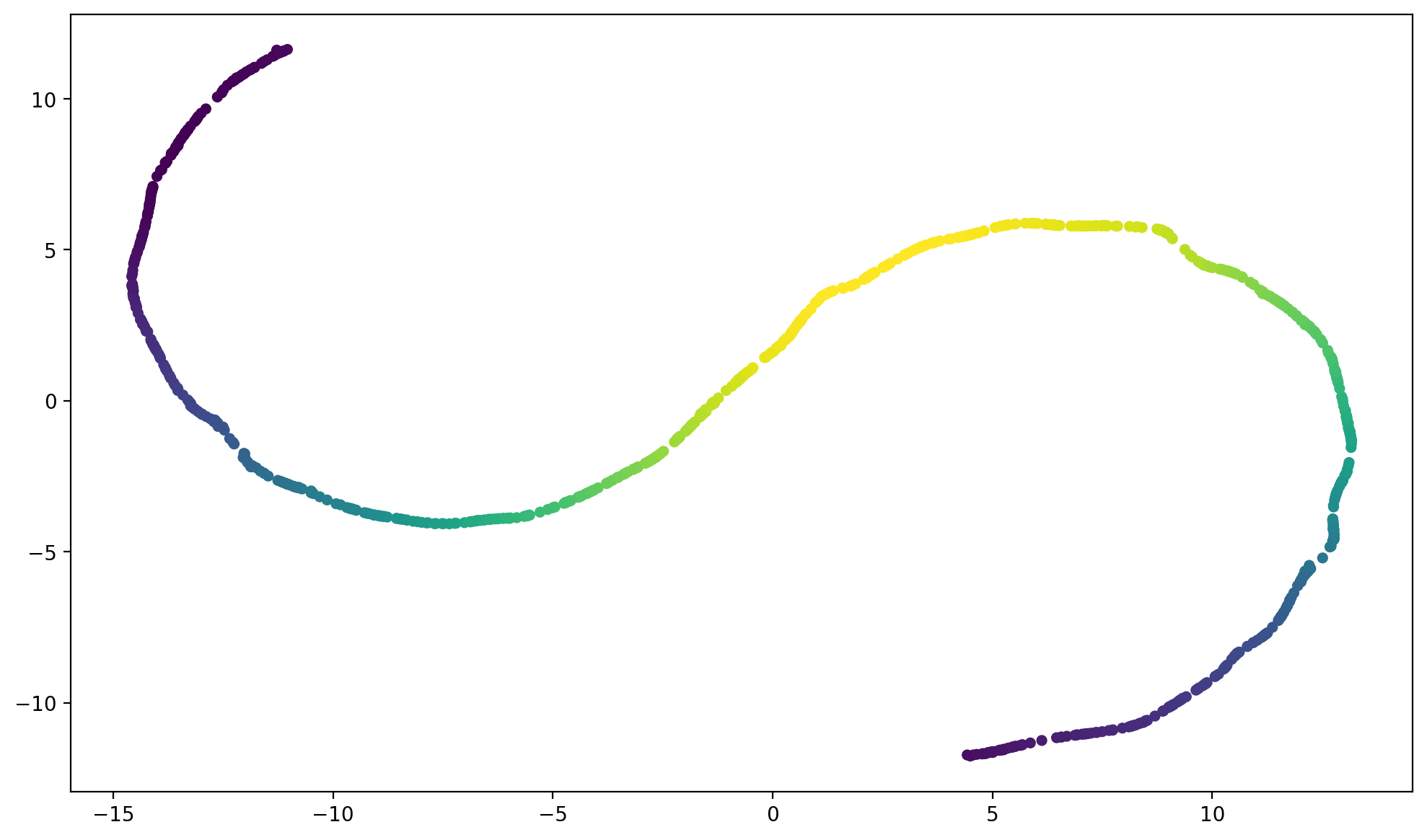}
        \caption{510 iterations}
    \end{subfigure}
    \hfill
    \begin{subfigure}[b]{0.3\textwidth}
        \centering
        \includegraphics[width=\textwidth]{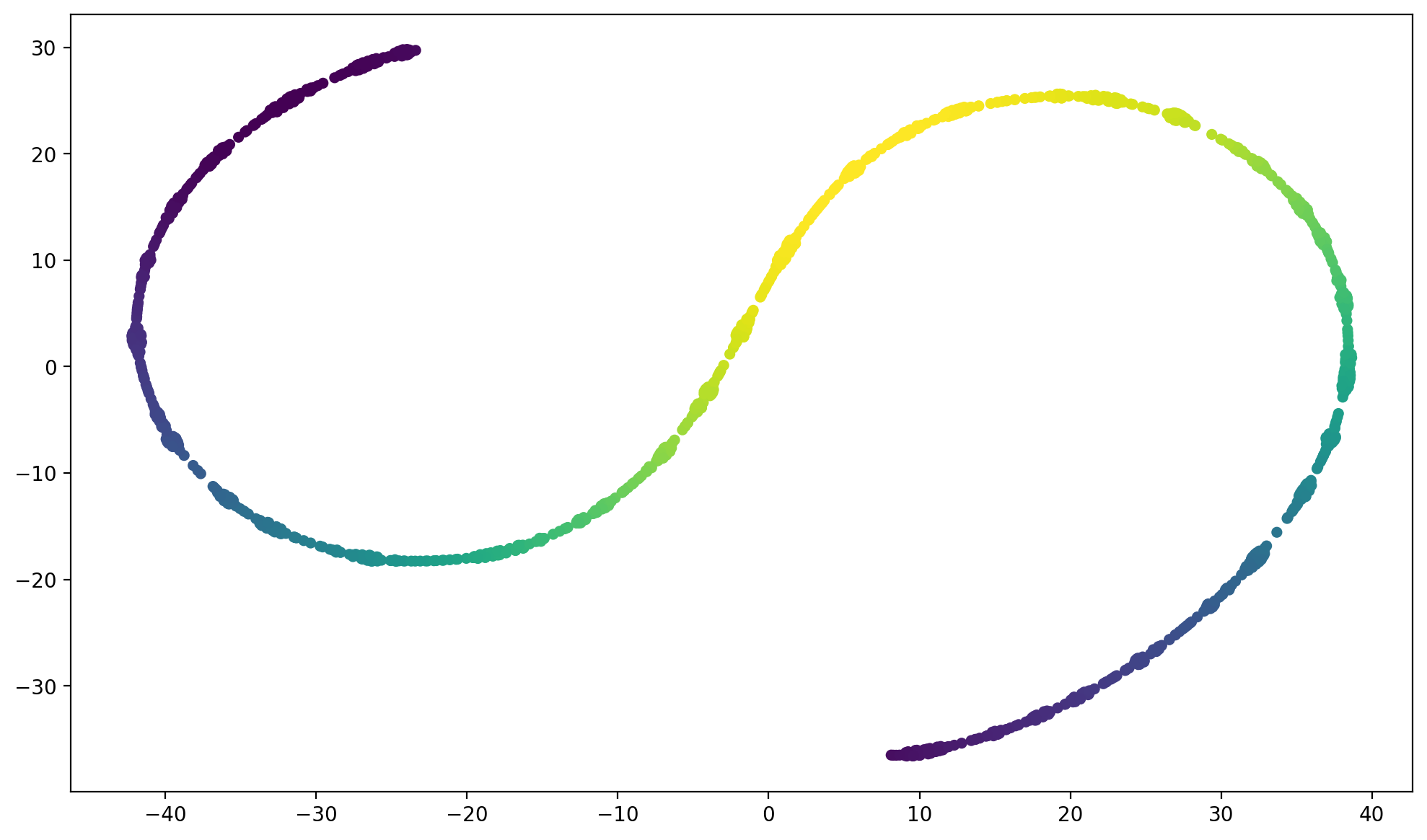}
        \caption{1000 iterations}
    \end{subfigure}

    \caption{t-SNE on points drawn from the unit sphere $\SS^1\subset\R^2$.}
    \label{fig:sphere_2}
\end{figure}
Here, we run t-SNE using the Python implementation of Laurens van der Maaten with default parameters, except changing early exaggeration to last for 500 iterations (instead of 100) out of the 1000 total to help demonstrate the distinction between t-SNE's behavior with and without early exaggeration.
We plot the dataset on its first two coordinates, which in the case of $d=2$ is all of its coordinates, show its initialization as a random collection of points in $\R^2$, then show the visualization after 10 iterations, 500 iterations (the end of early exaggeration), 510 iterations (10 steps after early exaggeration), and 1000 iterations, the end of the algorithm.
We color the datapoints by their first coordinate, to visually identify the points.
When $d=2$, we note that after 10 iterations, t-SNE is already well on its way to identifying the local structure, i.e., clusters points which were originally close together, and afterwards tightens these clusters and rearranges the points in a more visually clear manner.
We note that the final visualization may be disappointing because there are two ends that are not connected together; however, this visualization already captures most of the original structure very well, and tuning the parameters can easily enable t-SNE to output a closed loop in this case, but for sake of consistency as we increase dimension, we will continue using default parameters.

\begin{figure}[htbp]
    \centering
    \begin{subfigure}[b]{0.3\textwidth}
        \centering
        \includegraphics[width=\textwidth]{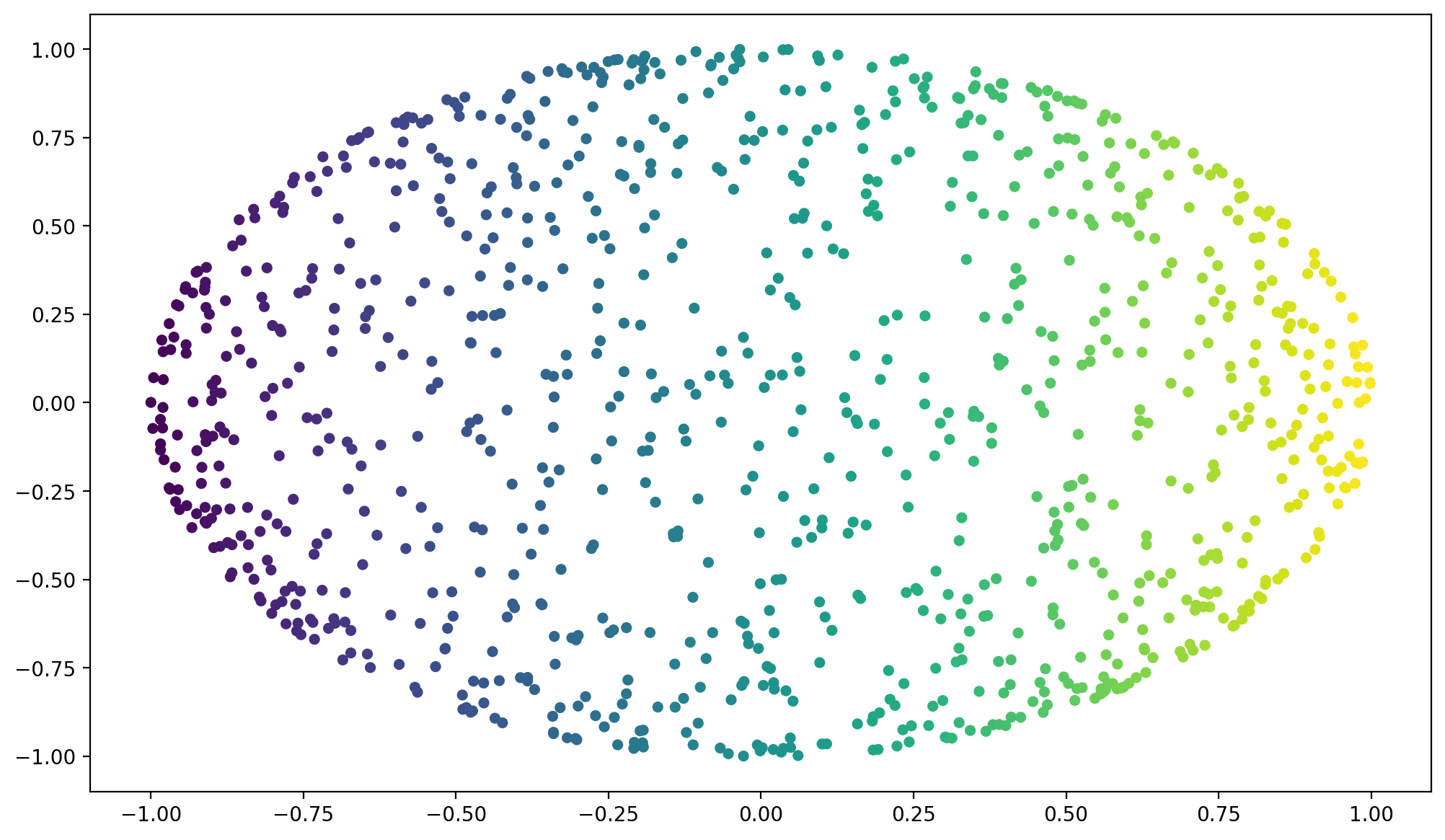}
        \caption{First two coordinates}
    \end{subfigure}
    \hfill
    \begin{subfigure}[b]{0.3\textwidth}
        \centering
        \includegraphics[width=\textwidth]{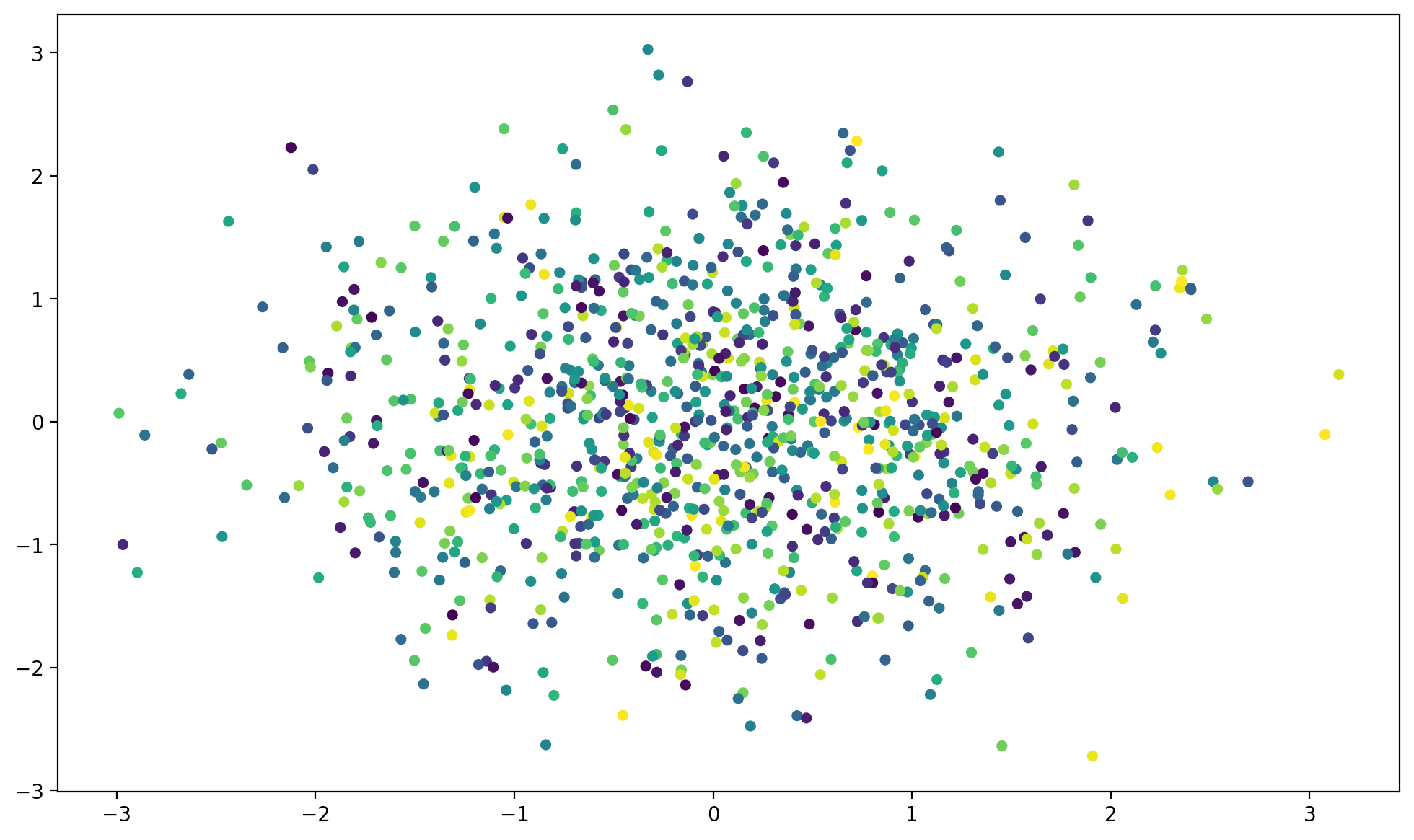}
        \caption{Initialization}
    \end{subfigure}
    \hfill
    \begin{subfigure}[b]{0.3\textwidth}
        \centering
        \includegraphics[width=\textwidth]{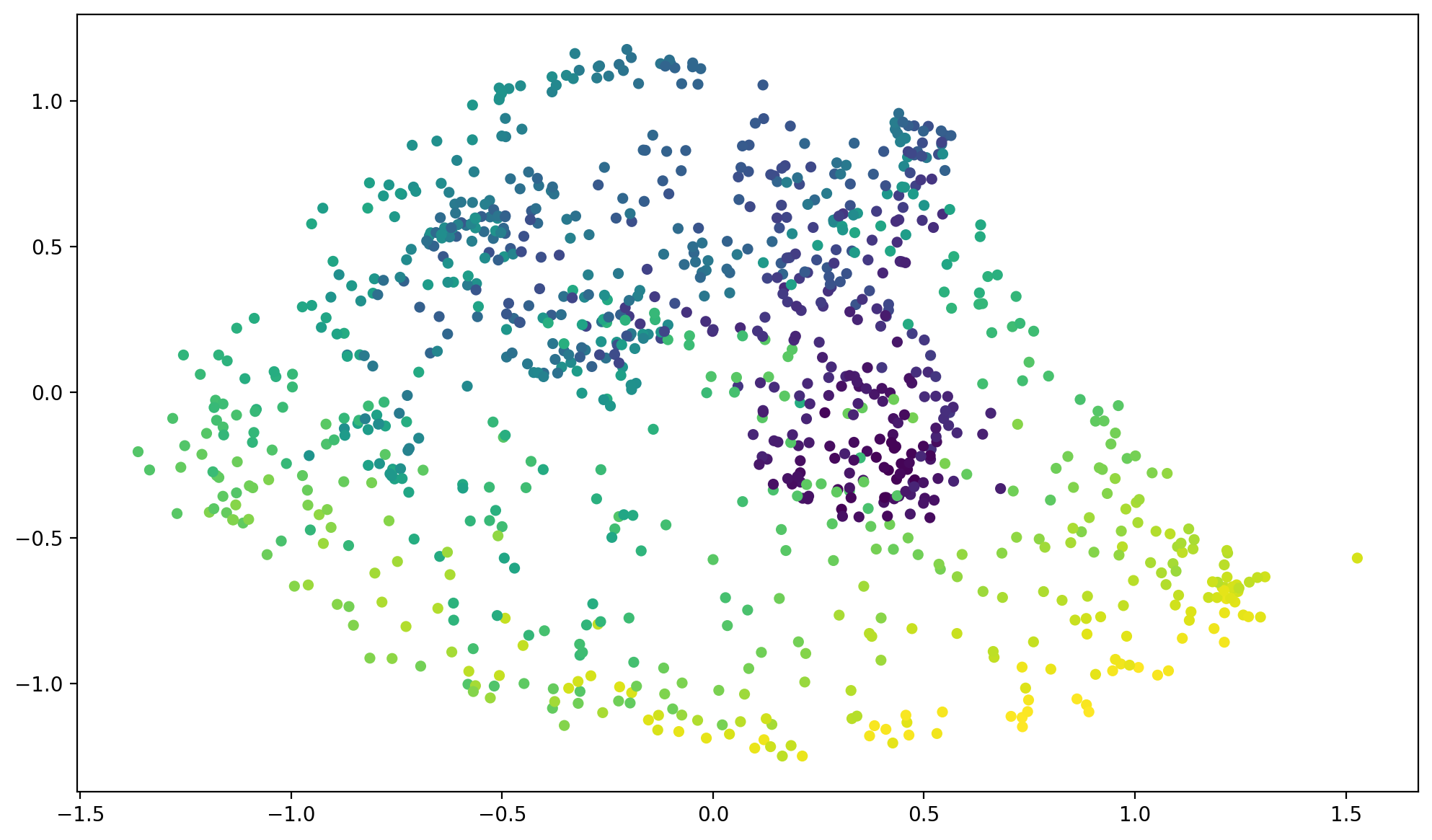}
        \caption{10 iterations}
    \end{subfigure}
    \begin{subfigure}[b]{0.3\textwidth}
        \centering
        \includegraphics[width=\textwidth]{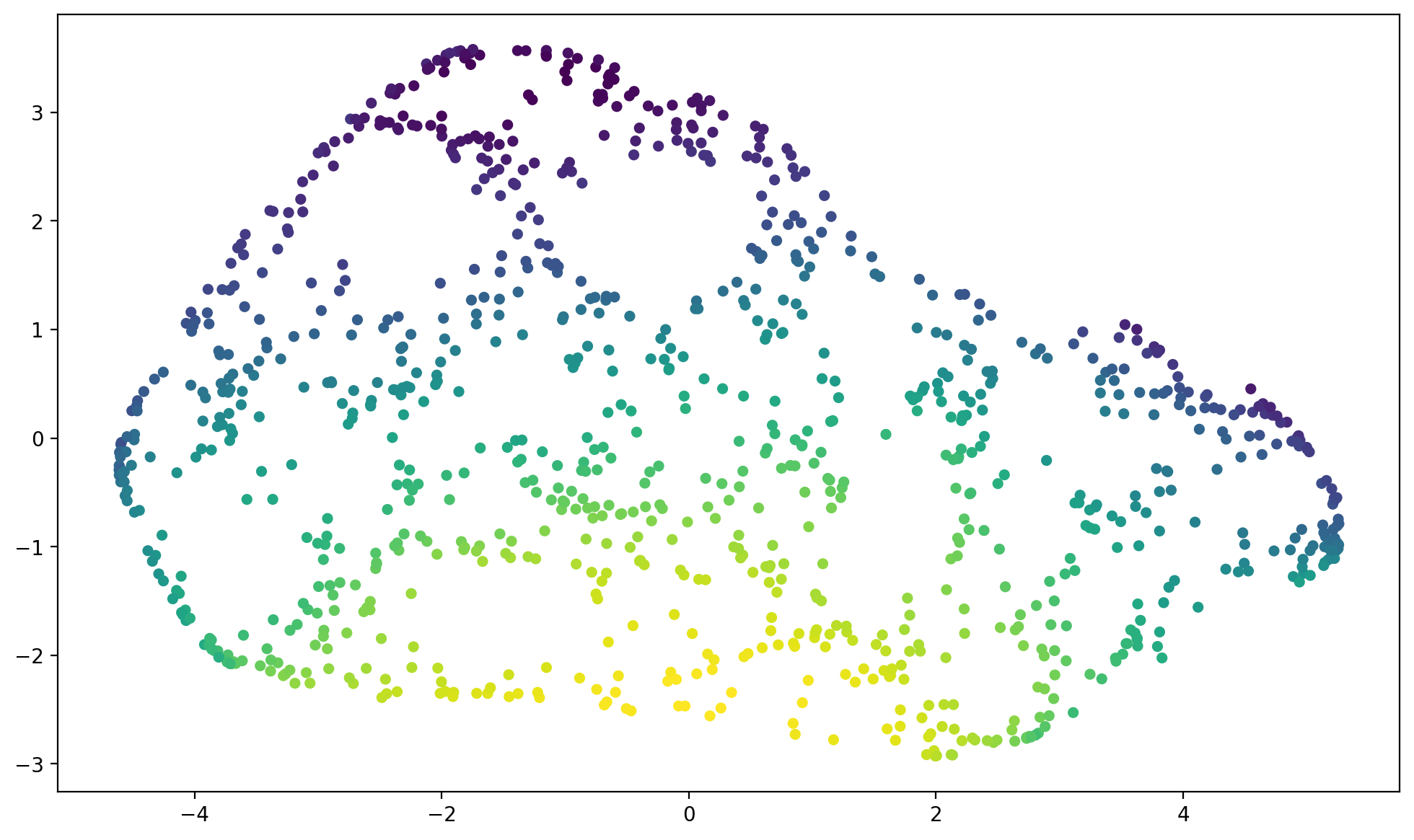}
        \caption{500 iterations}
    \end{subfigure}
    \hfill
    \begin{subfigure}[b]{0.3\textwidth}
        \centering
        \includegraphics[width=\textwidth]{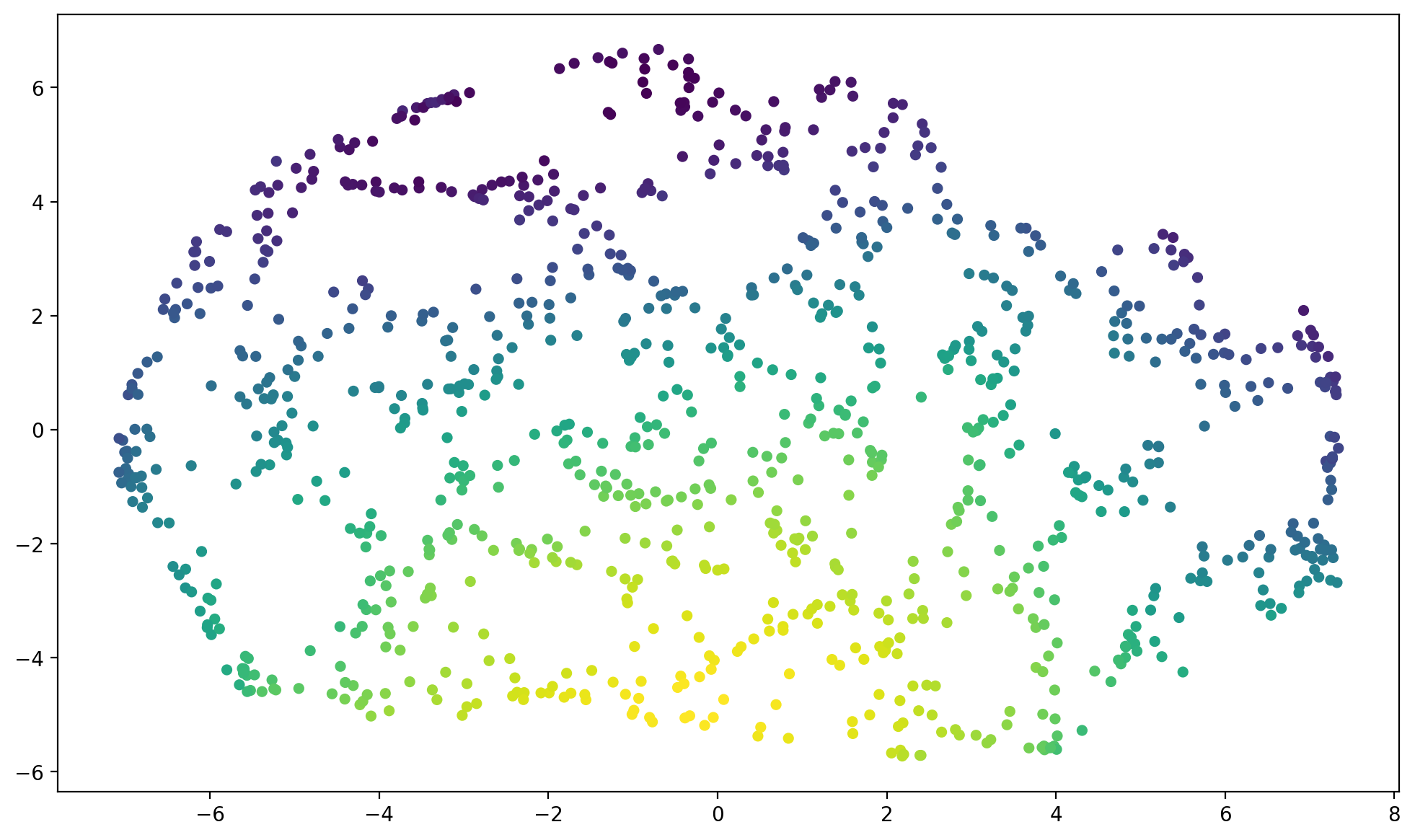}
        \caption{510 iterations}
    \end{subfigure}
    \hfill
    \begin{subfigure}[b]{0.3\textwidth}
        \centering
        \includegraphics[width=\textwidth]{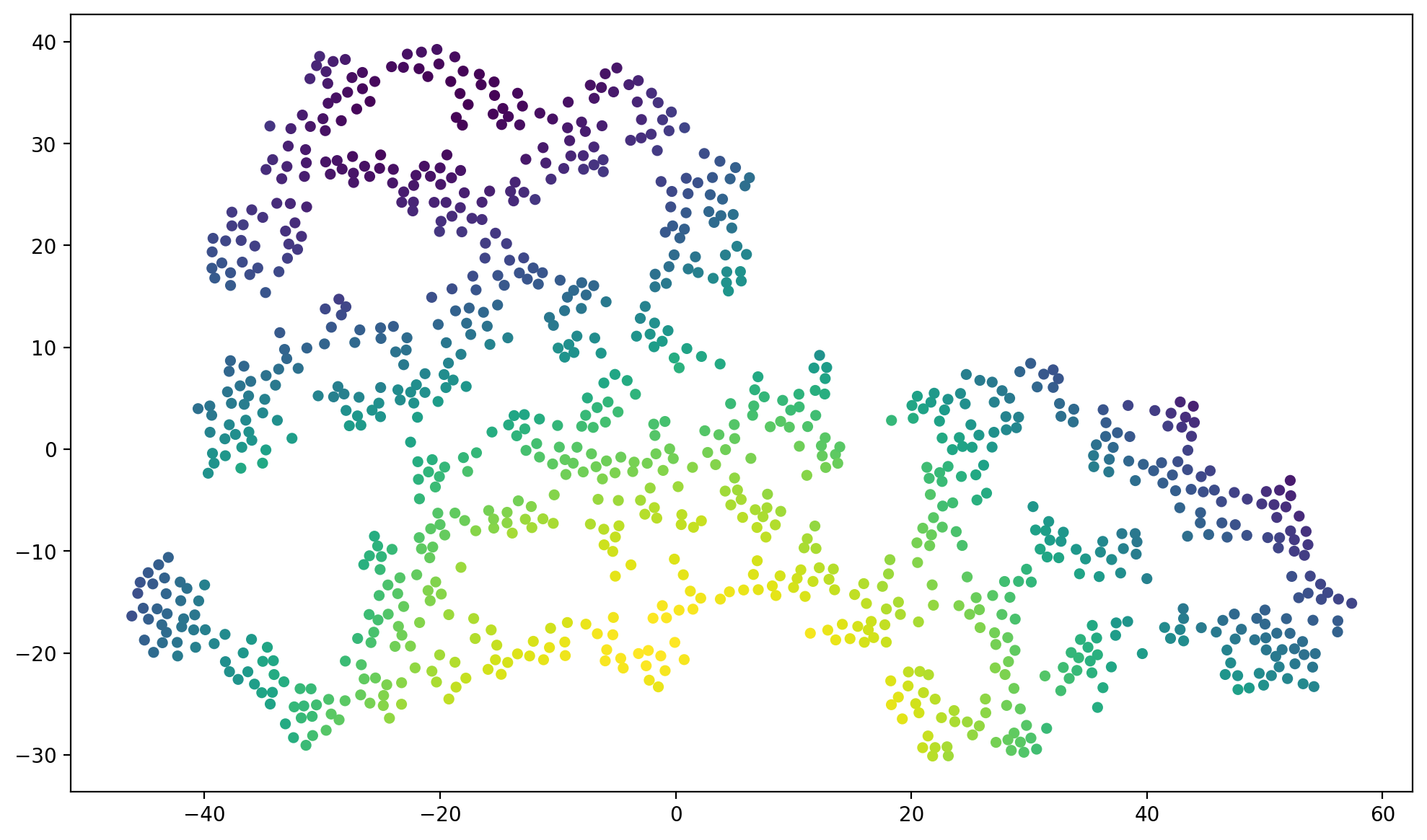}
        \caption{1000 iterations}
    \end{subfigure}

    \caption{t-SNE on points drawn from the unit sphere $\SS^2\subset\R^3$.}
    \label{fig:sphere_3}
\end{figure}
Next, we consider the unit sphere in three dimensions, with the analogous plots shown in \cref{fig:sphere_3}.
Note that now, despite the dataset being a spherical shell, projecting onto its first two dimensions yields a filled-in unit ball.
We can see the quality of the final visualization deteriorate compared to the case $d=2$, though the visualization is still able to identify local structure.

\begin{figure}[htbp]
    \centering
    \begin{subfigure}[b]{0.3\textwidth}
        \centering
        \includegraphics[width=\textwidth]{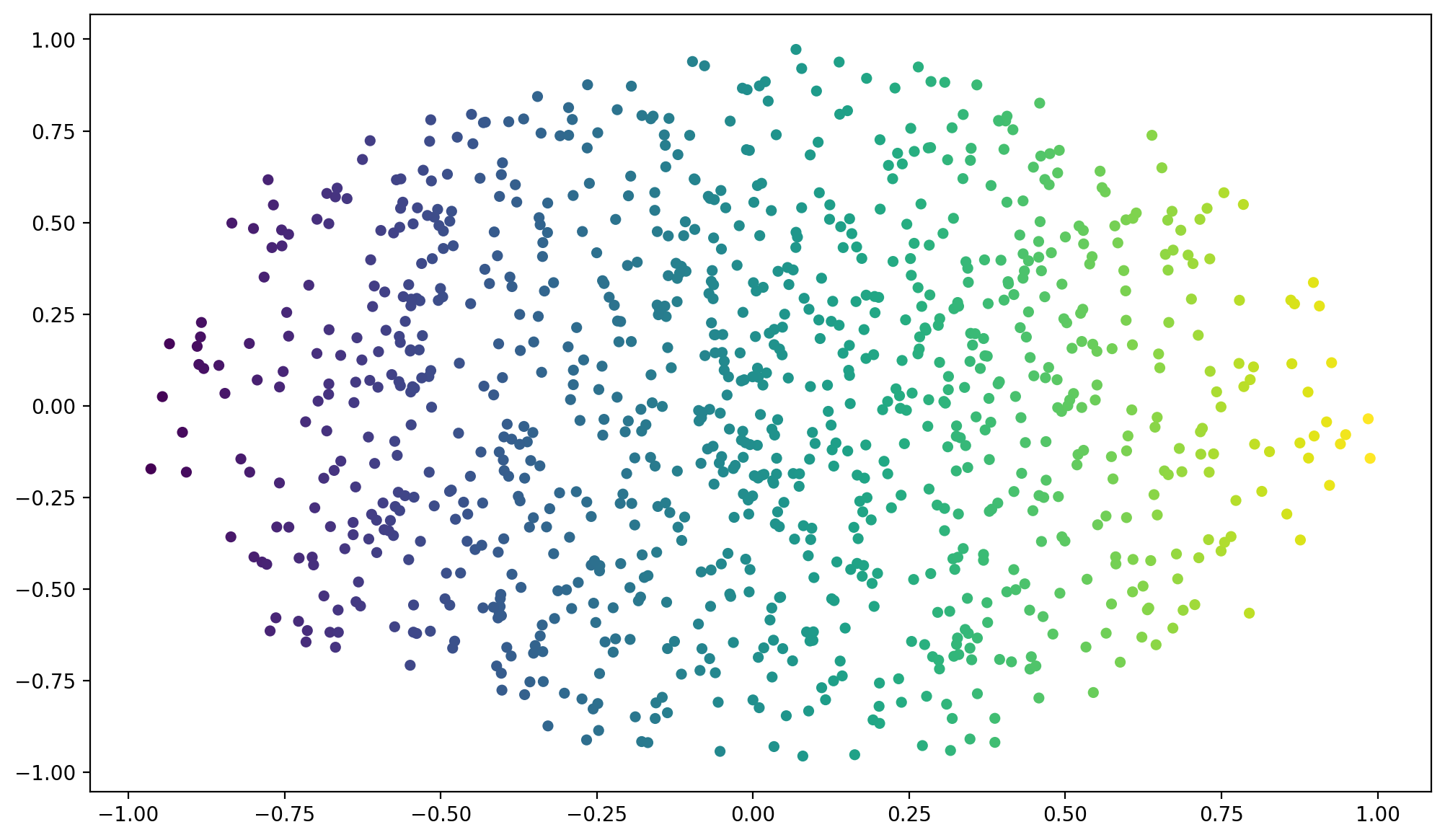}
        \caption{First two coordinates}
    \end{subfigure}
    \hfill
    \begin{subfigure}[b]{0.3\textwidth}
        \centering
        \includegraphics[width=\textwidth]{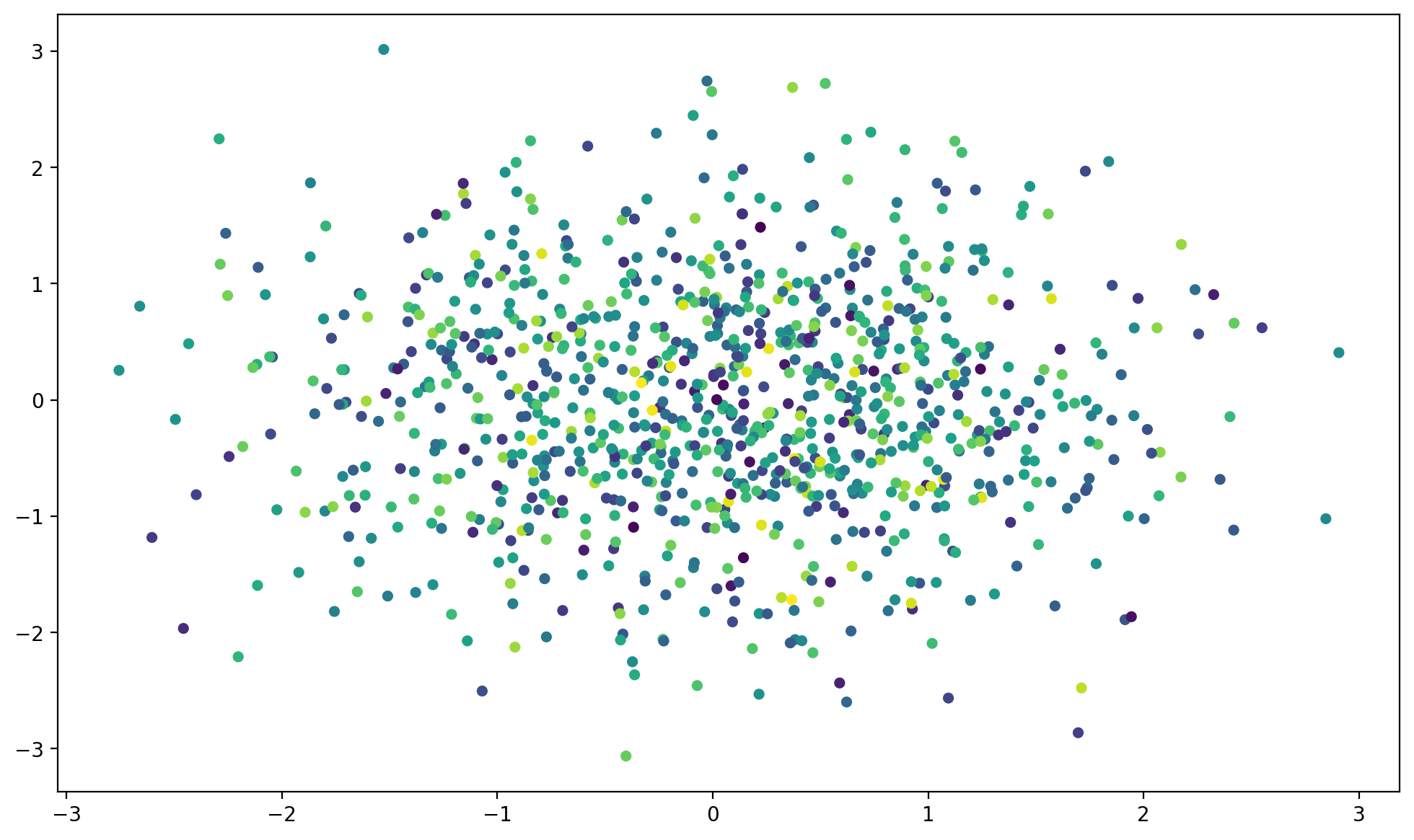}
        \caption{Initialization}
    \end{subfigure}
    \hfill
    \begin{subfigure}[b]{0.3\textwidth}
        \centering
        \includegraphics[width=\textwidth]{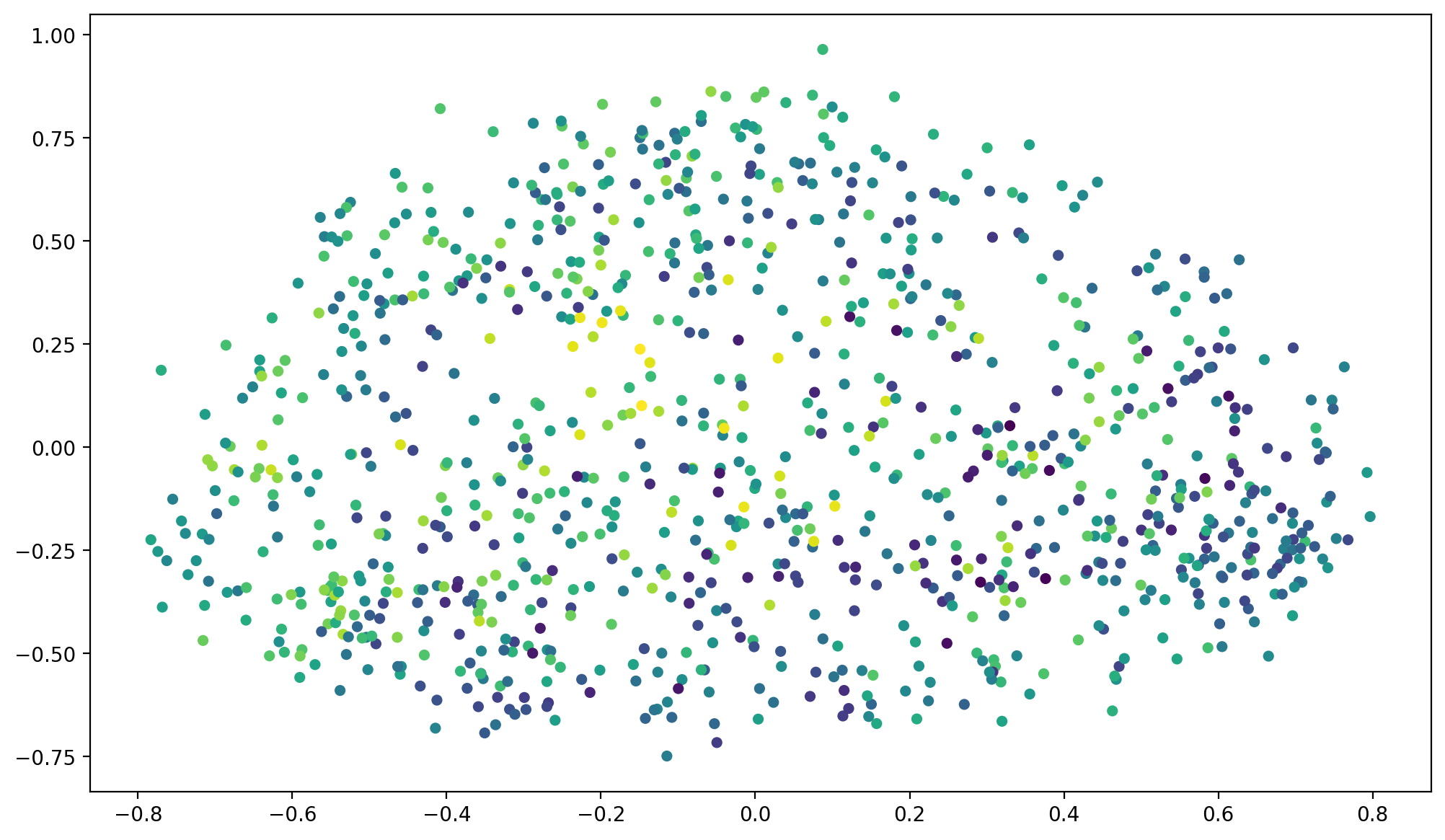}
        \caption{10 iterations}
    \end{subfigure}
    \begin{subfigure}[b]{0.3\textwidth}
        \centering
        \includegraphics[width=\textwidth]{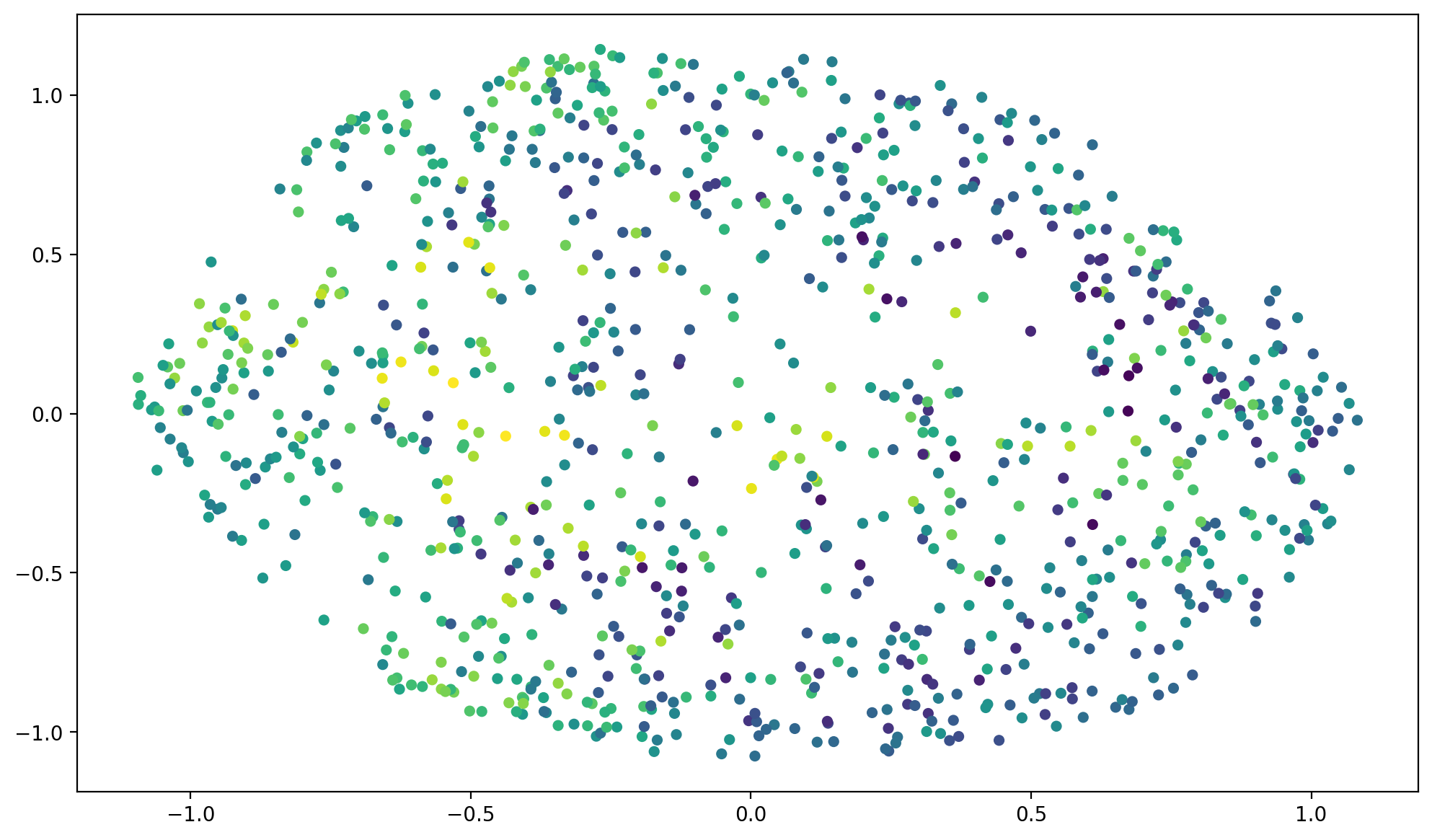}
        \caption{500 iterations}
    \end{subfigure}
    \hfill
    \begin{subfigure}[b]{0.3\textwidth}
        \centering
        \includegraphics[width=\textwidth]{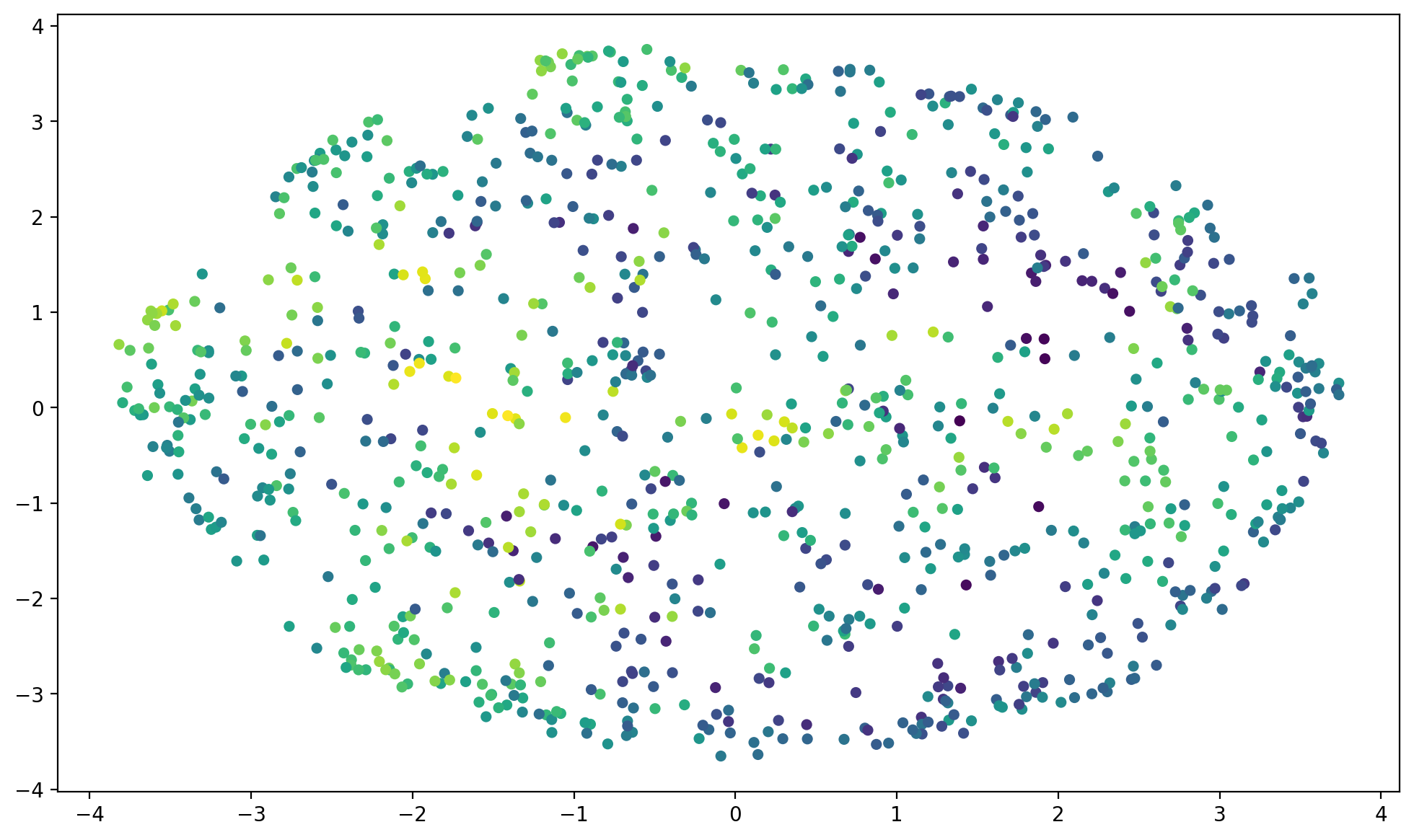}
        \caption{510 iterations}
    \end{subfigure}
    \hfill
    \begin{subfigure}[b]{0.3\textwidth}
        \centering
        \includegraphics[width=\textwidth]{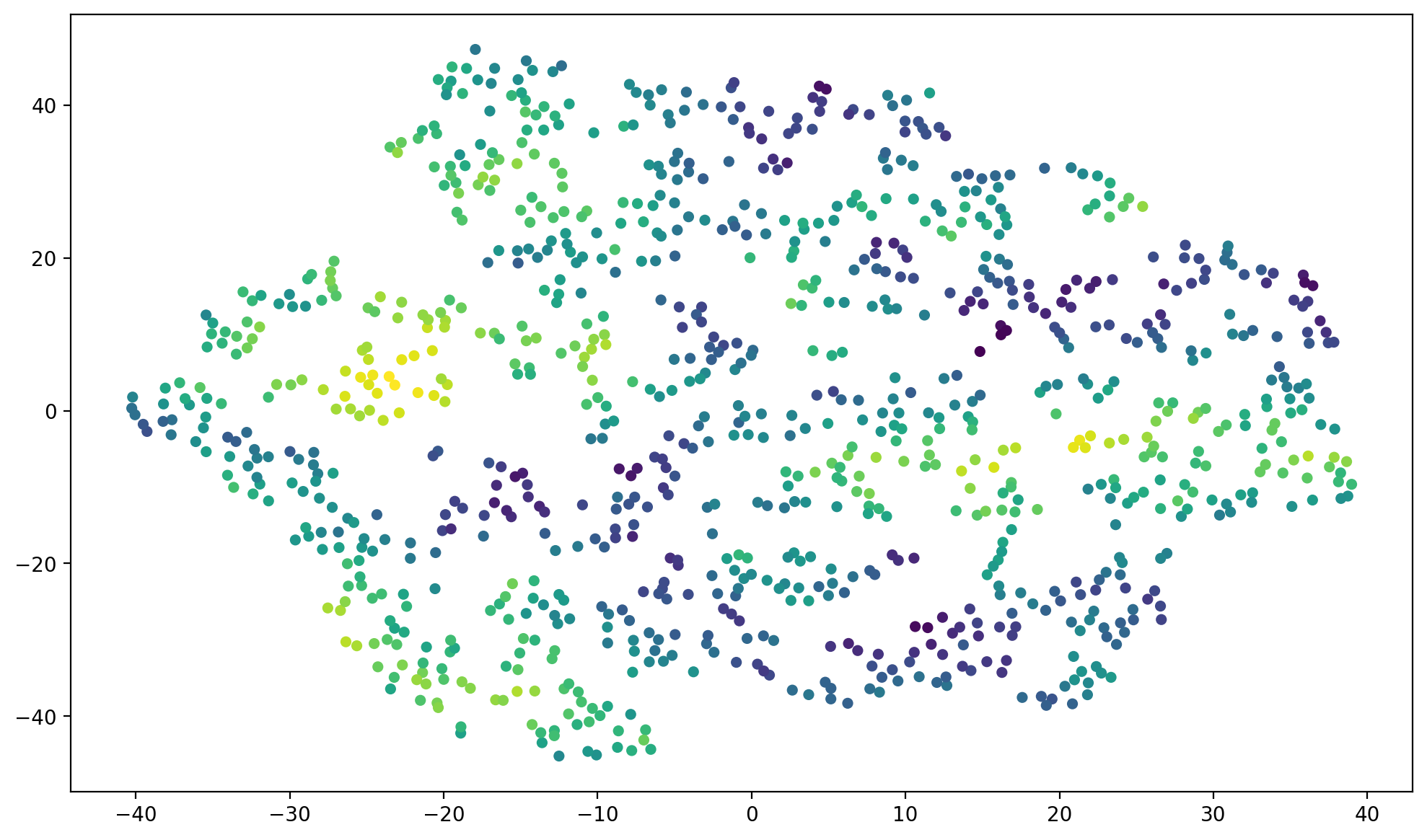}
        \caption{1000 iterations}
    \end{subfigure}

    \caption{t-SNE on points drawn from the unit sphere $\SS^4\subset\R^5$.}
    \label{fig:sphere_5}
\end{figure}
When we increase the dimension to $d=5$, plotted in \cref{fig:sphere_5}, the t-SNE visualization is now significantly worse, especially in the sense predicted by \cref{proposition:volume}: many points are visualized close to points that they are not close to in the original dataset, as can be seen by the mixing of the green and purple points.
There does appear to still be some preservation of local structure, however, e.g., the yellow points are still mostly surrounded by green points.

\begin{figure}[htbp]
    \centering
    \begin{subfigure}[b]{0.3\textwidth}
        \centering
        \includegraphics[width=\textwidth]{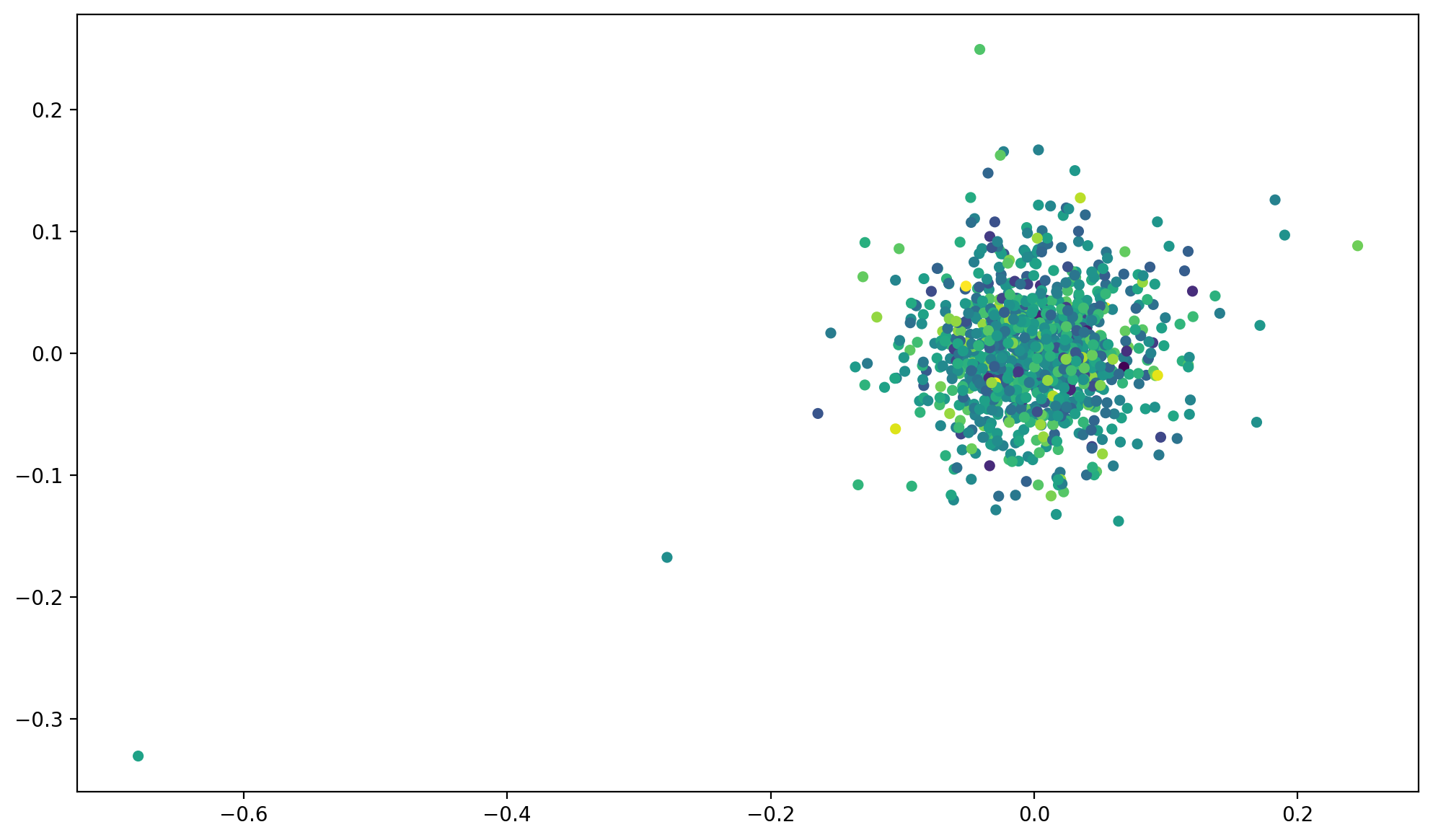}
        \caption{10 iterations}
    \end{subfigure}
    \hfill
    \begin{subfigure}[b]{0.3\textwidth}
        \centering
        \includegraphics[width=\textwidth]{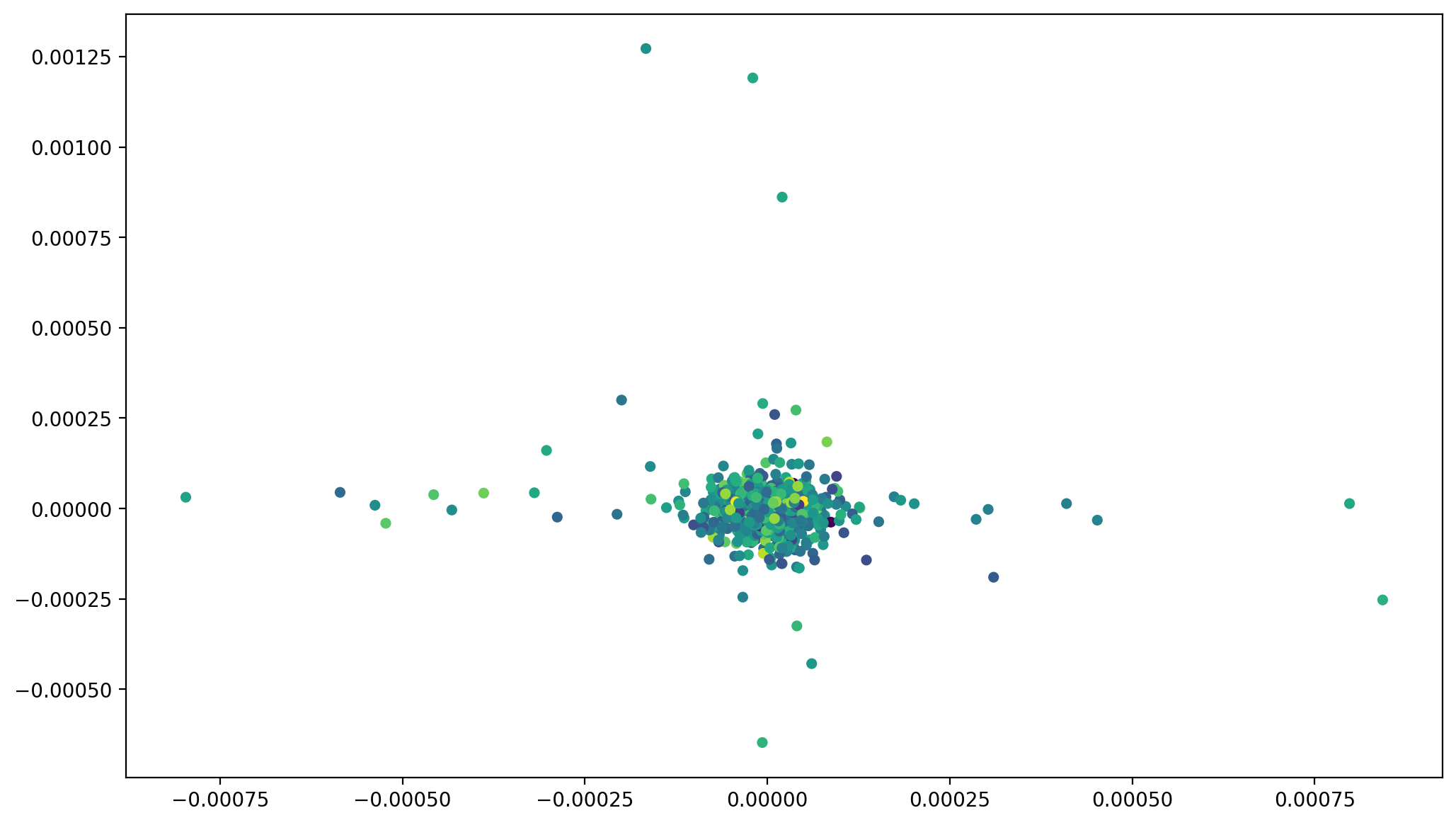}
        \caption{100 iterations}
    \end{subfigure}
    \hfill
    \begin{subfigure}[b]{0.3\textwidth}
        \centering
        \includegraphics[width=\textwidth]{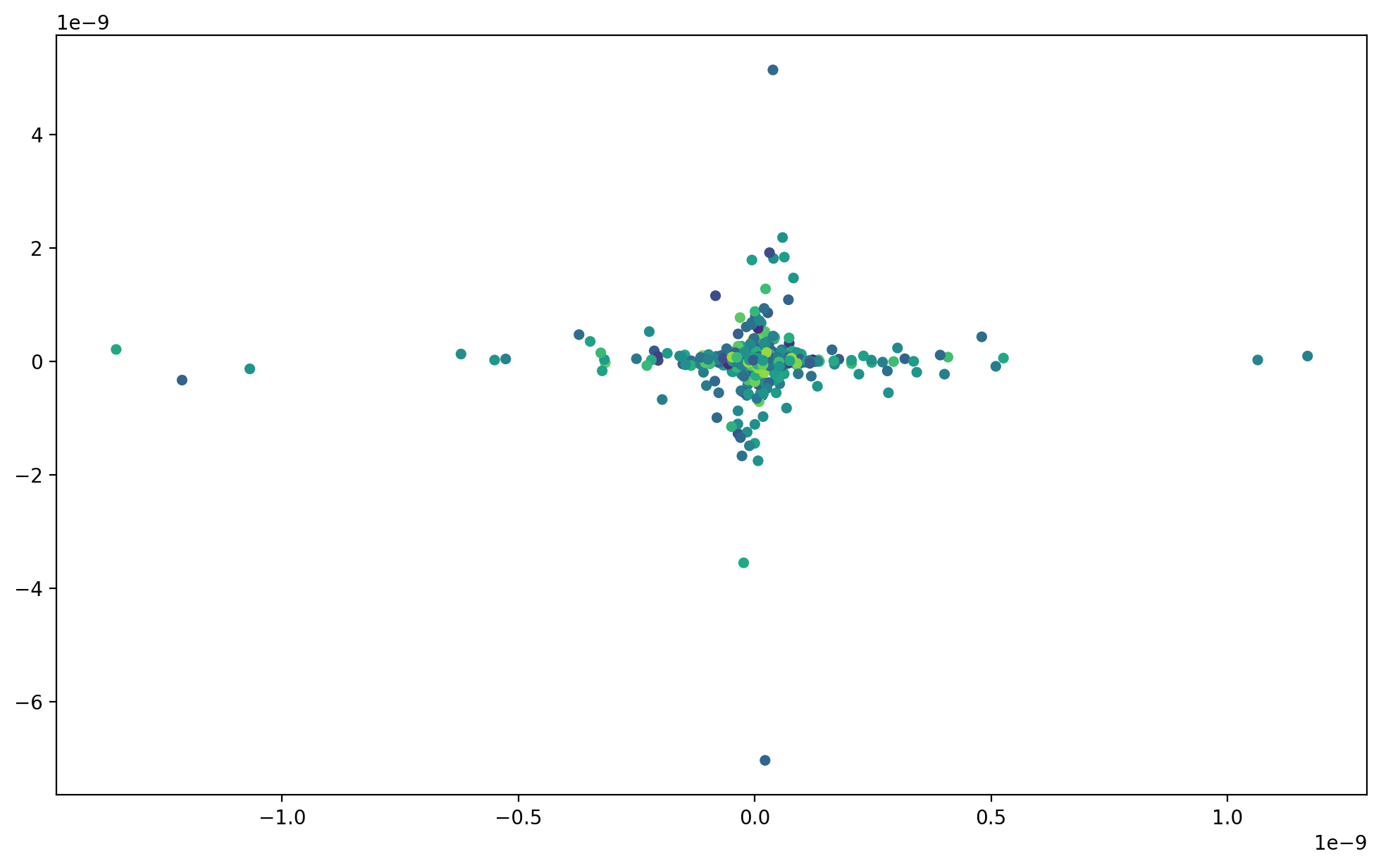}
        \caption{500 iterations}
    \end{subfigure}
    \begin{subfigure}[b]{0.3\textwidth}
        \centering
        \includegraphics[width=\textwidth]{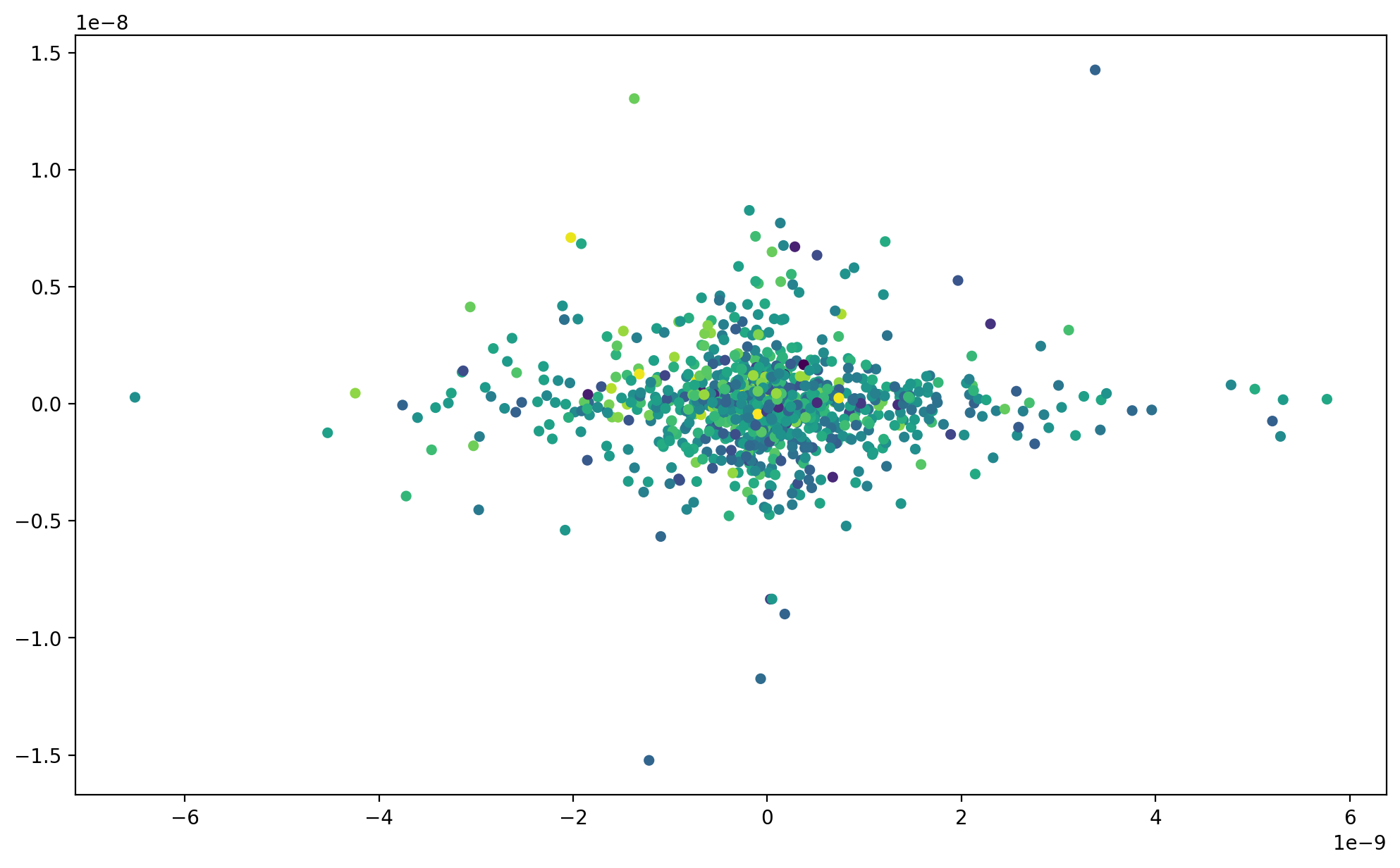}
        \caption{510 iterations}
    \end{subfigure}
    \hfill
    \begin{subfigure}[b]{0.3\textwidth}
        \centering
        \includegraphics[width=\textwidth]{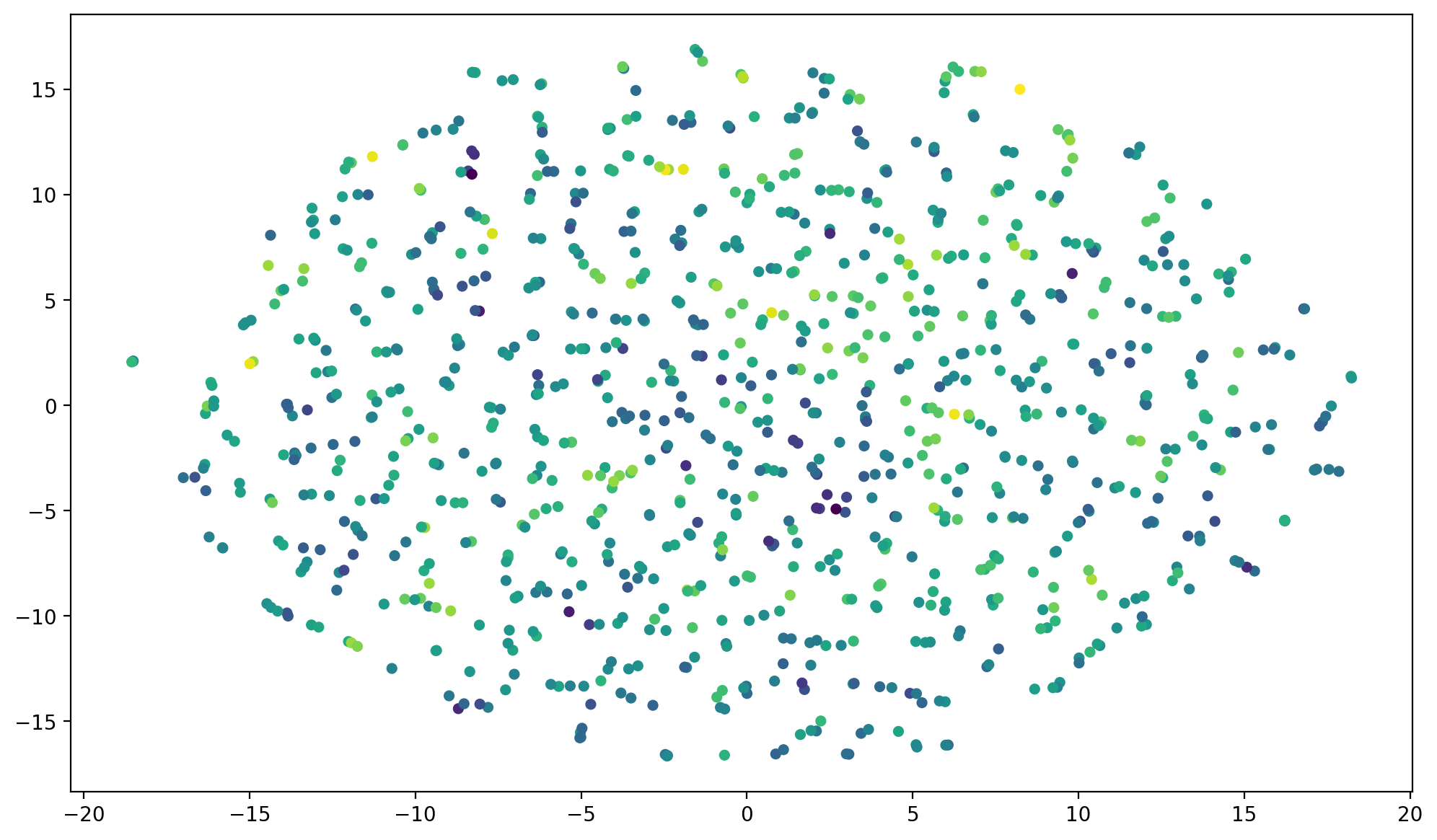}
        \caption{600 iterations}
    \end{subfigure}
    \hfill
    \begin{subfigure}[b]{0.3\textwidth}
        \centering
        \includegraphics[width=\textwidth]{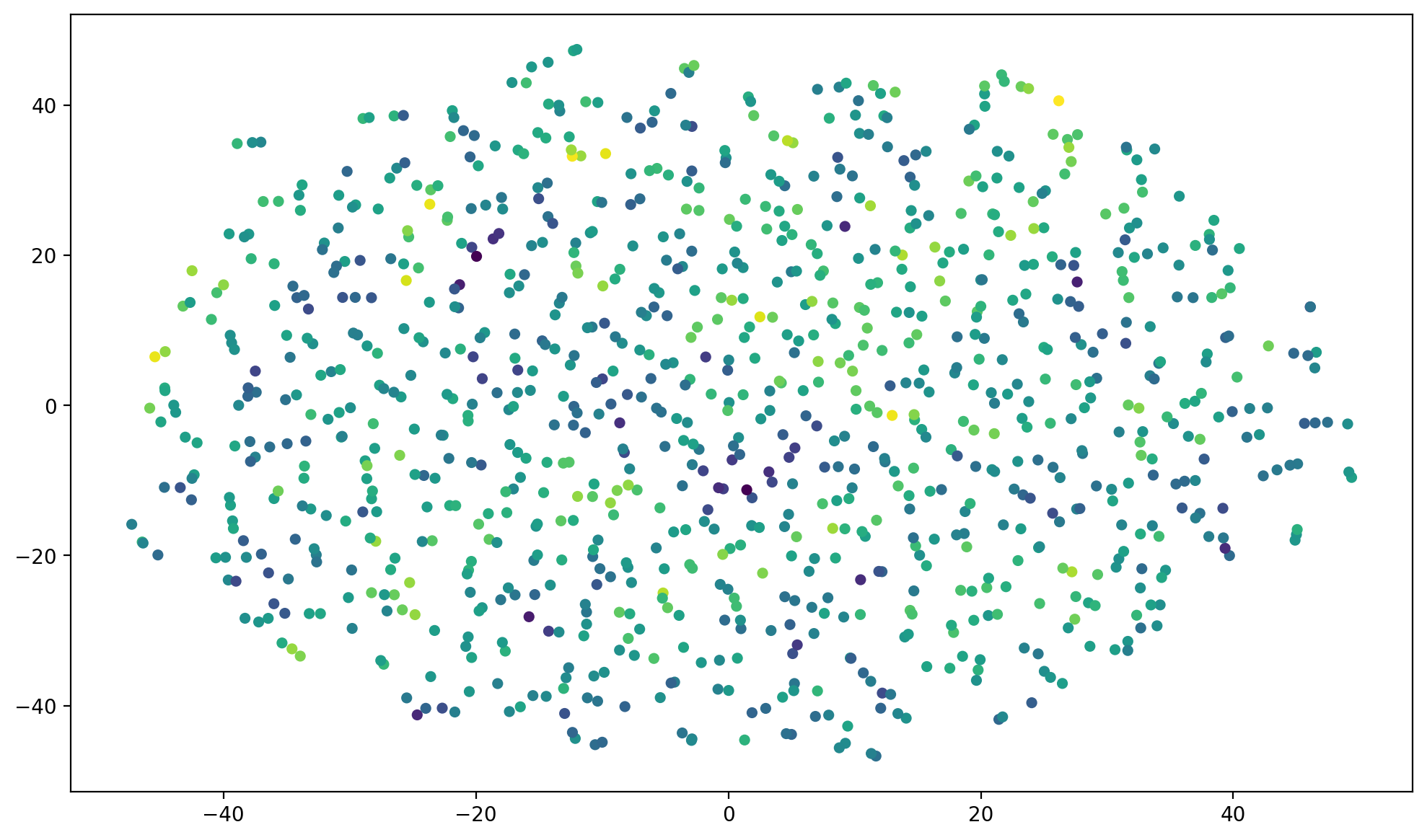}
        \caption{1000 iterations}
    \end{subfigure}
    \caption{t-SNE on points drawn from the unit sphere $\SS^{19}\subset\R^{20}$.}
    \label{fig:sphere_20}
\end{figure}
When we increase the dimension to $d=20$, plotted in \cref{fig:sphere_20}, the final visualization is even worse than it was in the case $d=5$.
We removed the panels showing the first two coordinates and the initialization, as they are very similar to the previous examples, and added panels for the intermediate results after 100 and 600 iterations to better illustrate the dynamics of t-SNE, which are particularly interesting in this example.
The scale of the plots from 10 to 100 to 500 iterations rapidly decreases, behavior that was not seen in previous examples.
This aligns with the predictions of \cref{theorem:sphere}, i.e., t-SNE places all points very close together.
We note that after the early exaggeration phase, in this example the points reinflate to a larger scale; while \cref{theorem:sphere} does not apply to the t-SNE output during early exaggeration, because the early exaggeration gradient updates do not correspond to an objective function, one should heuristically expect \cref{theorem:sphere} to apply with an even smaller containing ball when using early exaggeration.
This is because the gradient argument used in the proof relies on showing that a far away point is unstable, namely with stronger attractive forces than repulsive forces, and early exaggeration amplifies the attractive forces.
This suggests $d=20$ is in an intermediate zone: too small for the effects of \cref{theorem:sphere} to be seen without early exaggeration but large enough for the small ball prediction to hold with early exaggeration.

\begin{figure}[htbp]
    \centering
    \begin{subfigure}[b]{0.3\textwidth}
        \centering
        \includegraphics[width=\textwidth]{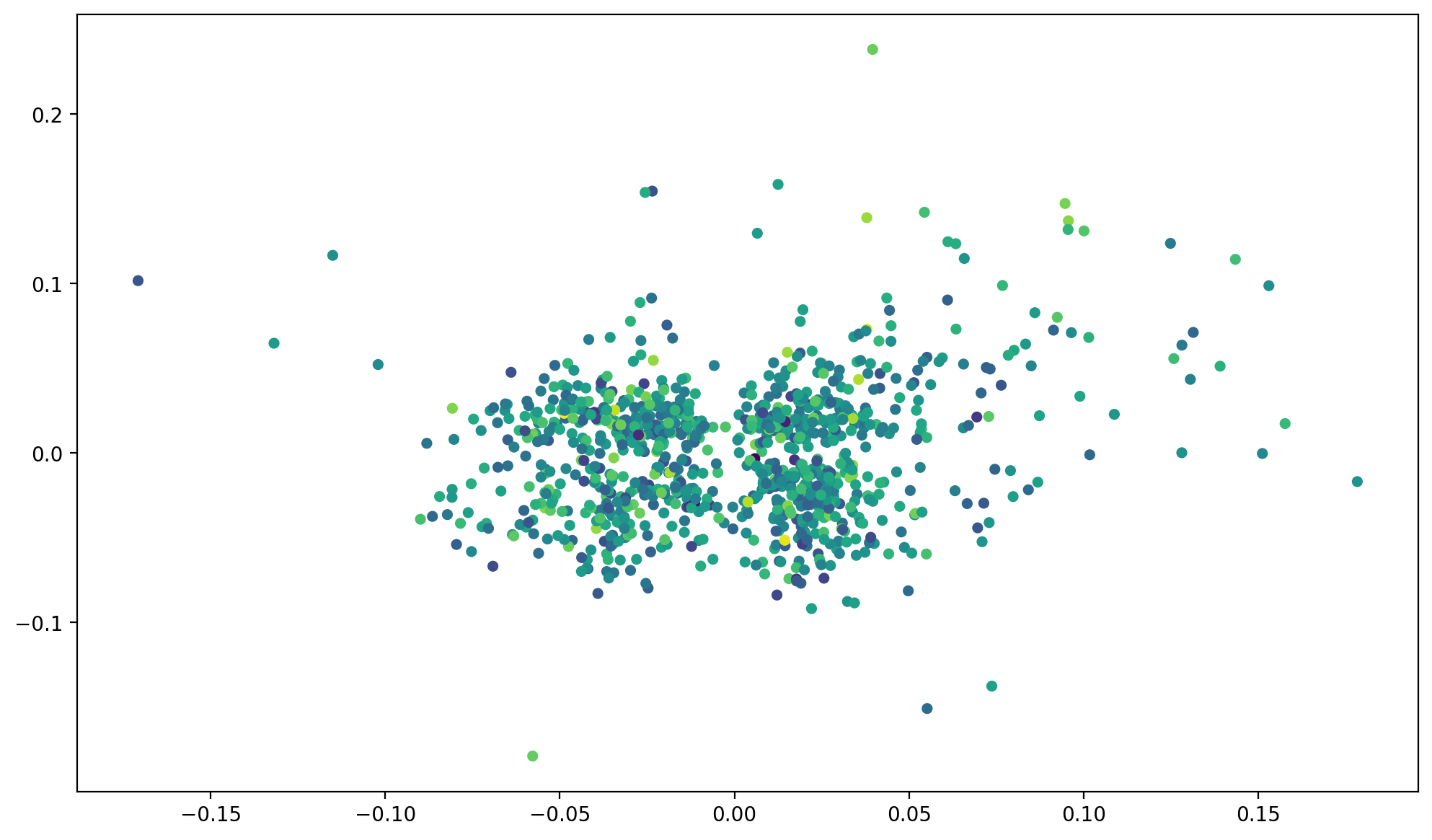}
        \caption{10 iterations}
    \end{subfigure}
    \hfill
    \begin{subfigure}[b]{0.3\textwidth}
        \centering
        \includegraphics[width=\textwidth]{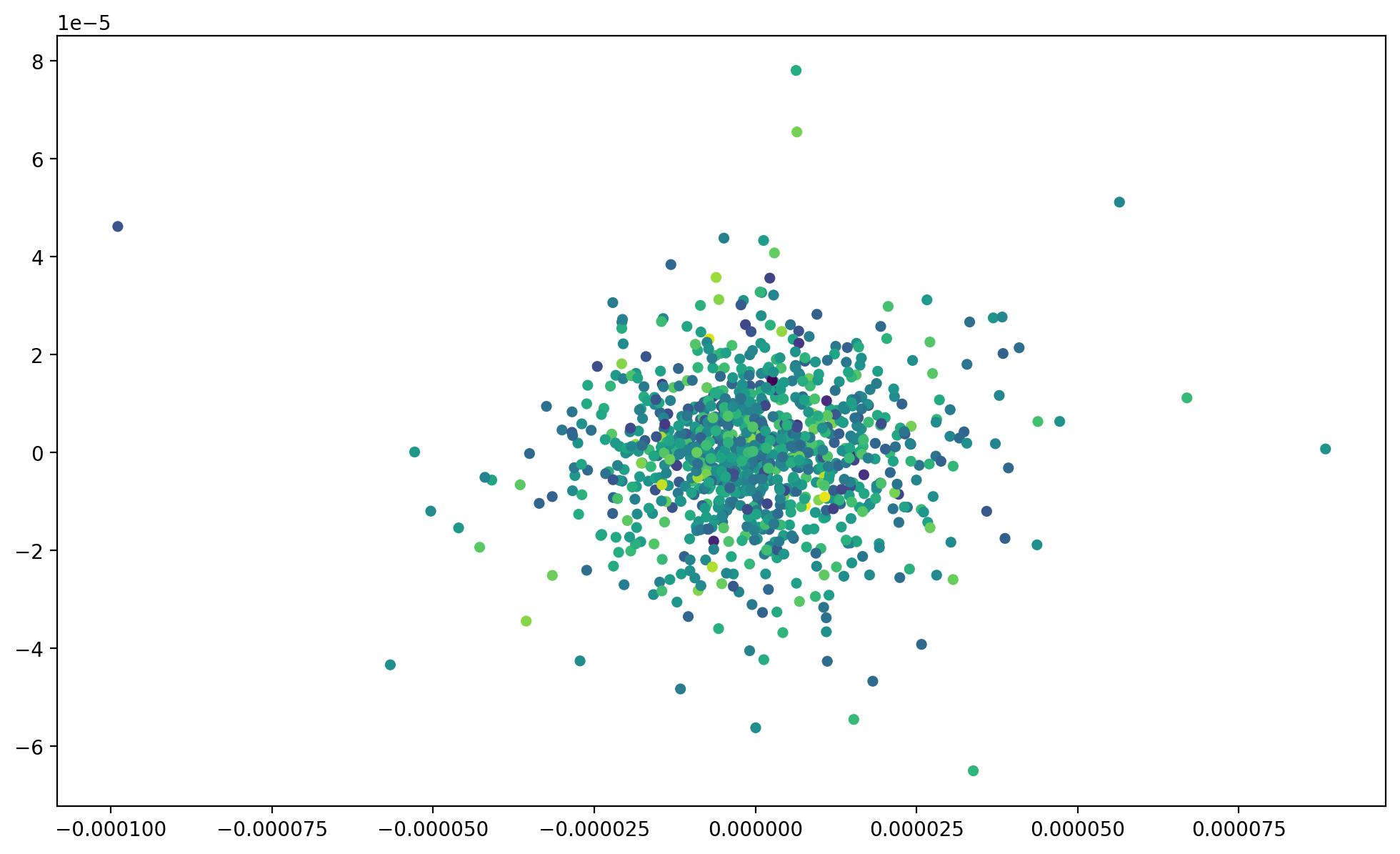}
        \caption{100 iterations}
    \end{subfigure}
    \hfill
    \begin{subfigure}[b]{0.3\textwidth}
        \centering
        \includegraphics[width=\textwidth]{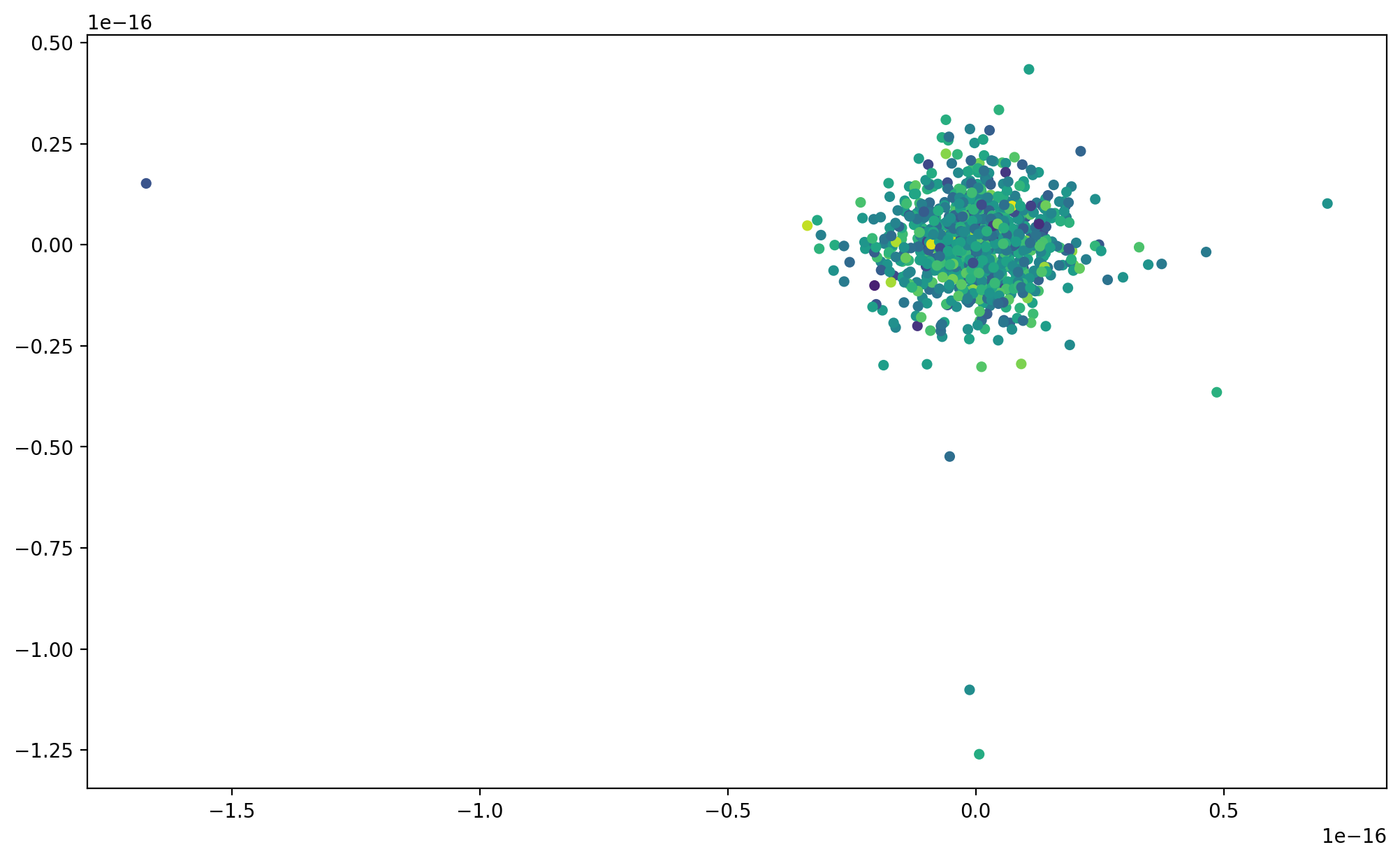}
        \caption{500 iterations}
    \end{subfigure}
    \begin{subfigure}[b]{0.3\textwidth}
        \centering
        \includegraphics[width=\textwidth]{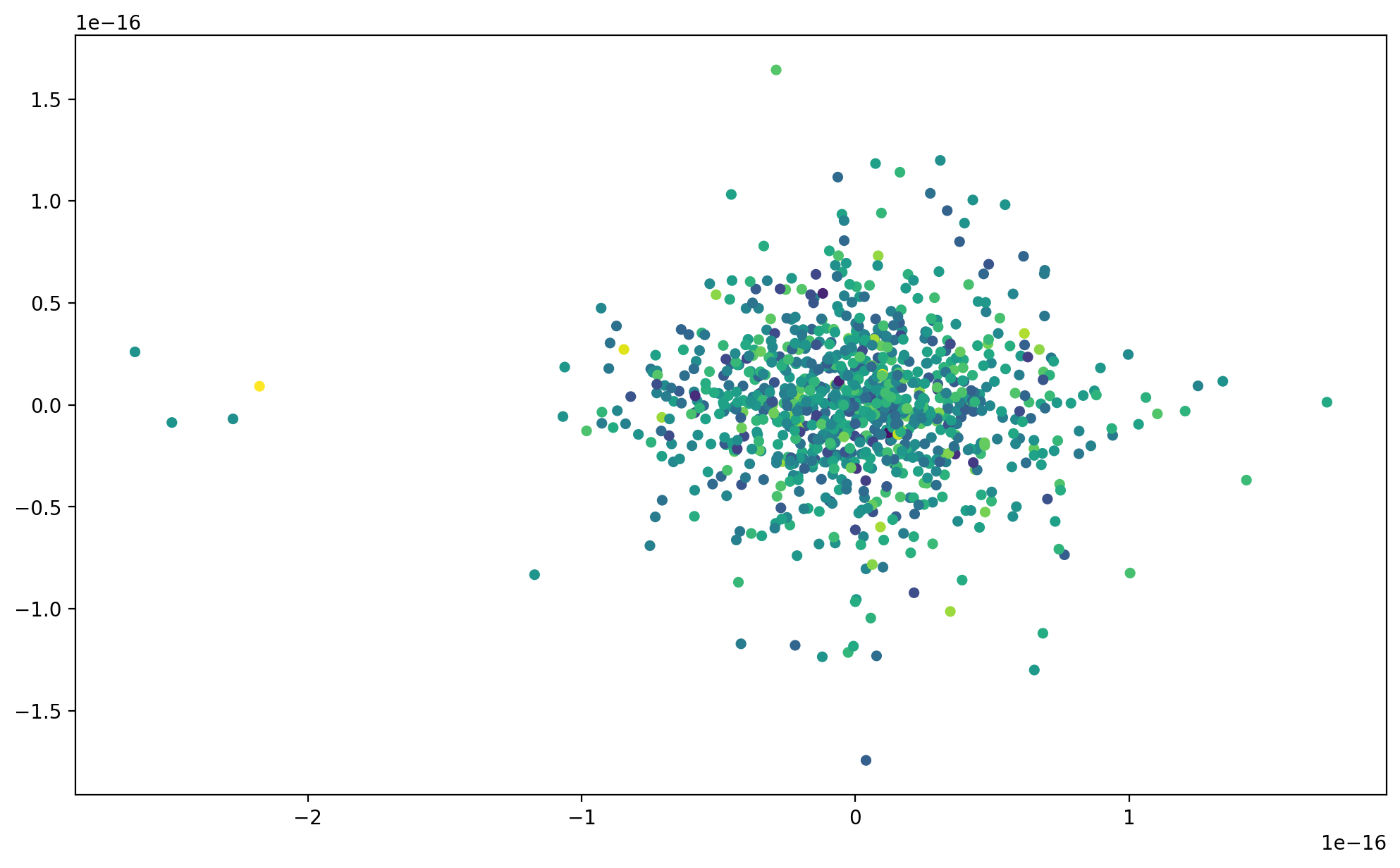}
        \caption{510 iterations}
    \end{subfigure}
    \hfill
    \begin{subfigure}[b]{0.3\textwidth}
        \centering
        \includegraphics[width=\textwidth]{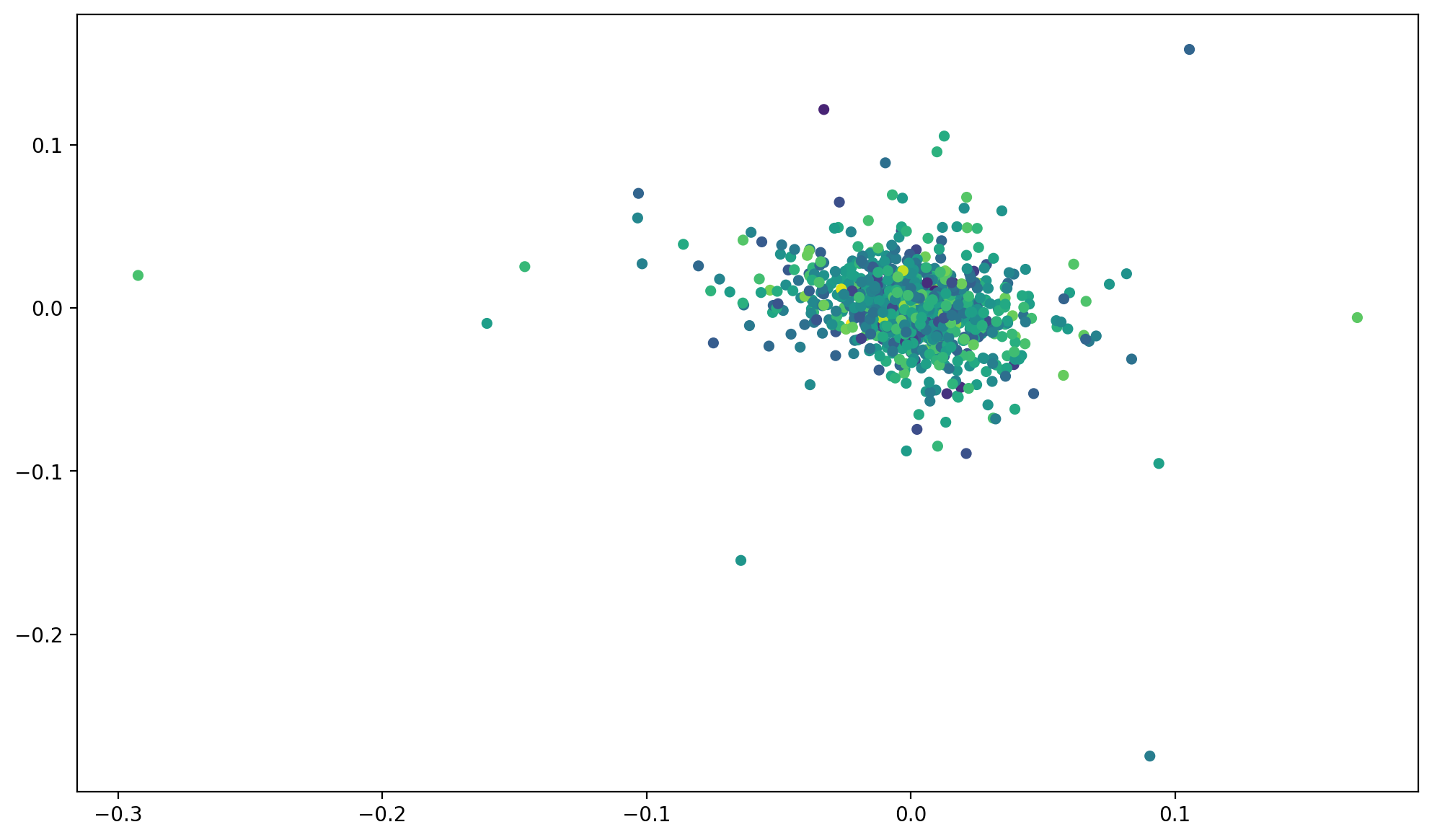}
        \caption{600 iterations}
    \end{subfigure}
    \hfill
    \begin{subfigure}[b]{0.3\textwidth}
        \centering
        \includegraphics[width=\textwidth]{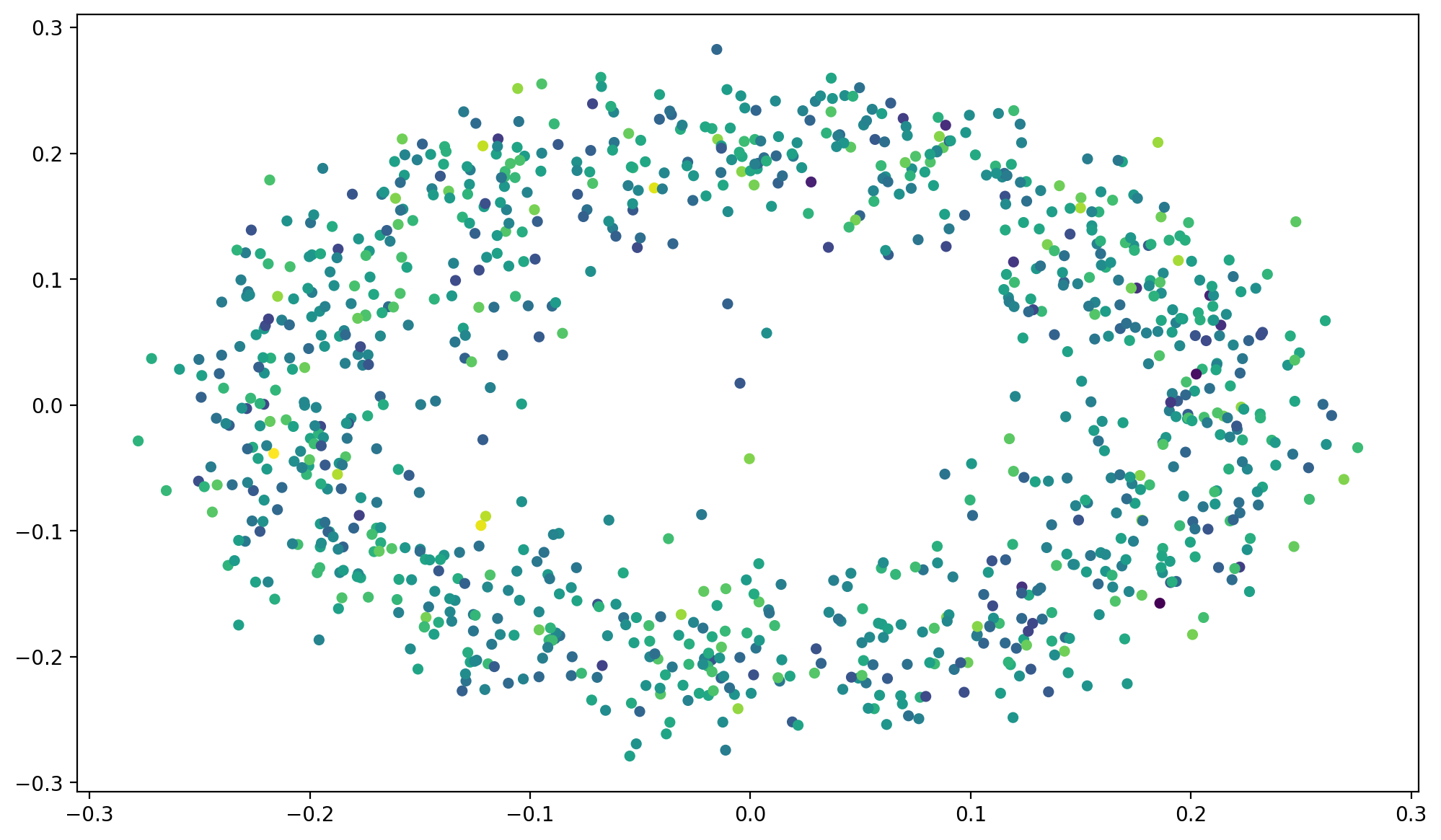}
        \caption{1000 iterations}
    \end{subfigure}
    \caption{t-SNE on points drawn from the unit sphere $\SS^{99999}\subset\R^{100000}$.}
    \label{fig:sphere_100K}
\end{figure}
Recall that the behavior expected from \cref{theorem:sphere} relies on the fact that i.i.d.\ uniformly random points on a high-dimensional sphere are approximately equidistant, thus yielding a near-uniform $P$.
Formally, our result assumes constant bandwidth $\sigma_i$ to make this implication from equidistant points to near-uniform $P$.
However, because we use default parameters in our numerical examples, where in particular perplexity is set to 30, this weakens this implication, as with sufficiently high dimension and more than 30 points, the low perplexity value will cause the bandwidths $\sigma_i$ to shrink to exaggerate the minor differences in the pairwise distances.
Thus, when not assisted by early exaggeration, our numerical examples need much larger dimension in order to see the effects from \cref{theorem:sphere}; \cref{fig:sphere_100K} shows the case $d=100,000$, where we can see the early exaggeration output is on an even smaller scale than in the case $d=20$, and the final output is also on a smaller scale than in the case $d=20$.
\subsection{Split sphere}
We now look at a numerical example of the split sphere model from \cref{section:split_sphere}.
We consider 1000 points drawn i.i.d.\ from the uniform distribution on the unit sphere $\SS^{d-1}$ for $d=20$, conditioned on the first coordinate having magnitude at least $d^{-0.1}$.
The results are plotted in \cref{fig:split_sphere_20}.
\begin{figure}[htbp]
    \centering
    \begin{subfigure}[t]{0.3\textwidth}
        \centering
        \includegraphics[width=\textwidth]{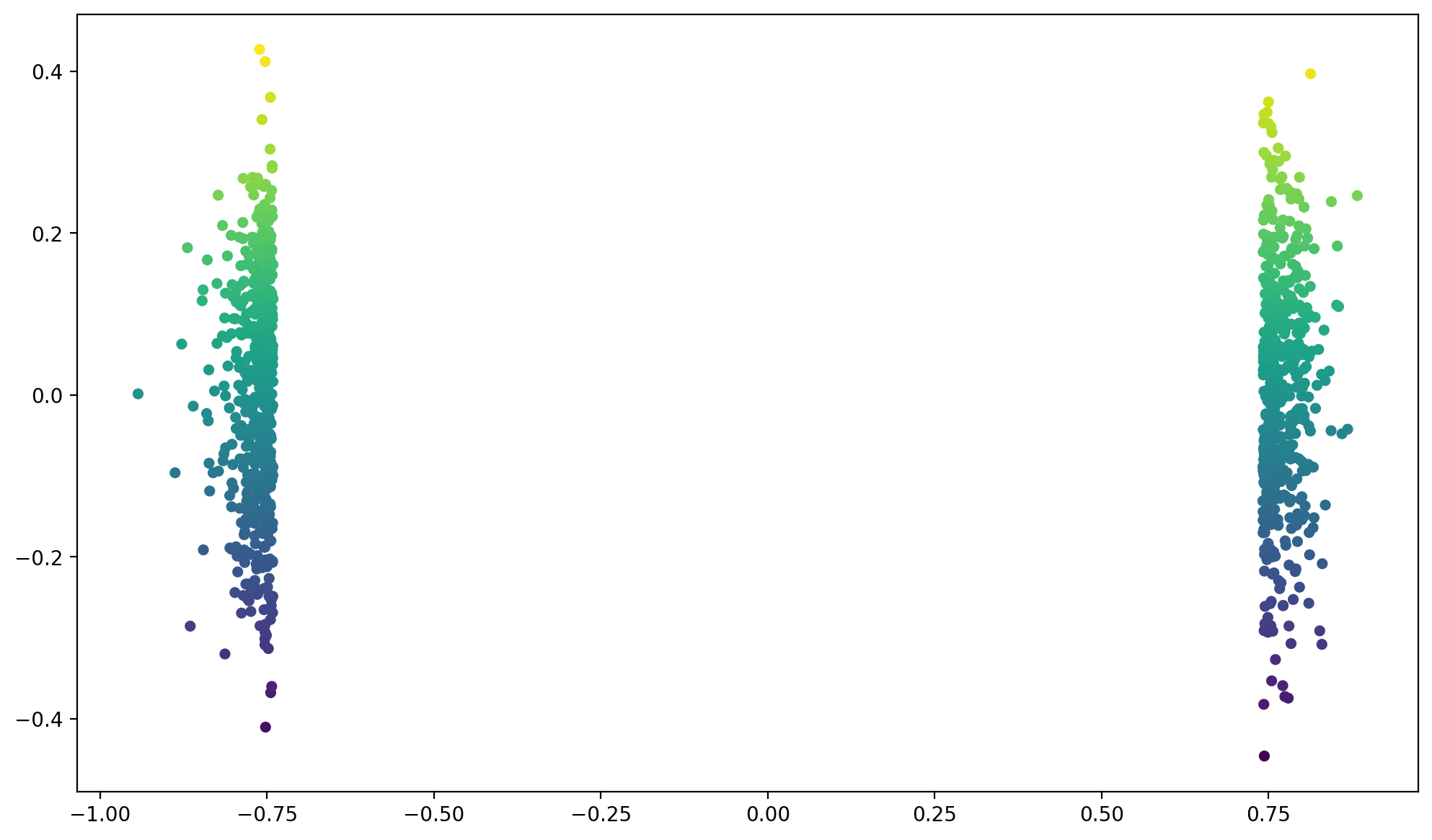}
        \caption{First two coordinates, colored by second coordinate}
    \end{subfigure}
    \hfill
    \begin{subfigure}[t]{0.3\textwidth}
        \centering
        \includegraphics[width=\textwidth]{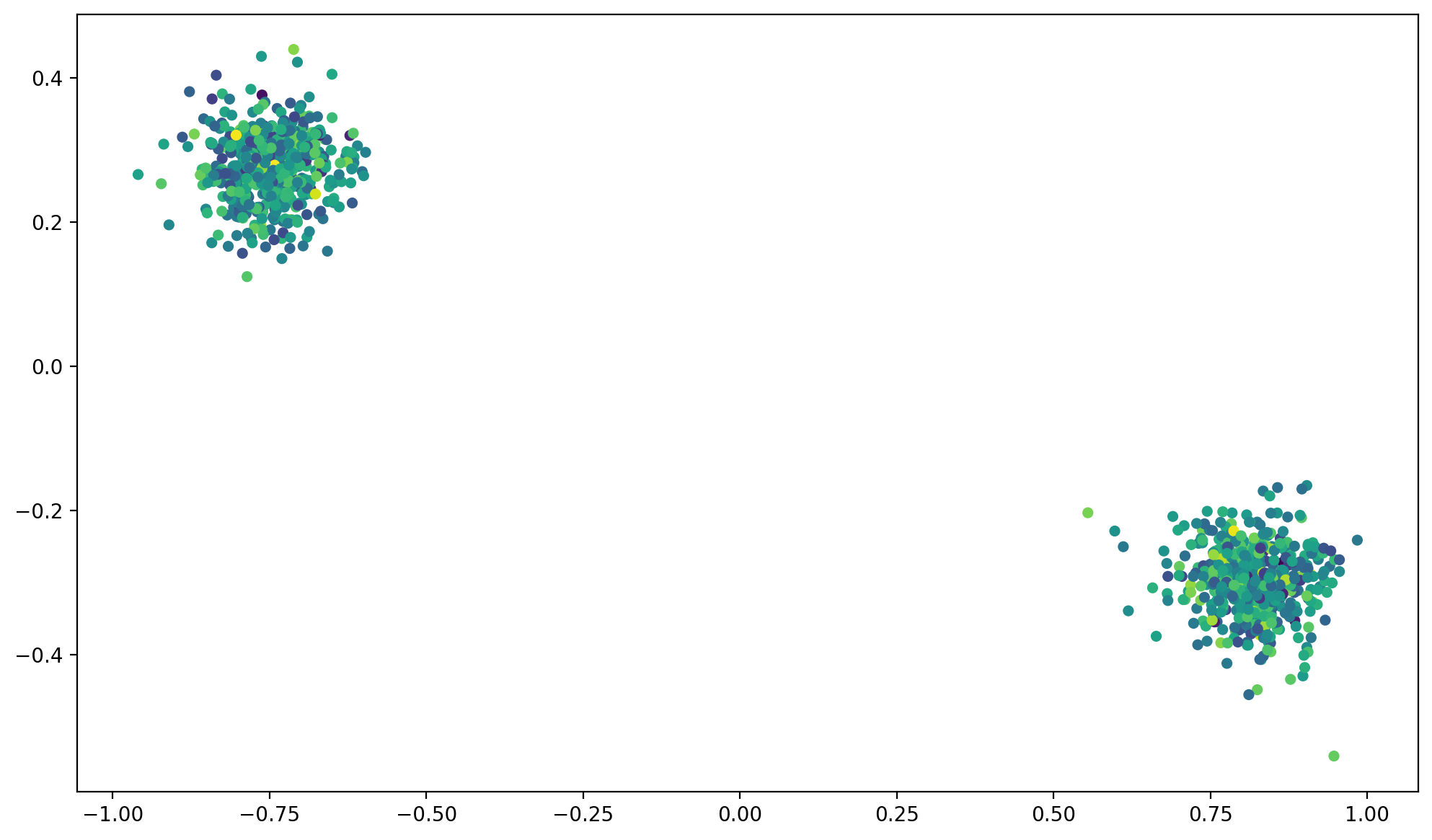}
        \caption{10 iterations}
    \end{subfigure}
    \hfill
    \begin{subfigure}[t]{0.3\textwidth}
        \centering
        \includegraphics[width=\textwidth]{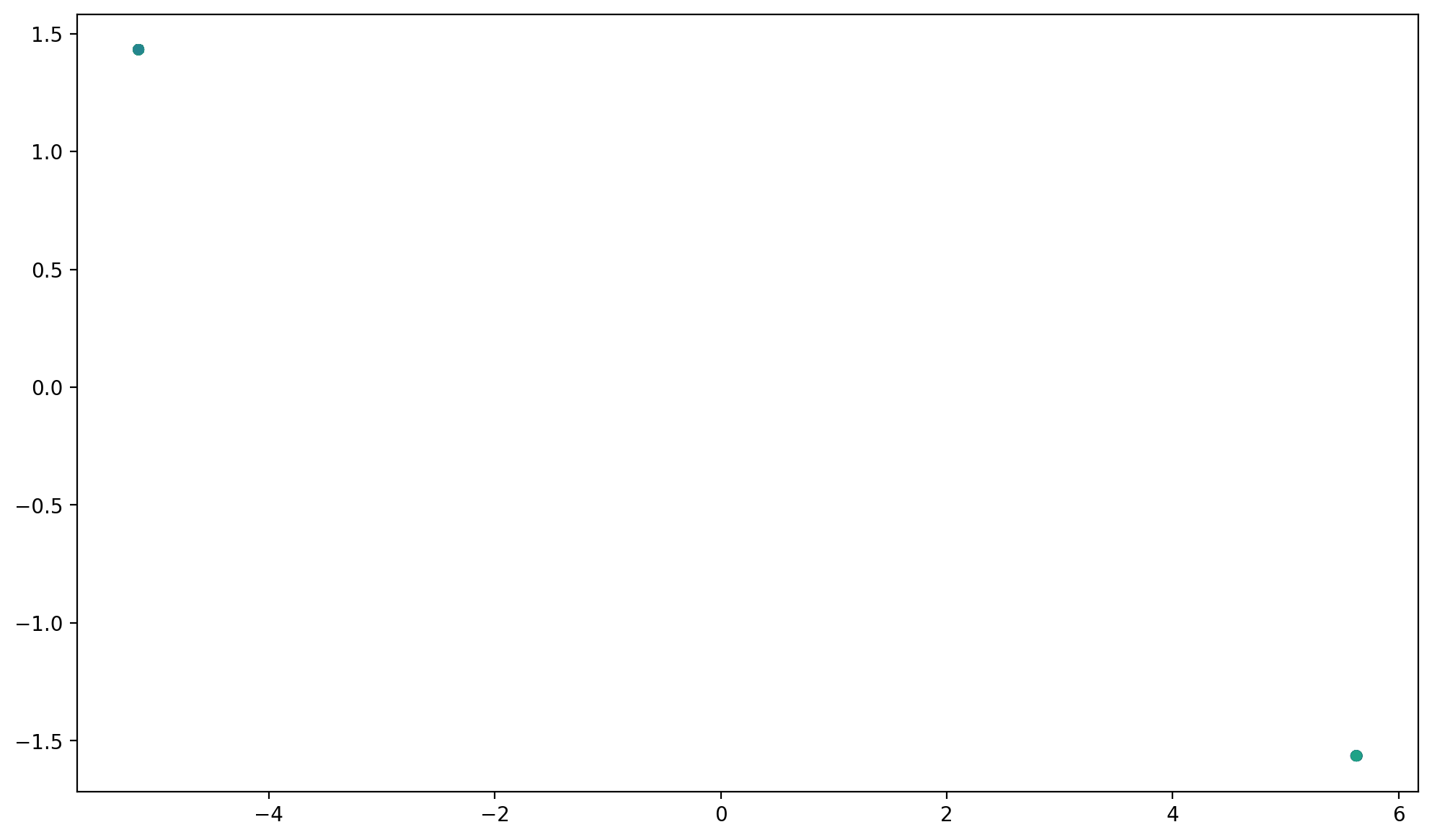}
        \caption{500 iterations}
    \end{subfigure}
    \begin{subfigure}[t]{0.3\textwidth}
        \centering
        \includegraphics[width=\textwidth]{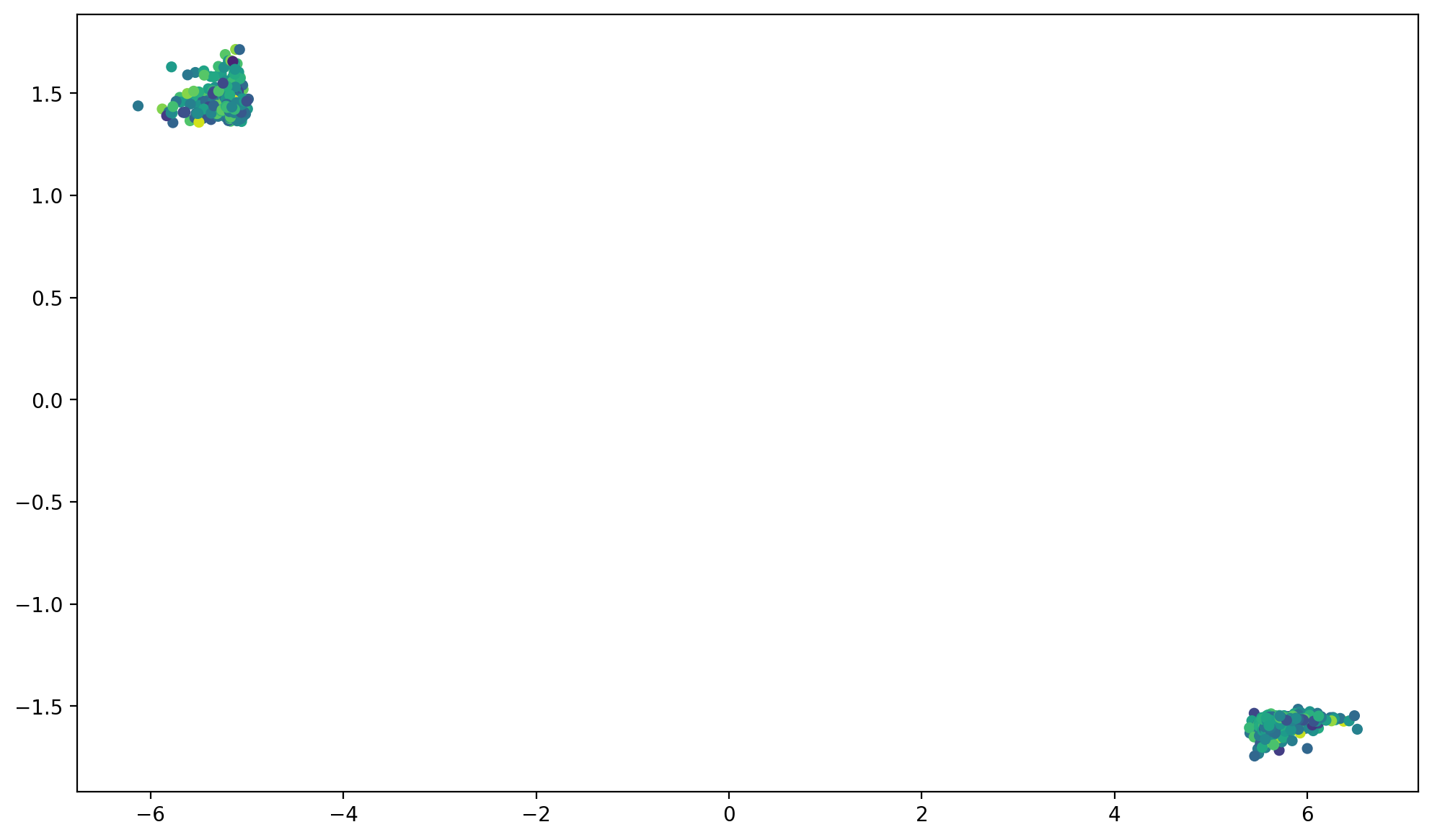}
        \caption{510 iterations}
    \end{subfigure}
    \hfill
    \begin{subfigure}[t]{0.3\textwidth}
        \centering
        \includegraphics[width=\textwidth]{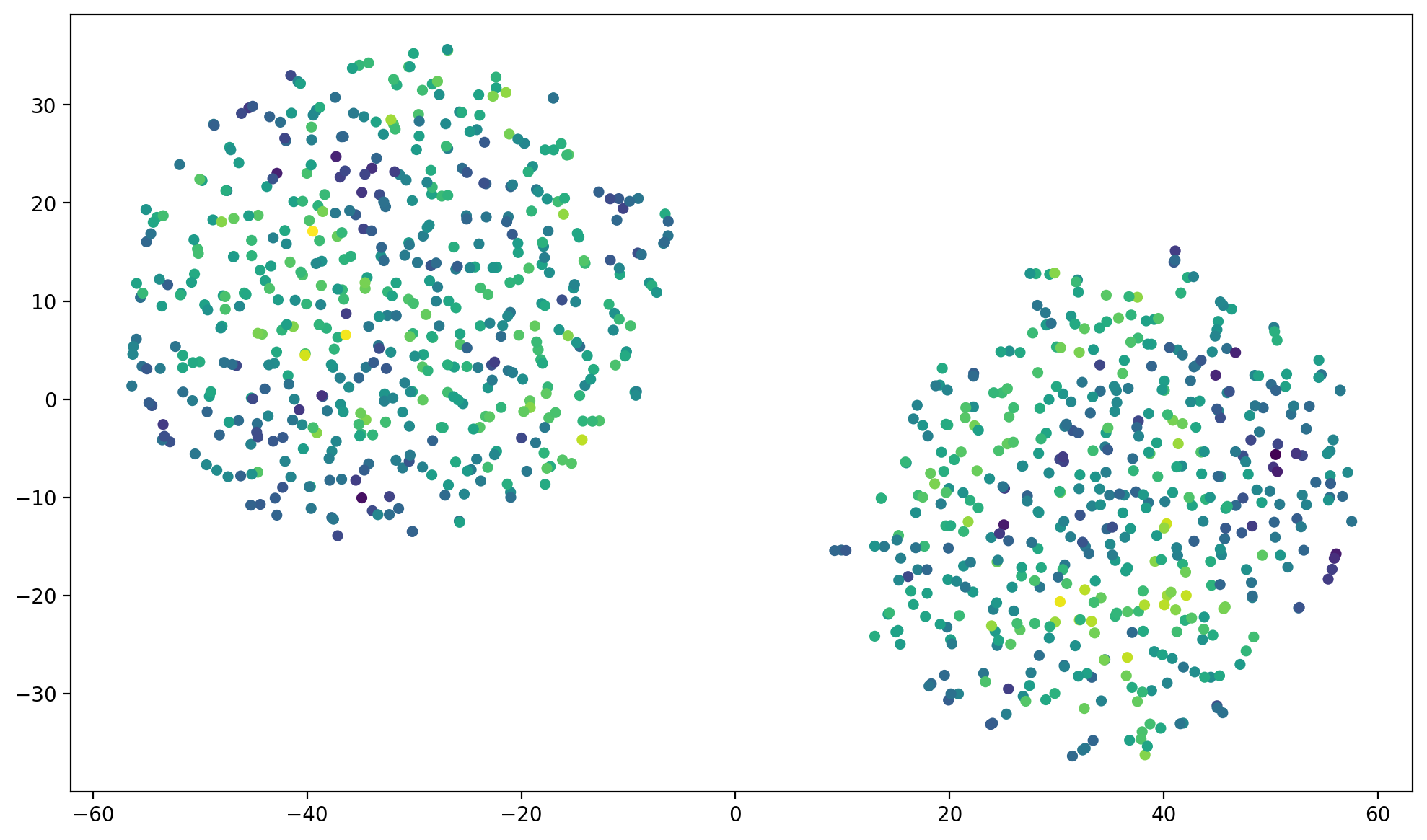}
        \caption{1000 iterations}
    \end{subfigure}
    \hfill
    \begin{subfigure}[t]{0.3\textwidth}
        \centering
        \includegraphics[width=\textwidth]{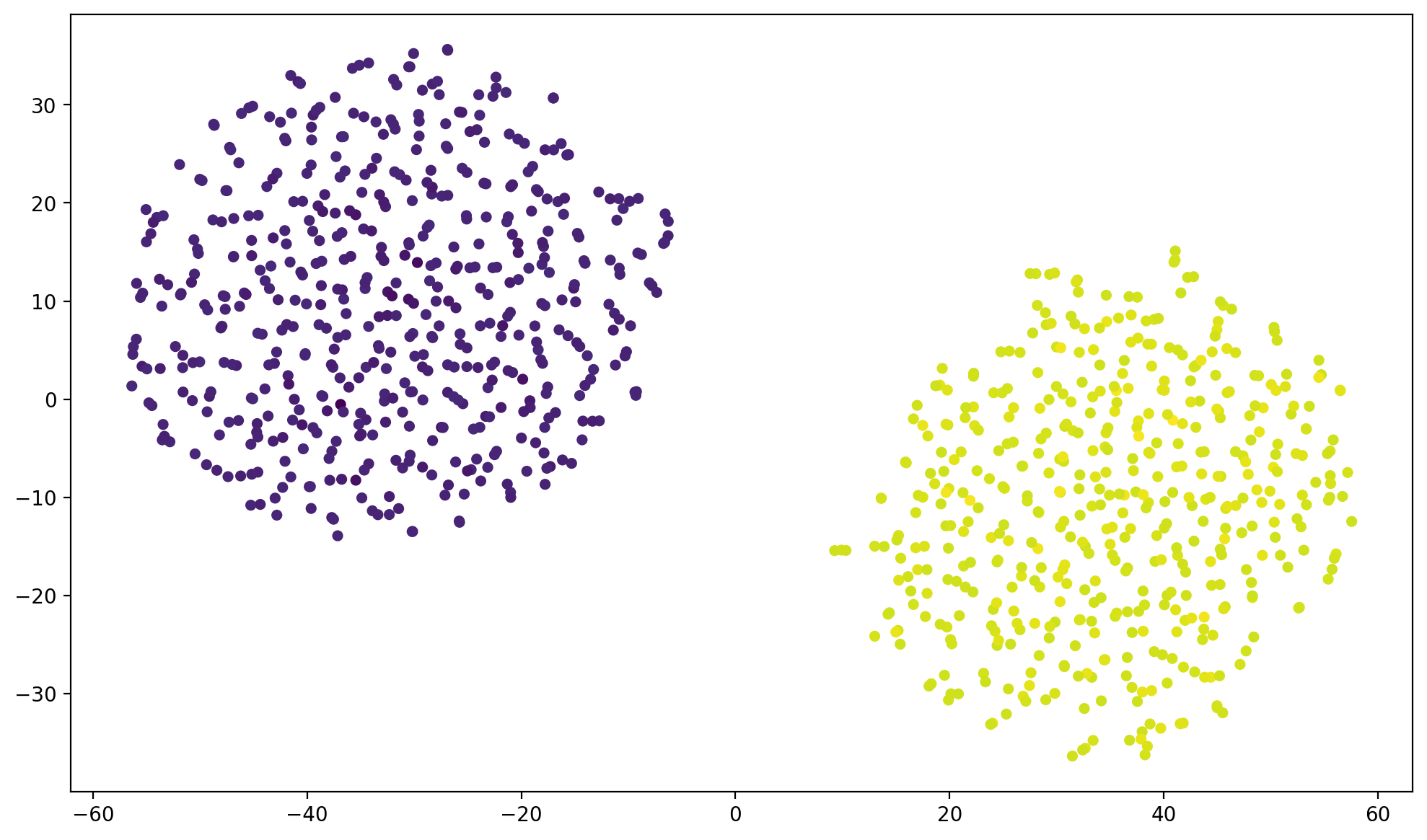}
        \caption{Final output colored by first coordinate}
    \end{subfigure}
    \caption{t-SNE on 1000 points drawn from the unit sphere $\SS^{19}\subset\R^{20}$ conditioned on the first coordinate having magnitude at least $20^{-0.1}\approx0.74$.}
    \label{fig:split_sphere_20}
\end{figure}
We see that t-SNE successfully identifies the two spherical caps and visualizes them as disjoint clusters, but within each cluster fails to capture any structural information, especially any local structure, as demonstrated when we color the points by their second coordinate.
This aligns with our first interpretation from \cref{section:introduction}.
The failure to capture local structure looks essentially similar to the t-SNE output for the entire unit sphere $\SS^{19}$ from \cref{fig:sphere_20}.
Similar to our discussion in \cref{subsec:sphere_examples}, we see that the effects of \cref{theorem:sphere} on each of the two clusters, namely putting all of the high-dimensional points very close together, are stronger with early exaggeration, i.e., after 500 iterations in our setup, than without, i.e., after 1000 iterations.
And similarly, we know from \cref{section:split_sphere} that each of the two clusters, as well as the overall scale of the two clusters combined, will shrink to zero as the dimension increases, as we previously began to see in \cref{fig:sphere_100K}.

\section*{Acknowledgments}
Rupert Li was partially supported by a Hertz Fellowship and a PD Soros Fellowship.
Elchanan Mossel was partially supported by NSF DMS-2031883, Bush Faculty Fellowship ONR-N00014-20-1-2826, and Simons Investigator award (622132).

We wish to thank Tomasz Mrowka for helpful discussions.

\bibliographystyle{amsinit}
\bibliography{ref}

\end{document}